\newif\ifneurips
\newif\ifaistats
\title{Model-free Low-Rank Reinforcement Learning\\ via Leveraged Entry-wise Matrix Estimation}
\author{%
  Stefan Stojanovic \\
  KTH, Stockholm, Sweden\\
  \texttt{stesto@kth.se} \\
  \And
  Yassir Jedra \\
  MIT, Cambridge, USA \\
  \texttt{jedra@mit.edu} \\
  \AND
  Alexandre Proutiere \\
  KTH, Digital Futures, Stockholm, Sweden\\
  \texttt{alepro@kth.se} \\
}
\begin{document}

\maketitle

\begin{abstract}
We consider the problem of learning an $\varepsilon$-optimal policy in controlled dynamical systems with low-rank latent structure. For this problem, we present \lorapi\ (Low-Rank Policy Iteration), a model-free learning algorithm alternating between policy improvement and policy evaluation steps. In the latter, the algorithm estimates the low-rank matrix corresponding to the (state, action) value function of the current policy using the following two-phase procedure. The entries of the matrix are first sampled uniformly at random to estimate, via a spectral method, the {\it leverage scores} of its rows and columns. These scores are then used to extract a few important rows and columns whose entries are further sampled. The algorithm exploits these new samples to  complete the matrix estimation using a CUR-like method. For this leveraged matrix estimation procedure, we establish entry-wise guarantees that remarkably, do not depend on the coherence of the matrix but only on its spikiness. These guarantees imply that \lorapi\ learns an $\varepsilon$-optimal policy using $\widetilde{O}({S+A\over \mathrm{poly}(1-\gamma)\varepsilon^2})$ samples where $S$ (resp. $A$) denotes the number of states (resp. actions) and $\gamma$ the discount factor. Our algorithm achieves this order-optimal (in $S$, $A$ and $\varepsilon$) sample complexity under milder conditions than those assumed in previously proposed approaches.  
\end{abstract}

\section{Introduction}\label{sec:intro}

Reinforcement Learning (RL) methods when applied to dynamical systems with large state and action spaces suffer from the curse of dimensionality. For example, learning an $\varepsilon$-optimal policy in tabular discounted Markov Decision Processes (MDPs) with $S$ states and $A$ actions requires a number of samples scaling at least as ${SA\over (1-\gamma)^3\varepsilon^2}$ \citep{gheshlaghi2013minimax,sidford2018}. Fortunately, many real-world systems exhibit a latent structure that if learnt and exploited could drastically improve the statistical efficiency of RL methods \citep{laskin20a, stooke21a}. In this paper, we are interested in developing methods to leverage low-rank latent structures. These structures have attracted a lot of attention recently, see e.g. \citep{jiang2017,dann2018,DuKJAD19a,misra2020kinematic,foster2021,zhang2022BMDP, sun2019,agarwal2020flambe,Modi21, uehara2022, ren2022spectral,shah2020sample,stojanovic2024spectral,sam2023overcoming}. Here, we consider a structure where the (state, action) value functions of policies, viewed as $S\times A$ matrices, are low-rank. This structure has been empirically motivated and studied in \citep{shah2020sample, sam2023overcoming,yang2020Harnessing,rozada2024}. The hope is that when exploiting it optimally, learning an $\varepsilon$-optimal policy would only require $O({S+A\over (1-\gamma)^3\varepsilon^2})$ samples. Such an improvement would also imply significant statistical gains in MDPs with continuous state and action spaces. If these spaces are of dimensions $d_1$ and $d_2$, under natural smoothness conditions and using an appropriate discretization \citep{shah2020sample}, the sample complexity would indeed be reduced from ${1\over \varepsilon^{d_1+d_2+2}}$ (without structure) to ${1\over \varepsilon^{\max(d_1,d_2)+2}}$.

In this paper, we present \lorapi\ (Low Rank Policy Iteration), a model-free algorithm that learns and exploits an initially hidden low-rank structure in MDPs. Unlike existing algorithms, \lorapi\ does not require any prior information on the structure. Yet, the algorithm offers the promised statistical gains: its sample complexity essentially exhibits an order-optimal dependence in $S$, $A$ and $\varepsilon$ (i.e., ${S+A\over \varepsilon^2}$).

{\bf Contributions.} Our algorithm \lorapi\ relies on approximate policy iteration  \cite{bertsekas1996neuro}. As such, it alternates between policy evaluation and policy improvement steps. The design and performance analysis of these two steps constitute our main contributions. 

{\it 1. Leveraged matrix estimation with entry-wise guarantees.} \lorapi\ sequentially updates a candidate policy whose (state, action) value function has to be estimated. This function can be seen as an $S\times A$ matrix that we consider to be low rank. The policy evaluation step then boils down to a novel low-rank matrix estimation procedure. We have two main constraints for this procedure. (i) To be sample efficient, the matrix should be estimated from  noisy observations of only a few of its entries. (ii) For RL purposes (when integrated to \lorapi), the procedure should offer entry-wise performance guarantees. We present \lme\ (Leveraged Matrix Estimation), a low-rank matrix estimation algorithm that meets these constraints. \lme\ does not require knowledge of a priori unknown parameters of the matrix (such as its rank, condition number, spikiness, or coherence), and it is the first algorithm enjoying non-vacuous entry-wise guarantees even for coherent matrices. 
    \begin{table}[h!]
    \centering
    \small 
    \setlength{\tabcolsep}{4pt} 
    \resizebox{\columnwidth}{!}{%
    \begin{tabular}{||c c c c c||} 
     \hline
     \textbf{Method} & \textbf{Err. Guarantees} & \textbf{Sampling} &  \textbf{Assumption} & \textbf{Complexity} \\ 
     \hline\hline
     \lme\  (ours) & entry-wise & adaptive  & bounded spikiness & $\alpha^2 (S+A)/\epsilon^2$ \\ 
     Algorithm 1 \cite{shah2020sample} & entry-wise & apriori fixed anchors & anchors apriori known & $\alpha^2 (S+A)/\epsilon^2$ \\ 
     LR-EVI (Thm 9 \cite{sam2023overcoming}) & entry-wise & unif. anchors  & incoherence &  $\mu^2 \alpha^2 (S+A)/\epsilon^2$ \\ 
     NNM \cite{chen2020noisy} (Thm 21 \cite{sam2023overcoming}) & entry-wise & unif. anchors & incoherence & $\mu^2 \alpha^2 (S+A)/\epsilon^2$ \\ 
     Two-phase MC \cite{chen2015completing} & exact recovery & adaptive  & noiseless & not applicable \\ [.5ex] 
     \hline
    \end{tabular}%
    }
    \vspace{0.2cm}
    \caption{Comparison of methods with entry-wise guarantees. For brevity, the factors $(1-\gamma)^{-1}, \kappa$ and $d$ are omitted. NNM: nuclear norm minimization, MC: matrix completion.}
    \label{table_methods}
    \end{table} 

\vspace{-0.75cm}
More precisely, \lme\ guarantees an entry-wise estimation error within $\varepsilon$ using only $\widetilde{O}\left( \kappa^4 \alpha^2 \frac{ (S+A) + \alpha^2}{(1-\gamma)^3 \varepsilon^2} \right)$ samples, where $\alpha$ and $\kappa$ denote the spikiness and the condition number of the matrix, respectively. Note that in particular, this sample complexity does not depend on the coherence of the matrix. Its  dependence in $S$, $A$ and $\varepsilon$ cannot be improved. To reach this level of performance, \lme\ relies on an adaptive sampling strategy. It first estimates, via a spectral method, the so-called {\it leverage scores} of the matrix. These scores quantify the amount of information about the matrix available in the different rows and columns. The algorithm then exploits the leverage scores to adapt its strategy and in turn, drive the sampling process towards more informative entries.

{\it 2. Design and sample complexity of \lorapi.} Our RL algorithm \lorapi\ is a policy iteration algorithm that relies on \lme\ to perform policy evaluation steps. The algorithm inherits the advantages of \lme. In contrast to existing algorithms, it is parameter-free and its performance can be analyzed and guaranteed under mild assumptions on the (state, actions) value functions. In particular, the corresponding low-rank matrices do not need to be incoherent. We establish that \lorapi\ learns an $\varepsilon$-optimal policy using $\widetilde{O}\left( \kappa^4 \alpha^2 \frac{ (S+A) + \alpha^2}{(1-\gamma)^8 \varepsilon^2} \right)$ samples, where $\alpha$ and $\kappa$ are upper bounds on the spikiness and the condition number of the (state, action) value functions.   

{\it 3. Numerical experiments.} We illustrate numerically the performance of our algorithms, \lme\ and \lorapi, using synthetically generated low-rank MDPs. The experiments are presented in Appendix~\ref{app:num} due to space constraints.

\paragraph{Notation.} We denote the Euclidean norm of a vector $x$ by $\Vert x \Vert_2$. Let $M$ be an $m \times n$ matrix. We 
we denote its $i$-th row (resp. $j$-th column) by $M_{i,:}$ (resp. by $M_{:, j}$). We denote its operator norm by  $\Vert M \Vert_{\op}$, it Frobenius norm by $\Vert M \Vert_{\F}$, its infinity norm by $\Vert M \Vert_\infty = \max_{i \in [m],j\in [n]} \vert M_{ij}\vert$, and its two-to-infinity norm by $\Vert M \Vert_{2 \to \infty} = \max_{i \in m} \Vert M_{i,:}\Vert_2 $. We denote by $M^\dagger$ the Moore-Penrose inverse of $M$. For given subsets $\cI \in [m]$, $\cJ \in [n]$, we denote by $M_{\cI, \cJ}$ the sub-matrix whose entries are $\lbrace M_{ij}: (i,j) \in \cI \times \cJ \rbrace$. Finally, we use $a\wedge b = \min (a, b)$ and $a\vee b = \max (a, b)$. 

\section{Related Work}\label{sec:related}

{\bf Low-rank MDPs.} MDPs with low-rank latent structure have been extensively studied recently. We may categorize these studies according to the type of the underlying low-rank structure and to the nature of the algorithms used to learn this structure. 

The most studied low-rank structure concerns MDPs whose transition kernels and the expected reward functions are low-rank. For instance, it is assumed that the transition probabilities can be written as $p(s'|s,a) = \phi(s,a)^\top\mu(s')$, where $\phi(s,a)$ and $\mu(s')$ are $d$-dimensional feature maps \citep{jiang2017,dann2018,DuKJAD19a,misra2020kinematic,foster2021,zhang2022BMDP, sun2019,agarwal2020flambe,Modi21, uehara2022, ren2022spectral}. These work additionally assume that the feature map $\phi$ (and similarly for $\mu$) belongs to a rich function class ${\cal H}$. In this setting, the typical upper bounds derived for the sample complexity of identifying an $\varepsilon$-optimal policy scale as $\textrm{poly}(A, (1-\gamma)^{-1}){\log|{\cal H}|\over \varepsilon^2}$. When no restrictions are imposed on the class ${\cal H}$, one can find a low-rank structure such that $\log|{\cal H}|$ scales as the number $S$ of states \cite{jedra2023nearly}. In this case, the aforementioned upper bounds are the same those for MDPs without structure. We also note that most algorithms using this framework rely on strong computational oracles (e.g., empirical risk minimizers, maximum likelihood estimators), see \cite{kane2022computational, golowich2022learning, zhang2022making} for detailed discussions. In this paper, we do not limit our analysis to low-rank structures based on a given restricted class of functions, and our algorithms do not rely on any kind of oracle. 

The low-rank structure we consider is similar to that in \cite{shah2020sample, sam2023overcoming} and just assumes that the (state, action) value functions are low-rank. Actually, \citep{shah2020sample} considers the case where only the optimal Q-function is low-rank, say of rank $d$. As shown in \cite{shah2020sample}, such a structure naturally arises when discretizing smooth MDPs with continuous state and action spaces. In both papers \cite{shah2020sample, sam2023overcoming}, the authors devise algorithms with a minimax-optimal sample complexity to identify an $\varepsilon$-optimal policy roughly scaling as $(S+A)/\varepsilon^2$. But the analysis presented in \cite{shah2020sample} suffers from the following important limitations. 1. First, it is assumed that the learner is aware of a set $\Ianchors$ (resp. $\Janchors$) of so-called anchors states (resp. actions), such that the rank of the matrix $Q_{\Ianchors,\Janchors}:=(Q(s,a))_{(s,a)\in \Ianchors \times \Janchors}$ is the same as that of the entire matrix $Q$. Such anchors are however initially unknown (since $Q$ is unknown). Importantly, the proposed RL algorithms rely on a low-rank matrix estimation procedure whose performance strongly depends on the smallest singular value $\sigma_d(Q_{\Ianchors,\Janchors})$ of $Q_{\Ianchors,\Janchors}$. The authors circumvent this difficulty by actually parametrizing their algorithms using $\sigma_d(Q_{\Ianchors,\Janchors})$. But again, the latter is unknown, and it remains unclear how one can avoid this issue. 2. The second  limitation is that the analysis is valid for small values of the discount factor $\gamma$ (the authors need to impose an upper bound on $\gamma/\sigma_d(Q_{\Ianchors,\Janchors})$). When $\sigma_d(Q_{\Ianchors,\Janchors})$ is small, the analysis is limited to very short horizons. Note that in addition, \citep{shah2020sample} assumes that the collected rewards are deterministic, which together with the short horizon issue, greatly simplifies the learning problem. 

To address the first limitation, the authors of \citep{sam2023overcoming} propose to sample rows and columns uniformly at random to get anchors. This solution requires to sample at least $d\mu^2$ states and actions (Lemma 10 in \cite{sam2023overcoming}) where $\mu$ is the (unknown) coherence of the matrix to be estimated. Hence this essentially amounts to sampling almost the whole matrix for coherent matrices. The authors of \citep{sam2023overcoming} also propose a solution to the second limitation, but at the expense of imposing additional restrictive conditions. In this paper, we address both limitations and devise RL algorithms that rely on a new low-rank matrix estimation procedure that works without imposing the incoherence of the matrix and that does not require knowledge on a priori unknown parameters of this matrix. 

{\bf Low-rank matrix estimation with entry-wise guarantees.} Until recently, most results on low-rank matrix recovery concerned guarantees with respect to the spectral or Frobenius norms, see e.g. \citep{davenport2016} and references therein. Over the past few years, methods to derive entry-wise guarantees have been developed. These include spectral approaches \cite{abbe2020entrywise, chen2021spectral, stojanovic2024spectral}, nuclear-norm penalization and convex optimization techniques \cite{chen2020noisy}, CUR-based (or Nyström-like) methods \cite{shah2020sample, agarwal2023causal, sam2023overcoming}. 

The aforementioned literature provides guarantees not for all low-rank matrices, but for those typically enjoying additional structural properties such as incoherence. Relaxing the incoherence assumption is not easy, but can be achieved using adaptive sampling \cite{krishnamurthy2013low,chen2015completing,wang2018provably}. As far as we are aware, all results applicable to somewhat coherent matrices provide guarantees with respect to the spectral or Frobenius norms. In this paper, we develop a first adaptive matrix estimation method with provable entry-wise guarantees, valid for matrices with well-defined spikiness but not necessarily incoherent. Refer to \citep{mackey2011distributed, negahban2012restricted} and to \textsection\ref{subsec:coherence} for a detailed discussion about the notion of spikiness.

\section{Preliminaries}\label{sec:model}

\subsection{Low-rank Markov Decision Processes} 

We consider a discounted MDP with finite state and action spaces ${\cal S}$ and ${\cal A}$. These spaces are of cardinality $S$ and $A$, respectively. The dynamics are described by the transition kernel $p$ where $p(s'|s,a)$ denotes the probability to move to state $s'$ given current state $s$ and that the action $a$ is selected. The collected rewards are random but bounded by $r_{\max}$, and $r(s,a)$ is the expected reward collected when action $a$ is selected in state $s$. A deterministic Markovian policy $\pi$ is described by a mapping from ${\cal S}$ to ${\cal A}$. We denote by $V^\pi$ the state value function of $\pi$: for all $s\in {\cal S}$, $V^\pi(s) = \mathbb{E}[\sum_{t=0}^\infty\gamma^t r(s_t^\pi,a_t^\pi) | s_0^\pi=s]$, where $s_t^\pi$ and $a_t^\pi$ are, at time $t$, the state and the action selected under $\pi$. Similarly, the (state, action) value function of $\pi$ is defined by: for all $(s,a)\in {\cal S}\times {\cal A}$, $Q^\pi(s,a)=r(s,a)+\gamma \sum_{s'}p(s'|s,a)V^\pi(s')$. $Q^\pi$ can be seen as a $S\times A$ matrix, referred to as the {\it value matrix of $\pi$} in the remainder of the paper. Let $\kappa_\pi$ denote the condition number of $Q^\pi$. Finally, let $V^\star$ be the value function of the MDP (the value function of the optimal policy). 

The objective is to learn an $\varepsilon$-optimal policy by interacting with the MDP. Such a policy satisfies: for all $s\in {\cal S}$, $V^\pi(s) \ge V^\star(s)-\varepsilon$. Without any assumption on the structure of the MDP, to identify such a policy, the learner needs to gather, even with a generative model, a number of samples\footnote{Here a sample refers to an experience $(s,a,r,s')$, the observation of the collected reward $r$ and the next state $s'$, starting with a given (state, action) pair $(s,a)$. Under a generative model, the learner can adapt the choice of $(s,a)$ for the next observed experience without any constraint.} that scales as ${SA\over \varepsilon^2(1-\gamma)^{3}}$ \cite{gheshlaghi2013minimax, sidford2018}. The hope is that exploiting an a-priori known structure in the MDP may considerably accelerate the learning process. In this paper, we focus on a low-rank latent structure. Formally, we define:

\begin{definition}[Rank of a policy, rank of the MDP] The rank $d_\pi$ of a deterministic policy $\pi$ is the rank of its value matrix $Q^\pi$. The rank of an MDP is then defined as $d=\max_{\pi}d_\pi$, where the maximum is over all deterministic policies.  
\end{definition}

Throughout the paper, we assume that the MDP is low-rank: its rank $d$ satisfies  $d\ll (S+A)$.  This assumption is merely made to simplify the exposition of our results and proof techniques. As we shall argue in Appendix \ref{subsec:approximately_low_rank}, our findings can naturally be extended to MDPs that are only low-rank in an approximate and well-precised sense.

\subsection{Matrix estimation: coherence and spikiness}\label{subsec:coherence}

Our learning algorithm relies on the approximate policy iteration method, and in particular, in each iteration, it needs to estimate the low-rank value matrix of the current policy. To be sample efficient, the algorithm will estimate the matrix from the noisy observations of a few of its entries. Recovering a low-rank matrix from a few of its entries is not always possible (see e.g. \cite{davenport2016} for a survey), and conditions on the degree to which information about a single entry is spread out across a matrix must be imposed. Examples of such conditions pertain to the  {\it coherence} \cite{candes2010noise, recht2011simpler} or the {\it spikiness} \cite{negahban2012restricted} of the matrix. 

\textbf{Matrix coherence.} Let $Q$ be a rank-$d$ $S\times A$ matrix with SVD $U\Sigma W^\top$. The coherence of $Q$ is defined as $\mu(Q) = \max\{\sqrt{S/d}\|U\|_{2\to\infty},\sqrt{A/d}\|W\|_{2\to\infty}\}$. $Q$ is $\mu$-coherent if $\mu(Q)\le \mu$. 
Low coherence means that the energy of $U$ and $W$ are not concentrated around a few rows and columns.

\textbf{Matrix spikiness.} The spikiness of $Q$ is defined as $\alpha (Q) = \sqrt{SA}\Vert Q \Vert_\infty/\Vert Q \Vert_{\F} \in [1,\sqrt{SA}]$. $Q$ is $\alpha$-spiky if $\alpha (Q)\le \alpha$. 
A matrix has low spikiness if the amplitude of its maximal entry is not much larger than the average amplitude of its entries, in which case, it is intuitively easier to estimate. 

Most existing guarantees for low-rank matrix estimation are expressed through the spectral or Frobenius norm of the error matrix. For this type of guarantees, the estimation error scales polynomially either with the matrix coherence or with its spikiness  \cite{davenport2016, negahban2012restricted}. 
The matrix spikiness was introduced in the matrix completion literature \cite{negahban2012restricted} to obtain guarantees under less restrictive conditions than the incoherence conditions imposed in previous work. Indeed, there are matrices with bounded spikiness but high coherence (say close to $\sqrt{S/d}$, in which case the aforementioned coherence-based guarantees are vacuous). In contrast, bounded incoherence provides an upper bound on spikiness since $
\alpha(Q) = \sqrt{SA} \Vert Q\Vert_{\infty}/\Vert Q\Vert_\F \leq \sqrt{SA} \Vert U\Vert_{2\to\infty} \Vert Q\Vert_{\op} \Vert W \Vert_{2\to\infty} /\Vert Q\Vert_\F \leq
\mu(Q)^2 d$.  

For RL purposes, we need to derive entry-wise guarantees for the estimate of the value matrix of some policy as demonstrated in \cite{shah2020sample, stojanovic2024spectral, jedra2024low}. Existing upper bounds for the entry-wise estimation error exhibit a  strong dependence in the matrix coherence and its condition number, see e.g. \cite{chen2020noisy, chen2021spectral, stojanovic2024spectral}. For instance, in \cite{chen2020noisy}, this dependence comes as a multiplicative factor $\mu(Q)^2 \alpha(Q)^2 \kappa(Q)^2$ in the number of samples required for a given level of estimation accuracy. As far as we are aware, our matrix estimation method is the first able to yield entry-wise guarantees that do not exhibit a dependence on the matrix coherence but only on its spikiness (see Table \ref{table_methods}). Our algorithm is better by a factor of $\mu(Q)^2$ than algorithms based on uniform sampling (studied in \cite{sam2023overcoming}), and requires the same sample complexity as the algorithm of \cite{shah2020sample}, which has prior knowledge of anchor states. It remains however unclear whether the dependence of the entry-wise estimation error in the condition number can be avoided. This last observation guides the design of RL algorithms for low-rank MDPs as we discuss next.

\subsection{Policy vs. Value Iteration: the condition number issue}
\label{subsec:PI_vs_VI}

We aim at devising an algorithm learning an efficient policy with provable guarantees while imposing conditions on the MDP that are as mild as possible. To this aim, one may think of applying either a policy iteration approach, as we do, or a value iteration approach. 

{\it Policy Iteration.} Using this approach, in each iteration, we need to estimate the low-rank value matrix of the current candidate policy. As mentioned above, the entry-wise error of this estimation procedure depends on the condition number of the matrix. Note that this matrix belongs to the finite set of (state, action) value functions of  deterministic policies. As shown in \cite{dadashi2019,wu2022}, this set can be seen as the vertices of a simple polytope ${\cal P}$. Hence to get performance guarantees when applying a PI approach, it is sufficient to impose an upper bound on the condition numbers $\kappa_\pi$ for all deterministic policies $\pi$, or equivalently, on the condition numbers of matrices corresponding to the vertices of ${\cal P}$.  

{\it Value Iteration.} Here, we would maintain, in iteration $t$, an estimate $V^{(t)}$ of the value function $V^\star$, and samples would be used to compute $V^{(t+1)}$, an estimate of ${\cal T}^\star(V^{(t)})$, where ${\cal T}^\star$ denotes Bellman's operator. More precisely,  starting from $V^{(t)}$, we would estimate the low-rank matrix $Q^{(t+1)}={\cal F}(V^{(t)})$ defined by for all $(s,a)$, ${\cal F}(V^{(t)})(s,a)=r(s,a)+\gamma\sum_{s'}p(s'|s,a)V^{(t)}(s')$. Then we would define $V^{(t+1)}$ as the value function of the greedy policy with respect to $Q^{(t+1)}$. Hence to get provable performance guarantees using a value iteration approach, we would need to impose an upper bound on the condition number of $Q^{(t)}$ in all iterations $t$. The main issue is that the set of matrices $\{Q^{(t)}, t\ge 1\}$ is stochastic and hard to predict. Indeed, we have no way of confining the iterates $Q^{(t+1)}$ to the polytope ${\cal P}$: as shown in \cite{dadashi2019}, the polytope is not stable by Bellman's operator. As a consequence, if we wish to get performance guarantees for a value iteration approach, we would need to impose an upper bound on the condition number of all possible matrices of the form ${\cal F}(V)$ for some vector $V$.  

In summary, policy iteration approaches offer a theoretical advantage compared to value iteration. It requires the control of the condition numbers of matrices in a set much smaller than that for value iteration. This advantage is illustrated in Figure \ref{fig1} on a toy example of an MDP. Refer to Appendix \ref{app:num} for additional numerical experiments (with larger MDPs).

\begin{figure}[htb]
  \begin{center}
    \includegraphics[width=0.5\textwidth]{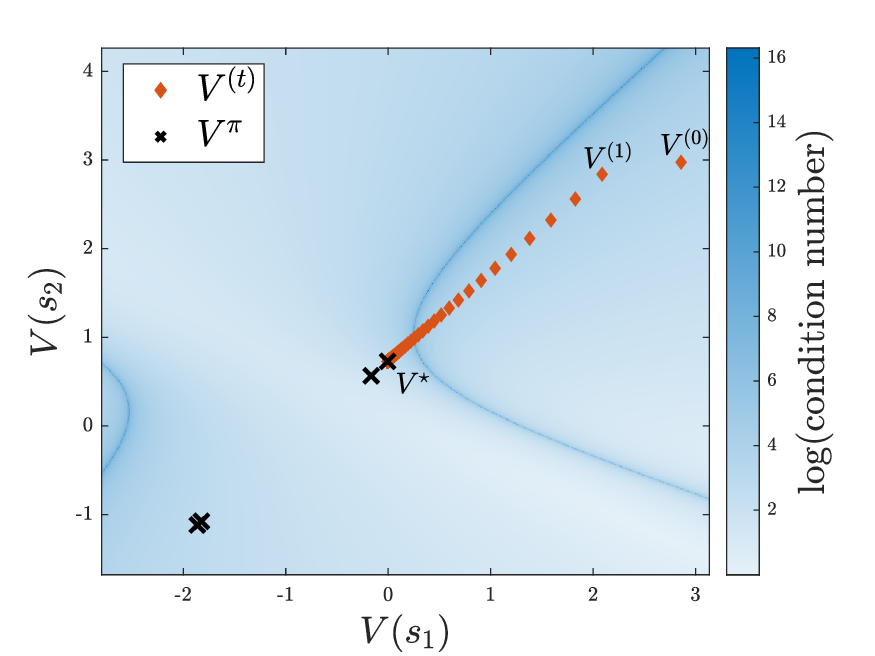}
  \end{center}
  \caption{Consider an MDP with two states and two actions (see Appendix \ref{subsec:app_toy_experiment} for details). The 4 black crosses correspond to the value function of the 4 possible policies. When combining policy iteration with a low rank estimation procedure, we just need to control the condition number of the 4 corresponding value matrices. The red dots correspond to the successive estimates $V^{(t)}$ of $V^\star$ when running value iteration. When applying a value iteration approach, we would need to upper bound the condition number of all the corresponding matrices $Q^{(t)}={\cal F}(V^{(t-1)})$ for $t\ge 1$. For a given $V$, the background color in the figure indicates the value of the condition number of ${\cal F}(V)$. We see that the dynamics of $V^{(t)}$ under the value iteration algorithm are such that the trajectory $(Q^{(t)}, t\ge 1)$ has to go through regions where the condition number is very high. Hence on this example, a value iteration approach would not work well.}\label{fig1}
\end{figure}

\section{Leveraged Matrix Estimation} \label{sec:lme}

In this section, we present Leveraged Matrix Estimation (\lme), an algorithm that estimates the value matrix $Q^\pi$ of a policy $\pi$. The algorithm relies on an active strategy for sampling the entries of the matrix based on its estimated leverage scores as defined below. This active strategy accelerates the learning process and allows us to obtain entry-wise guarantees that do not depend on the coherence of the matrix but on its spikiness only. 

\begin{definition}[Leverage scores\footnote{Our definition of leverage scores is consistent up to a scale factor with that used in the literature \cite{boutsidis2009, drineas2012,chen2015completing}.}]
\label{def:ls} 
Let $Q$ be a rank-$d$ $S\times A$ matrix with SVD $U\Sigma W^\top$. Its left and right leverage scores $\ell$ and $\rho$ are defined as 
$\ell_{s}  = \Vert U_{s,:} \Vert_2^2 / d$ for all $s\in \cS$, and $\rho_{a}  = \Vert W_{a,:} \Vert_2^2/ d$ for all $a \in \cA$.
\end{definition}

\lme\ only takes as inputs a policy $\pi$ and a sampling budget $T$. It proceeds in two phases: first, it uses half of the sampling budget to estimate the leverage scores of $Q^\pi$ via singular subspace recovery. Second, it selects a few anchor rows and columns sampled using the estimated leverage scores, and uses the remaining  budget to sample the entries of these rows and columns. It finally completes the matrix estimation using a CUR-based method. The full pseudo-code of \lme\ is presented in Appendix \ref{app:proof_CUR_theorem}. Observe that \lme\ is parameter-free: it does not require knowledge of the policy rank $d_\pi$, nor upper bounds on unknown parameters such as $\kappa_\pi$ or $\alpha(Q^\pi)$ or $\mu(Q^\pi)$. Throughout this section, when presenting our guarantees, we will abuse notation and use $d$, $\kappa$ and $\alpha$, instead of $d_\pi$, $\kappa_\pi$ and $\alpha(Q^\pi)$.

\subsection{Preliminaries}\label{subsec:prelim}

\lme\ exploits a natural empirical estimator of $Q^\pi$ entries at numerous stages. This empirical estimator is essentially based on Monte-Carlo rollouts with truncation as described next. Define the {truncated value matrix} at a horizon $\tau$ as follows: for all $(s,a) \in \cS \times \cA$,
   $Q^\pi_\tau(s,a) = \EE\left[ \sum_{t=0}^\infty \gamma^t r_t(s_t^\pi , a_t^\pi) \indicator_{\lbrace t \le \tau\rbrace}\big \vert s_0^\pi = s, a_0^\pi = a\right]$.
By choosing $\tau$ appropriately, we may control the level of the approximation error $Q^\pi_\tau - Q^\pi$. 
We make this observation precise in the following lemma, proved in Appendix \ref{subsec:app_truncated_value_lemma}. 
\begin{lemma}\label{lem:truncated-Q}
For any $\epsilon > 0$ and any $\tau \ge  \frac{1}{1- \gamma} \log\left( \frac{r_{\max}}{(1- \gamma) \epsilon}\right)$, we have $\Vert Q^\pi - Q^\pi_\tau  \Vert_{\infty}  \le \epsilon$. 
\end{lemma}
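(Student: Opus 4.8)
The plan is to bound the difference $Q^\pi - Q^\pi_\tau$ directly by writing it as the expected discounted reward contributed by the time steps beyond the horizon $\tau$. By definition, for any $(s,a)$ we have
\begin{equation}
Q^\pi(s,a) - Q^\pi_\tau(s,a) = \EE\left[ \sum_{t=0}^\infty \gamma^t r_t(s_t^\pi, a_t^\pi) \indicator_{\lbrace t > \tau \rbrace} \,\big\vert\, s_0^\pi = s, a_0^\pi = a \right],
\end{equation}
since the only difference between the two value functions is the presence or absence of the indicator $\indicator_{\lbrace t \le \tau \rbrace}$. First I would take absolute values and pull the expectation outside using the triangle inequality (or monotonicity of expectation combined with the reward bound), yielding $\vert Q^\pi(s,a) - Q^\pi_\tau(s,a) \vert \le \sum_{t = \tau+1}^\infty \gamma^t r_{\max}$, where I use that the collected rewards are bounded in magnitude by $r_{\max}$.

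The next step is the routine geometric-series computation: $\sum_{t=\tau+1}^\infty \gamma^t r_{\max} = r_{\max}\, \gamma^{\tau+1}/(1-\gamma) \le r_{\max}\, \gamma^{\tau}/(1-\gamma)$. It remains to verify that the stated choice of horizon makes this tail at most $\epsilon$. I would use the standard inequality $\gamma^\tau = e^{\tau \log \gamma} \le e^{-\tau(1-\gamma)}$, which follows from $\log \gamma \le \gamma - 1 = -(1-\gamma)$ for $\gamma \in (0,1)$. Substituting the hypothesis $\tau \ge \frac{1}{1-\gamma}\log\!\left(\frac{r_{\max}}{(1-\gamma)\epsilon}\right)$ gives $e^{-\tau(1-\gamma)} \le \frac{(1-\gamma)\epsilon}{r_{\max}}$, so that $\frac{r_{\max}}{1-\gamma}\gamma^\tau \le \frac{r_{\max}}{1-\gamma} \cdot \frac{(1-\gamma)\epsilon}{r_{\max}} = \epsilon$. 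Since the bound holds uniformly over all $(s,a)$, taking the maximum yields $\Vert Q^\pi - Q^\pi_\tau \Vert_\infty \le \epsilon$.

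I do not anticipate a genuine obstacle here, as this is a standard horizon-truncation argument; the only points requiring mild care are handling the sign of the rewards (they are bounded by $r_{\max}$ in magnitude but the lemma's statement does not assume nonnegativity, so the triangle inequality rather than monotonicity is the safe route) and the elementary bound $\gamma^\tau \le e^{-\tau(1-\gamma)}$ used to linearize the exponent. An alternative to the exponential bound is to work directly with $\gamma^{\tau} = (1-(1-\gamma))^\tau$, but the logarithmic form of the hypothesis on $\tau$ strongly suggests that the intended route is through $\log \gamma \le -(1-\gamma)$, which cleanly matches the $\frac{1}{1-\gamma}\log(\cdot)$ shape of the assumed lower bound on $\tau$.
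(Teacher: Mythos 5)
Your proof is correct and follows essentially the same route as the paper's: both bound the tail $\sum_{t>\tau}\gamma^t r_t$ by $\gamma^\tau r_{\max}/(1-\gamma)$ and then invoke $\gamma \le e^{\gamma-1}$ (equivalently $\log\gamma \le -(1-\gamma)$) to convert the hypothesis on $\tau$ into the bound $\epsilon$. The only difference is cosmetic — you sum the geometric series directly while the paper factors out $\gamma^\tau$ first — so nothing further is needed.
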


In view of the above, to estimate an entry, say $(s,a)$, of $Q^\pi$, we will use an empirical estimator based on trajectories of length $\tau + 1$ of the system under $\pi$ and starting with (state, action) pair $(s,a)$. In our algorithms, this length is chosen to get an appropriate accuracy level. Specifically, we choose $\epsilon$ and $\tau$ as follows:
\begin{align}\label{eq:eps_tau}
    \epsilon = \frac{r_{\max}}{T} \quad \text{and} \quad \tau = \left\lceil \frac{1}{1-\gamma}\log\left( \frac{ T }{1-\gamma} \right) \right\rceil.
\end{align}
These choices will become apparent from our analysis.

\subsection{Phase 1: Leverage scores estimation via spectral subspace recovery}\label{subsec:l-scores-est}

The first phase of \lme\ is devoted to the estimation of the leverage scores of $Q^\pi$. To this aim, using half of the sampling budget $T/2$, we estimate the singular subspaces of the matrix via a spectral method. 

{\it \underline{Phase 1a.} Data collection and the empirical truncated value matrix.} As suggested in \textsection\ref{subsec:prelim}, to estimate individual entries of $Q^\pi$, we sample system trajectories of length $\tau + 1$. More precisely, for each of the $N:=T/(2(\tau +1))$ trajectories, we first sample the starting (state, action) pair uniformly at random, and then observe the trajectory obtained under the policy $\pi$ and initiated at this pair. The data collected this way is $\cD = \lbrace (s_{k,0}^\pi, a_{k,0}^\pi, r_{k,0}^\pi, \dots, s_{k,\tau}^\pi, a_{k,\tau}^\pi, r_{k,\tau}^\pi): k \in [N] \rbrace$. Using this data, we construct an empirical estimate of the truncated value matrix as follows $\forall (s,a)\in \cS \times \cA$:
\begin{align}\label{eq:emp-truncated-q}
     \widetilde{Q}^\pi_\tau(s,a) & = \frac{SA}{N} \sum_{k=1}^{N}\left(\sum_{t=0}^\tau \gamma^t r_{k, t}^\pi \right)\indicator{\lbrace (s_{k,0}^\pi, a_{k,0}^\pi) = (s,a)\rbrace},
\end{align}

{\it \underline{Phase 1b.} Singular subspace recovery.} We compute the SVD of the empirical truncated value matrix $\widetilde{Q}^\pi_\tau$. We obtain $\widetilde{Q}^\pi_\tau = \sum_{i = 1}^{S\wedge A} \hat{\sigma}_i \hat{u}_i \hat{w}^\top_i$, where $\hat{\sigma}_1, \dots, \hat{\sigma}_{S\wedge A}$ correspond, in decreasing order, to its singular values and $\hat{u}_1, \dots, \hat{u}_{S}$ (resp. $\hat{w}_1, \dots, \hat{w}_{A}$) to its left (resp. right) singular vectors. Using this decomposition, we construct our estimate of $Q^\pi$ as follows: 
\begin{align}\label{eq:est-svt}
    \widehat{Q}^\pi = \sum_{i=1}^{S\wedge A} \hat{\sigma}_i \indicator{\lbrace \hat{\sigma}_i \ge \beta \rbrace}  \hat{u}_i \hat{w}^\top_i,
\end{align}
where $\beta > 0$ is a threshold that we will precise shortly. We view $\widehat{Q}^\pi$ as a biased estimate of $Q^\pi$ with controlled bias through $\tau$. We also use $\beta$ to estimate the rank of $Q^\pi$: $\widehat{d} = \sum_{i=1}^{S\wedge A}  \indicator{\lbrace \hat{\sigma}_i \ge \beta \rbrace}$. Finally, the estimated left (resp. right) singular subspace is denoted $\widehat{U} = \begin{bmatrix}
    \hat{u}_1 & \cdots & \hat{u}_{\hat{d}}
\end{bmatrix} \in \RR^{S \times \hat{d}}$  (resp. $\widehat{W} = \begin{bmatrix}
    \hat{w}_1 & \cdots & \hat{w}_{\hat{d}}
\end{bmatrix} \in \RR^{A \times \hat{d}}$). In the following proposition, we provide a choice for the threshold $\beta$ that yields appropriate guarantees regarding our subspace recovery.

\begin{proposition}
     Let $\delta \in (0,1)$ and choose the threshold $\beta$ as
     \begin{align}
    \beta = \sqrt{\frac{ r_{\max}^2 \cardS \cardA (S+ A)}{ (1-\gamma)^3 T} \log^4\left( \frac{  (S+A)T}{ (1-\gamma)
    \delta}\right) }    + \frac{\Rmax\sqrt{SA}}{T}.
    \label{eq:threshold}
    \end{align}
     Then, provided that\footnote{ To simplify the notation, all our sample complexity guarantees are expressed using $\widetilde{\Omega}_\delta(\cdot)$, the tilde-notation may hide poly-log dependencies in $\delta$, $S$, $A$, $(1-\gamma)^{-1}$, $d$, $\kappa$, $\alpha$, $\log(e/\varepsilon)$, and $r_{\max}$. \label{fn:omega}
}:
      \begin{align}
          T = \widetilde{\Omega}_\delta \left( \frac{ r_{\max}^2  \cardS\cardA } { \sigma_{d}^2(Q^\pi)} \frac{(\cardS+\cardA)}{(1-\gamma)^3} \right)
            \label{eq:T_requirement_beta}
      \end{align}
       we have that events: $\widehat{d} = d_\pi$, and for all $s \in \cS$,
     \begin{align*}
         \Vert U_{s,:} - \hU_{s,:} (\hU^\top U) \Vert_2 \lesssim \frac{r_{\max}\sqrt{\cardS\cardA}}{ (1-\gamma)^{3/2} \sigma_{d}(Q^\pi)}  \left( \sqrt{\frac{d}{T}} + \kappa \Vert U_{s,:} \Vert_2 \sqrt{\frac{\cardS+\cardA}{T}} \right)  \log^2\left(  \frac{  (S+A)T}{(1-\gamma)
        \delta} \right) 
    \end{align*}
    hold with probability at least $1-\delta$.
    An analogous result holds for $\widehat{W}$.
    \label{thm:approx_sing_vectors_guarantee_informal}
\end{proposition}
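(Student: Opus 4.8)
The plan is to treat $\widetilde{Q}^\pi_\tau$ as a perturbation of the true value matrix $Q^\pi$ and to control this perturbation at two levels of granularity: in operator norm (for the threshold $\beta$ and the rank recovery) and then row-wise (for the $2\to\infty$ subspace guarantee). I would first write the error matrix as $E := \widetilde{Q}^\pi_\tau - Q^\pi = (\widetilde{Q}^\pi_\tau - Q^\pi_\tau) + (Q^\pi_\tau - Q^\pi)$, splitting it into a zero-mean sampling fluctuation and a deterministic truncation bias. The bias is handled directly by Lemma~\ref{lem:truncated-Q} together with the choice~\eqref{eq:eps_tau}, giving $\Vert Q^\pi_\tau - Q^\pi\Vert_\infty \le r_{\max}/T$ and hence an operator-norm bias of at most $r_{\max}\sqrt{SA}/T$, precisely the second summand of $\beta$ in~\eqref{eq:threshold}. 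For the fluctuation, note that by~\eqref{eq:emp-truncated-q} the estimator is an average of $N = T/(2(\tau+1))$ i.i.d.\ contributions $X_k = \tfrac{SA}{N}\big(\sum_{t\le\tau}\gamma^t r_{k,t}^\pi\big)\, e_{s_{k,0}} e_{a_{k,0}}^\top$, each supported on a single entry chosen uniformly, and that $\EE[\widetilde{Q}^\pi_\tau] = Q^\pi_\tau$, so the centered summands are zero-mean and independent.

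The first main step is a matrix Bernstein bound on $\Vert \widetilde{Q}^\pi_\tau - Q^\pi_\tau\Vert_{\op}$. Using $\vert \sum_{t\le\tau}\gamma^t r_{k,t}^\pi\vert \le r_{\max}/(1-\gamma)$, each centered summand has norm $L \lesssim \tfrac{SA}{N}\tfrac{r_{\max}}{1-\gamma}$, and the matrix variance proxies $\Vert \sum_k \EE[X_kX_k^\top]\Vert_{\op}$ and $\Vert\sum_k\EE[X_k^\top X_k]\Vert_{\op}$ evaluate to $\lesssim \tfrac{SA(S+A)}{N}\tfrac{r_{\max}^2}{(1-\gamma)^2}$ after summing over the uniform entry distribution. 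Substituting $N \asymp T(1-\gamma)/\log(\cdot)$ reproduces the first summand of $\beta$, so with probability at least $1-\delta$ we get $\Vert E\Vert_{\op} \le \beta$. Rank recovery $\widehat{d}=d_\pi$ then follows from Weyl's inequality: the requirement~\eqref{eq:T_requirement_beta} is exactly what makes $\sigma_{d}(Q^\pi) > 2\beta$, so that in~\eqref{eq:est-svt} every true singular value survives the threshold while every spurious one (value $0$, perturbed by at most $\beta$) is discarded.

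The heart of the proof, and the main obstacle, is the row-wise bound. I would pass to the symmetric dilation $\begin{bmatrix} 0 & \widetilde{Q}^\pi_\tau \\ (\widetilde{Q}^\pi_\tau)^\top & 0\end{bmatrix}$, whose leading eigenvectors stack the left and right singular vectors, and then run a leave-one-out analysis in the spirit of~\cite{abbe2020entrywise, chen2021spectral, stojanovic2024spectral}. The starting point is the first-order expansion $\widehat{U}(\widehat{U}^\top U) - U \approx E\, W\, \Sigma^{-1}$ up to higher-order corrections, so that the $s$-th row error is driven by $\Vert E_{s,:}\, W\Vert_2/\sigma_{d}(Q^\pi)$ plus correction terms controlled by resolvent bounds. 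A vector Bernstein bound on $\Vert E_{s,:} W\Vert_2$ — exploiting that the $s$-th row of $E$ lives in the $d$-dimensional column space of $W$ — yields the uniform $\sqrt{d/T}$ contribution, while the higher-order terms, once the coupling between $\widehat{U}$ and the noise in row $s$ is removed, produce the leverage-weighted term $\kappa\Vert U_{s,:}\Vert_2\sqrt{(S+A)/T}$, with $\kappa$ and the gap $\sigma_d$ entering through the perturbation expansion.

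The delicate point, where most of the work lies, is this decoupling. I would define leave-one-out iterates by replacing the contributions of all trajectories starting in state $s$ by their conditional expectation, obtaining a surrogate subspace $\widehat{U}^{(s)}$ that is independent of the row-$s$ fluctuation and stays $\Vert\cdot\Vert_{\op}$-close to $\widehat{U}$; substituting $\widehat{U}^{(s)}$ for $\widehat{U}$ wherever it multiplies $E_{s,:}$ restores independence and lets one apply the concentration bounds above conditionally. Assembling the uniform term, the leverage-weighted term, and the residuals, and propagating the $\log^2$ factors from the Bernstein tail bounds, gives the stated inequality; the analogous guarantee for $\widehat{W}$ follows from the symmetry of the dilation by reading off the bottom block of eigenvectors.
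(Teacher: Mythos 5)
Your overall route coincides with the paper's: the same split $\widetilde{Q}^\pi_\tau - Q^\pi = (\widetilde{Q}^\pi_\tau - Q^\pi_\tau) + (Q^\pi_\tau - Q^\pi)$ with the truncation bias bounded entrywise by $r_{\max}/T$ via Lemma \ref{lem:truncated-Q}, an operator-norm concentration bound reproducing the first summand of $\beta$ in \eqref{eq:threshold}, Weyl's inequality for $\widehat{d} = d_\pi$, and a symmetric-dilation leave-one-out analysis for the row-wise bound. The paper's Theorem \ref{thm:row-wise-guarantee}, together with Lemmas \ref{lemma:technical_term_LOO_analysis} and \ref{lemma:LOO_analysis_frobenius}, instantiates exactly your ``first-order expansion plus decoupling'' plan, with the bound $\Vert E_{s,:} B \Vert_2 \lesssim \Vert B \Vert_{\F}\cdot(\cdots)$ of Proposition \ref{prop:concentration} playing the role of your vector Bernstein step, so the architecture of your argument is not in question.

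The one genuine gap is your independence claim in the decoupling step. You assert that the surrogate subspace $\widehat{U}^{(s)}$, obtained by replacing the row-$s$ contributions by their conditional expectations, ``is independent of the row-$s$ fluctuation.'' Under the sampling scheme of Phase 1a this is false as stated: the per-entry counts $N(s,a) = \sum_{k=1}^{N} \mathds{1}\lbrace (s_{k,0}^\pi, a_{k,0}^\pi) = (s,a)\rbrace$ are multinomial with fixed total $N$, so the counts in row $s$ are negatively correlated with the counts everywhere else; since the noise matrix includes these count fluctuations (the deviation of $N(s,a)$ from $N/(SA)$ is part of the error, not something you can condition away), zeroing out the row-$s$ noise does not produce a matrix independent of it. Note that your trajectory-level matrix Bernstein for the operator norm is unaffected — the rank-one summands $X_k$ are genuinely i.i.d.\ — the problem arises only where you need row-versus-rest independence to apply concentration conditionally with $B = U - \widehat{U}^{(s)}(\widehat{U}^{(s)})^\top U$. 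The paper resolves this by passing to an equivalent Poissonized noise model (Appendix \ref{app:noise_equivalence}, invoking Lemma 20 of \cite{stojanovic2024spectral}), under which the counts become i.i.d.\ Poisson and the entries of the noise matrix are independent, at the price of a $\sqrt{T}$ multiplicative factor in probability bounds that is absorbed into the logarithms — this is also why the tail bounds carry $\log\left(\frac{T(S+A)}{\delta}\right)$ rather than $\log\left(\frac{S+A}{\delta}\right)$. Without this Poissonization (or an explicit, and delicate, conditioning argument on the counts), the leave-one-out step in your third and fourth paragraphs is not justified. Everything else in your outline — the bias bookkeeping through both summands of $\beta$, the variance computation of order $SA(S+A)r_{\max}^2/(N(1-\gamma)^2)$, the $2\beta \le \sigma_{d}(Q^\pi)$ thresholding argument under \eqref{eq:T_requirement_beta}, and the emergence of the $\kappa \Vert U_{s,:}\Vert_2$ leverage-weighted term from the higher-order and Davis--Kahan pieces — matches the paper's proof.
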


The precise statement (Theorem \ref{thm:row-wise-guarantee}) and the proof are presented in Appendix \ref{proof:row-wise-guarantee} and \ref{app:subsec_rank_estimation}.

{\it \underline{Phase 1c.} Leverage Scores Estimation.} To conclude, using the recovered subspaces $\widehat{U}$ and $\widehat{W}$, we estimate the leverage scores as follows $\hat{\ell} = \big\Vert \tilde\ell\big\Vert_1^{-1} \tilde{\ell}$ and $\hat{\rho} = \big\Vert \tilde{\rho} \big\Vert_1^{-1}  \tilde{\rho}$, where: 
\ifaistats
\begin{align}\label{eq:est-lev}
        &\forall s\in \states:\ \tilde{\ell}_s = \Vert \widehat{U}_{s,:}\Vert^2 \vee \frac{d}{S}, \\ &\forall a\in \actions:\ \tilde{\rho}_a = \Vert \widehat{W}_{a,:}\Vert^2 \vee \frac{d}{A}.
\end{align}
\fi
\ifneurips
\begin{align}\label{eq:est-lev}
       \forall s\in \states:\ \tilde{\ell}_s = \Vert \widehat{U}_{s,:}\Vert_2^2 \vee \frac{d}{S},\qquad \mathrm{and}\qquad \forall a\in \actions:\ \tilde{\rho}_a = \Vert \widehat{W}_{a,:}\Vert_2^2 \vee \frac{d}{A}.
\end{align}
\fi
The performance of the estimation of the leverage scores is summarized in the following theorem, proved in Appendix \ref{app:proof_theorem_leverage}.  

\begin{theorem}[Leverage Scores Estimation]
    Let $\delta \in (0,1)$. Suppose the threshold $\beta$ is chosen as in \eqref{eq:threshold}. Then, we have that:  $\PP( \forall s \in \cS, \ \;\ell_{s} \le 4 \, \hat{\ell}_s ) \ge 1 -\delta,$ provided that 
    \begin{align*}
        T  = \widetilde{\Omega}_\delta\left( \kappa^2 \frac{r_{\max}^2 SA}{\sigma_{d}^2(Q^\pi) } \frac{(S + A)}{(1- \gamma)^3 }    \right),
    \end{align*}
    An analogous result holds for $\hat{\rho}$.
    \label{thm:leverage_scores}
\end{theorem}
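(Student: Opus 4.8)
The plan is to derive the leverage-score guarantee directly from the row-wise singular-subspace recovery of Proposition~\ref{thm:approx_sing_vectors_guarantee_informal}, working throughout on the event $\mathcal{E}$ on which that proposition holds. By construction $\mathcal{E}$ has probability at least $1-\delta$ under the stated budget, and on $\mathcal{E}$ we simultaneously have $\widehat{d}=d$ and, writing $E_s := \Vert U_{s,:} - \widehat{U}_{s,:}(\widehat{U}^\top U)\Vert_2$, the uniform row-wise bound
\[
 E_s \lesssim \frac{r_{\max}\sqrt{SA}}{(1-\gamma)^{3/2}\sigma_d(Q^\pi)}\Big(\sqrt{\tfrac{d}{T}} + \kappa\,\Vert U_{s,:}\Vert_2\sqrt{\tfrac{S+A}{T}}\Big)\log^2\big(\tfrac{(S+A)T}{(1-\gamma)\delta}\big)
\]
for every $s\in\cS$. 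Since $\widehat{d}=d$ on $\mathcal{E}$, the floor $d/S$ in \eqref{eq:est-lev} is the one actually used, and $\sum_s \Vert \widehat{U}_{s,:}\Vert_2^2 = \Vert \widehat{U}\Vert_{\F}^2 = \widehat{d} = d$. Using $a\vee b \le a+b$ together with $\sum_s d/S = d$, this gives $\Vert\tilde{\ell}\Vert_1 \le 2d$, hence $\hat{\ell}_s = \tilde{\ell}_s/\Vert\tilde\ell\Vert_1 \ge \tilde{\ell}_s/(2d)$. Consequently it suffices to prove the pointwise inequality $\Vert U_{s,:}\Vert_2^2 \le 2\,\tilde{\ell}_s$ for all $s$, since this yields $\ell_s = \Vert U_{s,:}\Vert_2^2/d \le 2\tilde\ell_s/d \le 4\hat\ell_s$.

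I would then split on the magnitude of $\Vert U_{s,:}\Vert_2$. For \emph{small} rows with $\Vert U_{s,:}\Vert_2^2 \le 2d/S$, the floor already delivers the claim: $\Vert U_{s,:}\Vert_2^2 \le 2d/S \le 2\,\tilde\ell_s$ using $\tilde\ell_s \ge d/S$. The substantive case is that of \emph{large} rows with $\Vert U_{s,:}\Vert_2^2 > 2d/S$, where I must show $\Vert \widehat{U}_{s,:}\Vert_2^2 \ge \tfrac12\Vert U_{s,:}\Vert_2^2$, so that $\tilde\ell_s \ge \Vert \widehat{U}_{s,:}\Vert_2^2 \ge \tfrac12\Vert U_{s,:}\Vert_2^2$ again gives $\Vert U_{s,:}\Vert_2^2 \le 2\tilde\ell_s$.

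For the large-row estimate the key observation is one-sided: because both $U$ and $\widehat{U}$ have orthonormal columns, $\Vert \widehat{U}^\top U\Vert_{\op}\le 1$, so $\Vert \widehat{U}_{s,:}(\widehat{U}^\top U)\Vert_2 \le \Vert \widehat{U}_{s,:}\Vert_2$, and a reverse triangle inequality applied to the perturbation bound yields $\Vert \widehat{U}_{s,:}\Vert_2 \ge \Vert U_{s,:}\Vert_2 - E_s$. It then remains to verify $E_s \le (1-\tfrac{1}{\sqrt2})\Vert U_{s,:}\Vert_2$. I would bound the two contributions to $E_s$ separately. The term proportional to $\kappa\Vert U_{s,:}\Vert_2\sqrt{(S+A)/T}$ is at most a small constant times $\Vert U_{s,:}\Vert_2$ precisely when $T \gtrsim \kappa^2 \tfrac{r_{\max}^2 SA}{\sigma_d^2(Q^\pi)}\tfrac{S+A}{(1-\gamma)^3}\log^4(\cdot)$, which is the announced sample complexity. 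The term proportional to $\sqrt{d/T}$ is controlled using the defining inequality of the large-row case, $\Vert U_{s,:}\Vert_2 \ge \sqrt{2d/S}$; this converts the requirement into $T \gtrsim \tfrac{r_{\max}^2 SA}{\sigma_d^2(Q^\pi)}\tfrac{S}{(1-\gamma)^3}\log^4(\cdot)$, which is subsumed by the previous one since $\kappa\ge 1$ and $S+A \ge S$. Combining the two cases establishes $\Vert U_{s,:}\Vert_2^2 \le 2\tilde\ell_s$ uniformly, and the analogous argument with $\widehat{W}$, $\rho$, $A$ in place of $\widehat{U}$, $\ell$, $S$ handles the right leverage scores.

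I anticipate the main obstacle to be the bookkeeping in the large-row case: one must choose the threshold $2d/S$ separating the two regimes consistently so that (i) the floor term $d/S$ covers the small rows with the correct constant, and (ii) the lower bound $\Vert U_{s,:}\Vert_2 \ge \sqrt{2d/S}$ available in the large-row regime is strong enough to absorb the $\sqrt{d/T}$ term into the dominant $\sqrt{(S+A)/T}$ term without inflating the sample complexity. The fact that we only seek the one-directional guarantee $\ell_s \le 4\hat\ell_s$ (so that importance weights built from $\hat\ell$ stay bounded) is exactly what makes the one-sided bound $\Vert \widehat{U}^\top U\Vert_{\op}\le 1$ sufficient, and avoids any need to control the principal angles between the subspaces from below.
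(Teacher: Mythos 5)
Your proposal is correct and follows essentially the same route as the paper's proof: condition on the event of Proposition~\ref{thm:approx_sing_vectors_guarantee_informal} (so $\widehat{d}=d$ and the row-wise bound holds for all $s$), bound $\Vert\tilde{\ell}\Vert_1\le 2d$ for the normalization, split rows at a threshold of order $d/S$ so the floor handles small rows, and use the reverse triangle inequality (with $\Vert\widehat{U}^\top U\Vert_{\op}\le 1$) on large rows, where absorbing the $\sqrt{d/T}$ term via $\Vert U_{s,:}\Vert_2\gtrsim\sqrt{d/S}$ produces exactly the $\kappa^2$-inflated sample requirement. The only differences are cosmetic: your threshold constant is $2d/S$ versus the paper's $d/(4S)$, your small-row case skips the (superfluous) upper bound on $\Vert\widehat{U}_{s,:}\Vert_2$ that the paper derives there, and you fold the $\sqrt{d/T}$ term into a separate subsumed condition on $T$ rather than into the $\kappa\Vert U_{s,:}\Vert_2\sqrt{(S+A)/T}$ term as the paper does.
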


\subsection{Phase 2: Leveraged CUR-based Matrix Completion}
\label{subsec:leveraged_CUR}

Before we proceed with the description of the second phase, we briefly recall the so-called {CUR
decomposition} \cite{goreinov1997theory, mahoney2009cur} for low-rank matrices. The decomposition says that for a given rank-$d$ $S \times A$ matrix $Q$, there always exists $\cI \subseteq [S]$, $\cJ \subseteq [A]$, with $ \vert \cI  \vert = \vert \cJ \vert = d$, such that the sub-matrix $Q_{\cI, \cJ}$ is full rank and for all entries $(i,j)$, $Q_{ij} = Q_{i, \cJ} (Q_{\cI, \cJ})^\dagger Q_{\cI, j}$. As in \cite{shah2020sample, sam2023overcoming, agarwal2023causal}, we leverage this decomposition in our matrix estimation procedure, but without any requirement such as knowledge of $\cI, \cJ$ for which $\sigma_d(Q_{\cI, \cJ})$ bounded away from zero or upper bounds on parameters like the matrix coherence.

{\it \underline{Phase 2a.} Data collection to estimate the skeleton of the value matrix.} We start by sampling $K := 64d \log(64d/\delta)$ rows (resp. columns) without replacement according to $\hat\ell$ (resp. $\hat\rho$) to form a {\it skeleton} of the matrix. These rows and columns are referred to as anchors. We denote the set of selected rows (resp. columns) by $\cI \subseteq \cS$ (resp.  $\cJ \subseteq \cA$). We use the remaining sample budget $T/2$ to get samples of the entries of $Q^\pi$ in the skeleton. To this aim, we use the procedure described in \textsection\ref{subsec:prelim}, and sample trajectories of length $\tau +1$. For each entry $(s,a)\in \Omega_{\square}:= \cI \times \cJ$, we use $N_1 := T/(4(\tau + 1)K^2)$ trajectories to compute $\widetilde{Q}^\pi_\tau(s,a)$, an empirical estimate of ${Q}^\pi(s,a)$ (see (\ref{eq:emp-truncated-q})). For each entry $(s,a)\in \Omega_{+} := ((\cS \backslash \cI)\times \cJ) \cup (\cI \times (\cA \backslash \cJ))$, we use $N_2 := T/(4(\tau + 1)(K (S+A) -2 K^2)$ trajectories. Note that $N_2\le N_1$ (this plays a role in the analysis).

{\it \underline{Phase 2b.} CUR-based completion with Inverse Leverage Scores Weighting.} First, using the leverage scores, and the set of rows $\cI$ and columns $\cJ$,  we define $K \times K$ diagonal matrices $L$ and $R$ as follows:   
\ifaistats
\begin{align}\label{eq:LR-weighting}
    &\forall i \in \cI, \quad L_{ii} = \frac{1}{\min\left\{1,\sqrt{K\hat{\ell}_{i}}\right\} },  \\ &\forall j \in \cJ, \quad R_{jj} = \frac{1}{\min\{1,\sqrt{K\hat{\rho}_{j}} \} }.
\end{align}
\fi
\ifneurips
\begin{align}\label{eq:LR-weighting}
   \forall i \in \cI, \quad L_{ii} = \frac{1}{\min\left\{1,\sqrt{K\hat{\ell}_{i}}\right\} }, \quad \text{and} \quad \forall j \in \cJ, \quad R_{jj} = \frac{1}{\min\{1,\sqrt{K\hat{\rho}_{j}} \} }.
\end{align}
\fi
Next, starting from the values of $\widetilde{Q}^\pi_\tau(s,a)$ for $(s,a)$ in the skeleton, we perform a CUR matrix completion to obtain $\widehat{Q}^\pi$:
\emph{(i)} for all $(s,a) \in (\cS \times \cJ)  \cup (\cI \times \cA)$, we set $\widehat{Q}^\pi(s,a) = \widetilde{Q}^\pi_\tau(s,a)$; \emph{(ii)} for all $(s,a) \in (\cS\backslash \cI) \times (\cA \backslash \cJ)$, we set 
\begin{align}\label{eq:CUR-ME}
    \widehat{Q}^\pi (s,a) & = 
        \widetilde{Q}^\pi_\tau (s, \cJ) R \left( L \, \widetilde{Q}^\pi_\tau(\cI, \cJ) R \right)^\dagger L \, \widetilde{Q}^\pi_\tau(\cI, a).
\end{align}
Note that the use of $L$ and $R$ in \eqref{eq:CUR-ME}, referred to as Inverse Leverage Scores Weighting, corresponds to an importance sampling  procedure. It allows us to account for the fact that the skeleton has been sampled using the (estimated) leverage scores.

The next theorem summarizes the performance guarantees under $\lme$. Its proof is presented in Appendix \ref{subsec:proof-lme-gurantee}.

\begin{theorem}\label{thm:lme-guarantee} Let  $\varepsilon >0$, $\delta \in (0,1)$. Given a deterministic policy $\pi$, and a sampling budget $T$, the algorithm $\lme$ ensures that  $
        \PP(\Vert \widehat{Q}^\pi - Q^\pi \Vert_\infty \le \varepsilon )\ge 1 - \delta
$,
    provided that $\varepsilon \lesssim \Vert Q^\pi \Vert_{\infty}$ and
    \begin{align*}
        T = \widetilde{\Omega}_\delta\left( \frac{ (S+A) + \alpha^2  d   }{ (1-\gamma)^3 \varepsilon^2} (r_{\max}^2  \kappa^4 \alpha^2 d^2  )  \right).
    \end{align*} 
\end{theorem}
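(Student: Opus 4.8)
The plan is to bound $\Vert\widehat{Q}^\pi - Q^\pi\Vert_\infty$ by decomposing, for each entry $(s,a)$, the error into a truncation bias $Q^\pi - Q^\pi_\tau$, the Monte-Carlo fluctuations $\widetilde{Q}^\pi_\tau - Q^\pi_\tau$ of the skeleton estimates, and the error of the CUR completion formula \eqref{eq:CUR-ME}. The bias is at most $r_{\max}/T \le \varepsilon$ uniformly by Lemma~\ref{lem:truncated-Q} and the choice \eqref{eq:eps_tau}. For the skeleton entries $(s,a)\in(\cS\times\cJ)\cup(\cI\times\cA)$ the estimator is a direct empirical average, so a standard sub-Gaussian concentration bound for the $N_1$- or $N_2$-sample rollouts suffices. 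The real work is for the off-skeleton entries $(s,a)\in(\cS\setminus\cI)\times(\cA\setminus\cJ)$, where \eqref{eq:CUR-ME} is used. Throughout I condition on the high-probability events that $\widehat d = d$ and that leverage-score domination $\ell_s\le 4\hat\ell_s$, $\rho_a\le 4\hat\rho_a$ holds (Theorem~\ref{thm:leverage_scores}), which is what makes the leverage-driven anchor sampling well behaved.

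The second step converts leverage domination into a conditioning guarantee for the weighted skeleton. Writing $Q^\pi = U\Sigma W^\top$ and $\Phi = L\,U_{\cI,:}$, $\Psi = R\,W_{\cJ,:}$, the weighting \eqref{eq:LR-weighting} satisfies $L_{ii}^2 = 1/\min\{1,K\hat\ell_i\}$, i.e. it is a Horvitz--Thompson importance weight, so $\sum_{i\in\cI} L_{ii}^2\,U_{i,:}^\top U_{i,:}$ is an unbiased estimator of $I_d$; using $\ell_i\le 4\hat\ell_i$ to bound the per-anchor energy by $O(d/K)$ and a matrix Chernoff/Bernstein inequality for sampling without replacement, $K = 64\,d\log(64 d/\delta)$ anchors give $\tfrac12 I \preceq \Phi^\top\Phi \preceq \tfrac32 I$ (and likewise for $\Psi$) with probability $1-\delta$. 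Since $L\,Q^\pi_{\cI,\cJ}\,R = \Phi\Sigma\Psi^\top$, this yields $\sigma_d(L\,Q^\pi_{\cI,\cJ}\,R)\gtrsim\sigma_d(Q^\pi)$ and the exact identity $U_{s,:}\Sigma\Psi^\top(\Phi\Sigma\Psi^\top)^\dagger\Phi\Sigma W_{a,:}^\top = Q^\pi(s,a)$, so the population CUR formula is unbiased. A second structural fact I would record here is that any $s\notin\cI$ must have $\hat\ell_s$ below the inclusion threshold, hence $\ell_s\lesssim 1/K$ and $\Vert U_{s,:}\Vert_2^2\lesssim d/K$: the query rows and columns are automatically near-incoherent, which is crucial below.

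The third step is the perturbation analysis. Writing $\widetilde{Q}^\pi_\tau = Q^\pi + B + E$ with $B$ the uniformly small bias and $E$ the zero-mean noise, I telescope \eqref{eq:CUR-ME} minus $Q^\pi(s,a)$ into (i) a row term $E_{s,\cJ}R(L\widetilde{Q}^\pi_\tau(\cI,\cJ)R)^\dagger L\widetilde{Q}^\pi_\tau(\cI,a)$, (ii) a core term $Q^\pi_{s,\cJ}R\big[(L\widetilde{Q}^\pi_\tau(\cI,\cJ)R)^\dagger - (L\,Q^\pi_{\cI,\cJ}R)^\dagger\big]L\widetilde{Q}^\pi_\tau(\cI,a)$, and (iii) a column term $Q^\pi_{s,\cJ}R(L\,Q^\pi_{\cI,\cJ}R)^\dagger L\,E_{\cI,a}$. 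Choosing $N_1$ so that $\Vert L E_{\cI,\cJ}R\Vert_{\op}\le\tfrac12\sigma_d(L\,Q^\pi_{\cI,\cJ}R)$ keeps the perturbed core of rank $d$, and the standard equal-rank pseudo-inverse stability bound gives $\Vert(L\widetilde{Q}^\pi_\tau(\cI,\cJ)R)^\dagger - (L\,Q^\pi_{\cI,\cJ}R)^\dagger\Vert_{\op}\lesssim\Vert L E_{\cI,\cJ}R\Vert_{\op}/\sigma_d(Q^\pi)^2$. The decisive simplification is that in (i) and (iii) the factors $\Sigma$ and $\Sigma^{-1}$ cancel, reducing e.g. (iii) to $U_{s,:}(\Phi^\top\Phi)^{-1}U_{\cI,:}^\top L^2 E_{\cI,a}$; bounding this by a sub-Gaussian tail, its squared size is controlled by $\sum_{i\in\cI}L_{ii}^4\langle U_{s,:},U_{i,:}\rangle^2$, and here the inverse-leverage weight $L_{ii}^2\approx 1/(K\hat\ell_i)$ cancels against $\Vert U_{i,:}\Vert_2^2 = d\ell_i\le 4d\hat\ell_i$, so the sum is $O(d)$ rather than $O(S)$. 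This cancellation, together with the induced near-incoherence $\Vert U_{s,:}\Vert_2^2\lesssim d/K$, is exactly how coherence is eliminated in favour of the rank and spikiness.

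Finally I would substitute $N_1 = T/(4(\tau+1)K^2)$, $N_2 = T/(4(\tau+1)(K(S+A)-2K^2))$ and $\tau$ from \eqref{eq:eps_tau}, so that the per-entry noise scales as $r_{\max}K(1-\gamma)^{-1}\sqrt{(\tau+1)/T}$ on the core and $r_{\max}(1-\gamma)^{-1}\sqrt{(\tau+1)K(S+A)/T}$ on the cross. Requiring (i)--(iii) to each be $\lesssim\varepsilon$ and converting the remaining $\sigma_d(Q^\pi)$ and $\sqrt{SA}$ factors through the spikiness relations $\Vert Q^\pi\Vert_\infty = \alpha\Vert Q^\pi\Vert_\F/\sqrt{SA}$ and $\sigma_d(Q^\pi)\ge\Vert Q^\pi\Vert_\F/(\kappa\sqrt d)$, and using the hypothesis $\varepsilon\lesssim\Vert Q^\pi\Vert_\infty$, yields the stated budget: the $(S+A)$ term comes from the cross-noise in (iii), while the additive $\alpha^2 d$ and the $\kappa^4\alpha^2 d^2$ prefactor come from the core term (ii) and the leverage-estimation requirement of Theorem~\ref{thm:leverage_scores}. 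I expect the main obstacle to be precisely the core term (ii): controlling the operator norm of the inverse-leverage-weighted noise $L E_{\cI,\cJ}R$ and its propagation through the pseudo-inverse without reintroducing a polynomial dependence on $S$ or $A$, since there the weights act on raw noise with no singular-vector structure to trigger the cancellation used in (iii). Balancing the sample allocation between $\Omega_\square$ and $\Omega_+$ so that (ii) contributes only the benign additive $\alpha^2 d$ term is the delicate, genuinely new part of the argument.
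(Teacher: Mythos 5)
Your overall architecture coincides with the paper's: condition on the Phase-1 events ($\widehat{d} = d_\pi$ and $\ell_s \le 4\hat{\ell}_s$ from Theorem \ref{thm:leverage_scores}), use a matrix Bernstein argument to show that the inverse-leverage-weighted anchor factors are well conditioned (the paper's Lemma \ref{lemma:conc_spectral_subsampled} and Corollary \ref{corr:sigma_d_lower_bound}, giving $\sigma_d(L\,Q^\pi(\cI,\cJ)\,R) \ge \sigma_d(Q^\pi)/4$), establish exact unbiasedness of the weighted CUR formula via the pseudo-inverse factorization $(\cD_U \Sigma \cD_W^\top)^\dagger = (\cD_W^\top)^\dagger \Sigma^{-1}\cD_U^\dagger$ (Lemma \ref{lemma:cross_completion_unbiased}), decompose the entry-wise error into cross-noise and core pseudo-inverse-perturbation terms controlled by the equal-rank stability inequality, and finally trade $N_1$ against $N_2$ (the paper indeed uses $\varepsilon_\square = \varepsilon_+\sqrt{K/(S+A-2K)}$) and convert dimensional factors via $\Vert Q^\pi\Vert_\infty^2 SA/\sigma_d^2(Q^\pi) \le \kappa^2\alpha^2 d$. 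Up to the exact grouping of the perturbation terms, this is the paper's proof of Theorem \ref{thm:cross_Qmax_guarantee} and its assembly in \textsection\ref{subsec:proof-lme-gurantee}.

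The genuine gap is your ``decisive simplification'' for the cross term (iii). The algebraic reduction to $U_{s,:}(\Phi^\top\Phi)^{-1}U_{\cI,:}^\top L^2 E_{\cI,a}$ is correct, but the claimed conclusion that $\sum_{i\in\cI} L_{ii}^4 \langle U_{s,:}, U_{i,:}\rangle^2 = O(d)$ does not follow: the cancellation $L_{ii}^2\Vert U_{i,:}\Vert_2^2 \lesssim d/K$ consumes only one of the two powers of $L_{ii}^2$, and the surviving factor is controlled in the worst case only by $\Vert L\Vert_\op^2 \lesssim (S/K)\log^2(S/\delta)$ (the paper's Lemma \ref{lemma:conc_DI_DJ}) --- anchors whose estimated score sits near the floor $\hat{\ell}_i \asymp 1/S$ carry weights $L_{ii}^2 \asymp S/K$, and nothing in the quadratic form prevents them from dominating. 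So the sum is of order $dS/K$ as stated, not $d$, and coherence-freeness is not obtained by this route. The paper never needs such a cancellation: it bounds all noise terms crudely through $\Vert L\Vert_\op\Vert R\Vert_\op \lesssim (\sqrt{SA}/K)\log^2$ together with $\Vert (L\,\widetilde{Q}^\pi_\tau(\cI,\cJ)\,R)^\dagger\Vert_\op \le 8/\sigma_d(Q^\pi)$, and lets the spikiness conversion $\Vert Q^\pi\Vert_\infty\sqrt{SA}/\sigma_d(Q^\pi) \le \kappa\alpha\sqrt{d}$ absorb the dimension; the elimination of coherence comes entirely from (a) leverage sampling making the weighted core well conditioned and (b) spikiness replacing coherence in that conversion, not from per-row cancellation inside the noise terms. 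Relatedly, your final worry about the core term (ii) being the hard part is misplaced: in the paper that term is benign precisely because the skeleton has only $K^2$ entries, so $\varepsilon_\square \ll \varepsilon_+$ and the same crude $\sqrt{SA}$ bound suffices there --- your own allocation remark at the end already contains this fix. Your proof therefore closes once you replace the $O(d)$ cancellation claim in (iii) by the crude operator-norm bound plus the spikiness conversion.
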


Theorem $\ref{thm:lme-guarantee}$ states that the sample complexity of \lme\ to obtain entry-wise guarantees does not depend on the coherence $\mu$ of $Q^\pi$ but rather on its spikiness $\alpha$ and condition number $\kappa$ only. Hence \lme\ provides entry-wise guarantees even for coherent matrices. In addition, its sample complexity scales with $S$, $A$, $\gamma$ and $\varepsilon$ optimally. Indeed if $\alpha, \kappa = {\Theta}(1)$ and $d\ll S+A$, it scales as ${(S+A)\over\varepsilon^2(1-\gamma)^{3}}$. We also wish to emphasize that $\lme$ is parameter-free, in the sense that it does not require knowledge of the so-called anchor rows and columns, nor does it require upper bounds on unknown parameters such as coherence, spikiness, rank or condition number. These properties are desirable for RL purposes. 

\section{Low-Rank Policy Iteration}

In this section, we present and evaluate \lorapi\ (Low Rank Policy Iteration), a model-free variant of the approximate policy iteration algorithm \cite{bertsekas1996neuro}. It alternates between policy improvement and policy evaluation steps and uses \lme, our low rank matrix estimation procedure for policy evaluation. Refer to Algorithm \ref{algo:lora} for the pseudo-code.       

\begin{algorithm}[ht]
    \SetAlgoLined
    \KwIn{sampling budget $T$, accuracy $\varepsilon$, confidence level $\delta$, discount factor $\gamma$, rewards upper bound $r_{\max}$, $\pi^{(1)}$ an initial deterministic policy} 
    Set $N_{\textup{epochs}} \gets (1-\gamma)^{-1} \log\left((4 r_{\max})/((1-\gamma)\varepsilon)\right)$ \\ 
    Set $T_{\textup{eval}} \gets T / N_{\textup{epochs}}$ \\
    \For{$t=1,2,\dots, N_{\mathrm{epochs}}$}{ 
    \emph{(Approximate policy  evaluation).} Obtain the value matrix of $\pi^{(t)}$:
    $
    \widehat{Q}^{(t)} \gets \text{\tt LME}(\pi^{(t)}, T_{\textup{eval}})
    $\\
    \emph{(Policy improvement)} Improve $\pi^{(t)}$ using $\widehat{Q}^{(t)}$:
    $
    \forall s\in \cS, \pi^{(t+1)}(s) \gets \argmax_{a \in \cA} \widehat{Q}^{(t)}(s, a)
    $
    }
    \KwOut{Output $\hat{\pi}=\pi^{(N_{\mathrm{epochs}} + 1)}$.}
    \caption{Low-Rank Policy Iteration ($\lorapi$)}
    \label{algo:lora}
\end{algorithm}

The following theorem provides performance guarantees for \lorapi. 

We state the results under the assumption that for any deterministic policy $\pi$, $Q^\pi$ is $\alpha$-spiky and has a condition number upper bounded by $\kappa$.

\begin{theorem}\label{thm:lora-pi-lme} Let $\delta \in (0,1)$ and $ \varepsilon = \widetilde{O}(\Vert Q^{\pi^{(1)}} \Vert_\infty)$. Under $\lorapi$, we have $\PP\left( \Vert V^\star - V^{\hat{\pi}}  \Vert_\infty \le \varepsilon \right) \ge 1 - \delta $, provided
\begin{align*}
        T = \widetilde{\Omega}_\delta\left( \frac{ (S+A) + \alpha^2  d }{(1-\gamma)^8 \varepsilon^2} (r_{\max}^2  \kappa^4 \alpha^2  d^2)
        \right).
    \end{align*} 
\end{theorem}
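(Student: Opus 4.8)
The plan is to cast \lorapi\ as an instance of approximate policy iteration and combine two ingredients: (i) a deterministic error-propagation bound controlling how the per-epoch policy-evaluation error accumulates across epochs, and (ii) the per-epoch guarantee of \lme\ (Theorem~\ref{thm:lme-guarantee}) bounding the evaluation error in $\Vert\cdot\Vert_\infty$. Write $\varepsilon_{\mathrm{eval}}$ for the target accuracy of each call $\widehat{Q}^{(t)}=\lme(\pi^{(t)},T_{\mathrm{eval}})$, i.e. the event $\Vert \widehat{Q}^{(t)}-Q^{\pi^{(t)}}\Vert_\infty\le \varepsilon_{\mathrm{eval}}$. The whole argument reduces to (a) showing that if all $N_{\mathrm{epochs}}$ evaluation events hold then $\Vert V^\star - V^{\hat\pi}\Vert_\infty\le\varepsilon$, and (b) choosing $\varepsilon_{\mathrm{eval}}$ and $T_{\mathrm{eval}}$ so these events hold simultaneously with probability at least $1-\delta$.

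For (a), I would establish the one-step recursion
$$\Vert V^\star - V^{\pi^{(t+1)}}\Vert_\infty \le \gamma\,\Vert V^\star - V^{\pi^{(t)}}\Vert_\infty + \frac{2\varepsilon_{\mathrm{eval}}}{1-\gamma}.$$
The starting point is the approximate greedy property: since $\pi^{(t+1)}(s)=\argmax_a \widehat{Q}^{(t)}(s,a)$ and $\Vert \widehat{Q}^{(t)}-Q^{\pi^{(t)}}\Vert_\infty\le\varepsilon_{\mathrm{eval}}$, one gets $Q^{\pi^{(t)}}(s,\pi^{(t+1)}(s))\ge Q^{\pi^{(t)}}(s,a)-2\varepsilon_{\mathrm{eval}}$ for all $s,a$, i.e. $\mathcal{T}^{\pi^{(t+1)}}V^{\pi^{(t)}}\ge \mathcal{T}^\star V^{\pi^{(t)}}-2\varepsilon_{\mathrm{eval}}\mathbf{1}$. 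The subtle point is that the naive bound $\Vert V^\star - V^{\pi^{(t+1)}}\Vert_\infty\le (1-\gamma)^{-1}\Vert \mathcal{T}^\star V^{\pi^{(t)}}-\mathcal{T}^{\pi^{(t+1)}}V^{\pi^{(t)}}\Vert_\infty$ loses a factor $(1-\gamma)^{-1}$ and fails to contract for $\gamma$ near $1$. Instead I would recover the clean contraction factor $\gamma$ by the monotonicity argument of exact policy iteration, corrected for the approximation: iterating $\mathcal{T}^{\pi^{(t+1)}}$ from $V^{\pi^{(t)}}$ and using $\mathcal{T}^{\pi^{(t+1)}}(V-c\mathbf{1})=\mathcal{T}^{\pi^{(t+1)}}V-\gamma c\mathbf{1}$ yields the near-monotonicity $V^{\pi^{(t+1)}}\ge V^{\pi^{(t)}}-\frac{2\varepsilon_{\mathrm{eval}}}{1-\gamma}\mathbf{1}$, hence $V^{\pi^{(t+1)}}\ge \mathcal{T}^\star V^{\pi^{(t)}}-\frac{2\varepsilon_{\mathrm{eval}}}{1-\gamma}\mathbf{1}$. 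Combining with $V^\star-\mathcal{T}^\star V^{\pi^{(t)}}=\mathcal{T}^\star V^\star-\mathcal{T}^\star V^{\pi^{(t)}}\le \gamma\Vert V^\star-V^{\pi^{(t)}}\Vert_\infty\mathbf{1}$ (contraction of $\mathcal{T}^\star$ with $V^\star\ge V^{\pi^{(t)}}$) gives the recursion. Unrolling over $t=1,\dots,N_{\mathrm{epochs}}$ and using $\Vert V^\star-V^{\pi^{(1)}}\Vert_\infty\le \frac{2r_{\max}}{1-\gamma}$ yields
$$\Vert V^\star - V^{\hat\pi}\Vert_\infty \le \gamma^{N_{\mathrm{epochs}}}\frac{2r_{\max}}{1-\gamma} + \frac{2\varepsilon_{\mathrm{eval}}}{(1-\gamma)^2}.$$

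For (b), I would balance the two terms at level $\varepsilon/2$. The geometric term is $\le\varepsilon/2$ exactly for $N_{\mathrm{epochs}}=(1-\gamma)^{-1}\log(4r_{\max}/((1-\gamma)\varepsilon))$ as set in Algorithm~\ref{algo:lora} (using $\log(1/\gamma)\ge 1-\gamma$), and the steady-state term is $\le\varepsilon/2$ for $\varepsilon_{\mathrm{eval}}=\Theta((1-\gamma)^2\varepsilon)$. Plugging this $\varepsilon_{\mathrm{eval}}$ into Theorem~\ref{thm:lme-guarantee} gives a per-epoch budget $T_{\mathrm{eval}}=\widetilde\Omega\big(\tfrac{(S+A)+\alpha^2 d}{(1-\gamma)^7\varepsilon^2}r_{\max}^2\kappa^4\alpha^2 d^2\big)$, and since $T=N_{\mathrm{epochs}}T_{\mathrm{eval}}$ with $N_{\mathrm{epochs}}=\widetilde O((1-\gamma)^{-1})$, the extra $(1-\gamma)^{-1}$ produces the claimed $(1-\gamma)^{-8}$ total budget. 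To make all evaluation events hold at once I would run each \lme\ call at confidence $\delta/N_{\mathrm{epochs}}$ and union bound, the resulting $\log(N_{\mathrm{epochs}}/\delta)$ factors being absorbed in $\widetilde\Omega_\delta$. Two points need care: first, $\pi^{(t)}$ is random (it depends on earlier epochs' data), so Theorem~\ref{thm:lme-guarantee} must be applied conditionally on the filtration of epochs $1,\dots,t-1$ — legitimate because epoch $t$ draws fresh independent trajectories, and the standing assumption that every deterministic $Q^\pi$ is $\alpha$-spiky with condition number $\le\kappa$ lets me use the same $(\alpha,\kappa)$ whatever policy is reached; second, Theorem~\ref{thm:lme-guarantee} requires $\varepsilon_{\mathrm{eval}}\lesssim \Vert Q^{\pi^{(t)}}\Vert_\infty$ at every epoch.

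The main obstacle I anticipate is this last precondition: since $\varepsilon_{\mathrm{eval}}=\Theta((1-\gamma)^2\varepsilon)$ shrinks with $1-\gamma$, I must ensure $\Vert Q^{\pi^{(t)}}\Vert_\infty$ stays bounded below by a constant multiple of $\varepsilon_{\mathrm{eval}}$ throughout. This is precisely the role of the hypothesis $\varepsilon=\widetilde O(\Vert Q^{\pi^{(1)}}\Vert_\infty)$: combined with the near-monotonicity $V^{\pi^{(t+1)}}\ge V^{\pi^{(t)}}-\frac{2\varepsilon_{\mathrm{eval}}}{1-\gamma}\mathbf{1}$ derived above, the value (and hence $\Vert Q^{\pi^{(t)}}\Vert_\infty$) cannot drop by more than a controlled amount over the $N_{\mathrm{epochs}}$ epochs, so $\Vert Q^{\pi^{(t)}}\Vert_\infty\gtrsim \Vert Q^{\pi^{(1)}}\Vert_\infty\gtrsim \varepsilon\gg\varepsilon_{\mathrm{eval}}$ stays valid. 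Carrying out this monotone lower-bound bookkeeping, while keeping the contraction factor at $\gamma$ rather than $\gamma/(1-\gamma)$ in the propagation step, is where the real work lies; the sample-complexity arithmetic in (b) is then routine.
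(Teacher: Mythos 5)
Your proposal is correct and takes essentially the same route as the paper: the same near-monotone improvement bound $V^{\pi^{(t+1)}}\ge \cT^\star(V^{\pi^{(t)}}) - \frac{2\varepsilon_{\mathrm{eval}}}{1-\gamma}\mathbf{1}$ (the paper's Lemma~\ref{lem:improvement}) driving a $\gamma$-contraction error propagation with steady-state slack $\frac{2\varepsilon_{\mathrm{eval}}}{(1-\gamma)^2}$ (Lemma~\ref{lem:API-convergence}), the same choices $\varepsilon_{\mathrm{eval}}=\Theta((1-\gamma)^2\varepsilon)$ and $N_{\mathrm{epochs}}=\widetilde{O}((1-\gamma)^{-1})$ with a conditional union bound over epochs, and the same use of near-monotonicity together with $\varepsilon=\widetilde{O}(\Vert Q^{\pi^{(1)}}\Vert_\infty)$ to verify the \lme\ precondition $\varepsilon_{\mathrm{eval}}\lesssim\Vert Q^{\pi^{(t)}}\Vert_\infty$ at every epoch, yielding the identical $(1-\gamma)^{-8}$ budget. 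The only cosmetic difference is that you unroll a one-step recursion $\Vert V^\star-V^{\pi^{(t+1)}}\Vert_\infty\le\gamma\Vert V^\star-V^{\pi^{(t)}}\Vert_\infty+\frac{2\varepsilon_{\mathrm{eval}}}{1-\gamma}$, whereas the paper unrolls the operator inequality $V^{(t+1)}+\frac{2\epsilon}{1-\gamma}\mathbf{1}\ge\cT^\star(V^{(t)})$ and applies contraction once at the end; these give the same bound.
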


The proof of Theorem \ref{thm:lora-pi-lme} is presented in Appendix \ref{app:lora}. Having entry-wise guarantees (as stated in Theorem \ref{thm:lme-guarantee}) at each iteration of the algorithm is critical to establish Theorem \ref{thm:lora-pi-lme}. In fact, the proof starts from the observation (see Lemma \ref{lem:API-convergence}) that 
\ifaistats
\begin{align*}
    (1-\gamma)\Vert V^\star - V^{\hat{\pi}}  \Vert_\infty \le  &2 r_ {\max}\gamma^{N_{\textup{epochs}}} \\ &+ 2 \max_{t \in [N_{\textup{epochs}}]} \Vert \widehat{Q}^{(t)} - Q^{\pi^{(t)}}\Vert_\infty.
\end{align*}
\fi
\ifneurips
\begin{align*}
    (1-\gamma)\Vert V^\star - V^{\hat{\pi}}  \Vert_\infty \le 2 r_ {\max}\gamma^{N_{\textup{epochs}}} + 2 \max_{t \in [N_{\textup{epochs}}]} \Vert \widehat{Q}^{(t)} - Q^{\pi^{(t)}}\Vert_\infty.
\end{align*}
\fi
\lorapi\ combines numerous advantages. (i) It is parameter-free: it does not require the knowledge of upper bounds on parameters such as the ranks, condition numbers, and spikiness of the value matrices of policies. This is thanks to $\lme$, which is itself parameter-free. (ii) Its sample complexity does not depend on the coherence of the value matrices but only on their spikiness; which is an important improvement over existing algorithms \citep{sam2023overcoming}. (iii) \lorapi\ offers performance guarantees without having access to good anchor states and actions, without assuming that the rewards are deterministic and that the discount factor is (far too) small, as in \citep{shah2020sample} (refer to Section \ref{sec:related} for a detailed discussion). (iv) Its sample complexity has an order-optimal scaling in $S$, $A$ and $\varepsilon$. (v) Finally, since \lorapi\ uses policy iteration, its theoretical guarantees can be established under milder assumptions than if value iteration was used instead (see \textsection\ref{subsec:PI_vs_VI}).

The dependence of order $(1-\gamma)^{-8}$ is far from the ideal minimal dependence of order $(1-\gamma)^{-3}$ that one would typically obtain in RL without low-rank structure. This is an artifact of using a model-free approach, and more specifically the Monte-Carlo estimator of entries of the value matrices. Avoiding such high dependence requires further assumptions and a model-based approach.

Furthermore, it is worth mentioning that the guarantees enjoyed by \lorapi\ can be naturally extended to MDPs that are low-rank only in an approximate sense. We refer the reader to Appendix \ref{subsec:approximately_low_rank} for further details.

\section{Conclusion}

In this work, we considered a class of MDPs where the Q-function, viewed as a state-action matrix, admits a low-rank representation under any deterministic policy. We devised $\lorapi$, a model-free learning algorithm based on approximate policy iteration, that provably exploits such low-rank representation to output a near-optimal policy. Critical to the design and performance guarantee of $\lorapi$ is a novel low-rank matrix estimation procedure referred to as $\lme$. $\lme$ is shown to enjoy a tight entry-wise guarantee while being parameter-free, i.e., it does not require knowledge of the so-called anchor rows and columns, nor upper bounds on unknown parameters such as spikiness, coherence, rank, or condition number. More importantly, its sample complexity does not scale with the coherence but instead with the spikiness of the matrix. This allows us to estimate a wider class of low-rank matrices with entry-wise guarantees than previous work. Such desirable properties are what make $\lme$ appealing for RL purposes, and in particular what allows us to show that $\lorapi$ is sample-efficient under mild conditions. From a design perspective, $\lme$ and its analysis features many interesting tools and ideas. Notably, (i) we derived instance-dependent row-wise singular subspace recovery guarantees, and (ii) we combined the use of the so-called leverage scores with a CUR-based approximation for matrix estimation. We believe such tools and ideas to be of independent interest. Finally, we provided experimental results that suggest the superior performance of our proposed algorithms.

\newpage 

\section*{Acknowledgment}

This research was supported by the Wallenberg AI, Autonomous Systems and Software Program
(WASP) funded by the Knut and Alice Wallenberg Foundation, the Swedish Research Council (VR), and Digital Futures. YJ is supported by the Knut and Alice Wallenberg Foundation Postdoctoral Scholarship Program under grant KAW 2022.0366.  

\medskip

\bibliographystyle{plain}
\bibliography{references,references2}

\begin{thebibliography}{10}

\bibitem{abbe2020entrywise}
Emmanuel Abbe, Jianqing Fan, Kaizheng Wang, and Yiqiao Zhong.
\newblock Entrywise eigenvector analysis of random matrices with low expected rank.
\newblock {\em Annals of statistics}, 48(3):1452, 2020.

\bibitem{agarwal2020flambe}
Alekh Agarwal, Sham Kakade, Akshay Krishnamurthy, and Wen Sun.
\newblock {FLAMBE: Structural Complexity and Representation Learning of Low Rank MDPs}.
\newblock In {\em Advances in Neural Information Processing Systems}, volume~33, pages 20095--20107. Curran Associates, Inc., 2020.

\bibitem{agarwal2023causal}
Anish Agarwal, Munther Dahleh, Devavrat Shah, and Dennis Shen.
\newblock Causal matrix completion.
\newblock In {\em The Thirty Sixth Annual Conference on Learning Theory}, pages 3821--3826. PMLR, 2023.

\bibitem{bertsekas1996neuro}
Dimitri Bertsekas and John~N Tsitsiklis.
\newblock {\em Neuro-dynamic programming}.
\newblock Athena Scientific, 1996.

\bibitem{boutsidis2009}
Christos Boutsidis, Michael~W Mahoney, and Petros Drineas.
\newblock An improved approximation algorithm for the column subset selection problem.
\newblock In {\em Proceedings of the twentieth annual ACM-SIAM symposium on Discrete algorithms}, pages 968--977. SIAM, 2009.

\bibitem{candes2010noise}
Emmanuel~J. Candes and Yaniv Plan.
\newblock Matrix completion with noise.
\newblock {\em Proceedings of the IEEE}, 98(6):925--936, 2010.

\bibitem{chen2015completing}
Yudong Chen, Srinadh Bhojanapalli, Sujay Sanghavi, and Rachel Ward.
\newblock Completing any low-rank matrix, provably.
\newblock {\em The Journal of Machine Learning Research}, 16(1):2999--3034, 2015.

\bibitem{chen2021spectral}
Yuxin Chen, Yuejie Chi, Jianqing Fan, Cong Ma, et~al.
\newblock Spectral methods for data science: A statistical perspective.
\newblock {\em Foundations and Trends{\textregistered} in Machine Learning}, 14(5):566--806, 2021.

\bibitem{chen2020noisy}
Yuxin Chen, Yuejie Chi, Jianqing Fan, Cong Ma, and Yuling Yan.
\newblock Noisy matrix completion: Understanding statistical guarantees for convex relaxation via nonconvex optimization.
\newblock {\em SIAM journal on optimization}, 30(4):3098--3121, 2020.

\bibitem{dadashi2019}
Robert Dadashi, Adrien~Ali Taiga, Nicolas Le~Roux, Dale Schuurmans, and Marc~G Bellemare.
\newblock The value function polytope in reinforcement learning.
\newblock In {\em International Conference on Machine Learning}, pages 1486--1495. PMLR, 2019.

\bibitem{dann2018}
Christoph Dann, Nan Jiang, Akshay Krishnamurthy, Alekh Agarwal, John Langford, and Robert~E Schapire.
\newblock {On Oracle-Efficient PAC RL with Rich Observations}.
\newblock In {\em Advances in Neural Information Processing Systems}, volume~31. Curran Associates, Inc., 2018.

\bibitem{davenport2016}
Mark~A. Davenport and Justin~K. Romberg.
\newblock An overview of low-rank matrix recovery from incomplete observations.
\newblock {\em {IEEE} J. Sel. Top. Signal Process.}, 10(4):608--622, 2016.

\bibitem{drineas2012}
Petros Drineas, Malik Magdon-Ismail, Michael~W. Mahoney, and David~P. Woodruff.
\newblock Fast approximation of matrix coherence and statistical leverage.
\newblock {\em J. Mach. Learn. Res.}, 13(1):3475–3506, dec 2012.

\bibitem{drineas2008relative}
Petros Drineas, Michael~W Mahoney, and Shan Muthukrishnan.
\newblock Relative-error cur matrix decompositions.
\newblock {\em SIAM Journal on Matrix Analysis and Applications}, 30(2):844--881, 2008.

\bibitem{DuKJAD19a}
Simon Du, Akshay Krishnamurthy, Nan Jiang, Alekh Agarwal, Miroslav Dudik, and John Langford.
\newblock {Provably efficient {RL} with Rich Observations via Latent State Decoding}.
\newblock In {\em Proceedings of the 36th International Conference on Machine Learning}, volume~97 of {\em Proceedings of Machine Learning Research}, pages 1665--1674. PMLR, 09--15 Jun 2019.

\bibitem{foster2021}
Dylan Foster, Alexander Rakhlin, David Simchi-Levi, and Yunzong Xu.
\newblock {Instance-Dependent Complexity of Contextual Bandits and Reinforcement Learning: A Disagreement-Based Perspective}.
\newblock In {\em Proceedings of Thirty Fourth Conference on Learning Theory}, volume 134 of {\em Proceedings of Machine Learning Research}, pages 2059--2059. PMLR, 15--19 Aug 2021.

\bibitem{gheshlaghi2013minimax}
Mohammad Gheshlaghi~Azar, R{\'e}mi Munos, and Hilbert~J Kappen.
\newblock Minimax pac bounds on the sample complexity of reinforcement learning with a generative model.
\newblock {\em Machine learning}, 91:325--349, 2013.

\bibitem{golowich2022learning}
Noah Golowich, Ankur Moitra, and Dhruv Rohatgi.
\newblock {Learning in Observable POMDPs, without Computationally Intractable Oracles}.
\newblock In {\em Advances in Neural Information Processing Systems}, volume~35. Curran Associates, Inc., 2022.

\bibitem{goreinov1997theory}
Sergei~A Goreinov, Eugene~E Tyrtyshnikov, and Nickolai~L Zamarashkin.
\newblock A theory of pseudoskeleton approximations.
\newblock {\em Linear algebra and its applications}, 261(1-3):1--21, 1997.

\bibitem{jedra2023nearly}
Yassir Jedra, Junghyun Lee, Alexandre Proutiere, and Se-Young Yun.
\newblock Nearly optimal latent state decoding in block mdps.
\newblock In {\em International Conference on Artificial Intelligence and Statistics}, pages 2805--2904. PMLR, 2023.

\bibitem{jedra2024low}
Yassir Jedra, William R{\'e}veillard, Stefan Stojanovic, and Alexandre Proutiere.
\newblock Low-rank bandits via tight two-to-infinity singular subspace recovery.
\newblock In {\em Forty-first International Conference on Machine Learning}, 2024.

\bibitem{jiang2017}
Nan Jiang, Akshay Krishnamurthy, Alekh Agarwal, John Langford, and Robert~E. Schapire.
\newblock Contextual decision processes with low {B}ellman rank are {PAC}-learnable.
\newblock In {\em Proceedings of the 34th International Conference on Machine Learning}, volume~70 of {\em Proceedings of Machine Learning Research}, pages 1704--1713. PMLR, 06--11 Aug 2017.

\bibitem{kane2022computational}
Daniel Kane, Sihan Liu, Shachar Lovett, and Gaurav Mahajan.
\newblock Computational-statistical gap in reinforcement learning.
\newblock In {\em Proceedings of Thirty Fifth Conference on Learning Theory}, volume 178 of {\em Proceedings of Machine Learning Research}, pages 1282--1302. PMLR, 02--05 Jul 2022.

\bibitem{krishnamurthy2013low}
Akshay Krishnamurthy and Aarti Singh.
\newblock Low-rank matrix and tensor completion via adaptive sampling.
\newblock {\em Advances in neural information processing systems}, 26, 2013.

\bibitem{laskin20a}
Michael Laskin, Aravind Srinivas, and Pieter Abbeel.
\newblock {CURL: Contrastive Unsupervised Representations for Reinforcement Learning}.
\newblock In {\em Proceedings of the 37th International Conference on Machine Learning}, volume 119 of {\em Proceedings of Machine Learning Research}, pages 5639--5650. PMLR, 13--18 Jul 2020.

\bibitem{mackey2011distributed}
Lester~W Mackey, Ameet Talwalkar, and Michael~I Jordan.
\newblock Distributed matrix completion and robust factorization.
\newblock {\em J. Mach. Learn. Res.}, 16(1):913--960, 2015.

\bibitem{mahoney2009cur}
Michael~W Mahoney and Petros Drineas.
\newblock Cur matrix decompositions for improved data analysis.
\newblock {\em Proceedings of the National Academy of Sciences}, 106(3):697--702, 2009.

\bibitem{misra2020kinematic}
Dipendra Misra, Mikael Henaff, Akshay Krishnamurthy, and John Langford.
\newblock {Kinematic State Abstraction and Provably Efficient Rich-Observation Reinforcement Learning}.
\newblock In {\em Proceedings of the 37th International Conference on Machine Learning}, volume 119 of {\em Proceedings of Machine Learning Research}, pages 6961--6971. PMLR, 13--18 Jul 2020.

\bibitem{Modi21}
Aditya Modi, Jinglin Chen, Akshay Krishnamurthy, Nan Jiang, and Alekh Agarwal.
\newblock Model-free representation learning and exploration in low-rank mdps.
\newblock {\em Journal of Machine Learning Research}, 25(6):1--76, 2024.

\bibitem{negahban2012restricted}
Sahand Negahban and Martin~J Wainwright.
\newblock Restricted strong convexity and weighted matrix completion: Optimal bounds with noise.
\newblock {\em The Journal of Machine Learning Research}, 13:1665--1697, 2012.

\bibitem{recht2011simpler}
Benjamin Recht.
\newblock A simpler approach to matrix completion.
\newblock {\em Journal of Machine Learning Research}, 12(12), 2011.

\bibitem{ren2022spectral}
Tongzheng Ren, Tianjun Zhang, Lisa Lee, Joseph~E Gonzalez, Dale Schuurmans, and Bo~Dai.
\newblock Spectral decomposition representation for reinforcement learning.
\newblock In {\em Proc. of ICLR}, 2023.

\bibitem{rozada2024}
Sergio Rozada, Santiago Paternain, and Antonio~G. Marques.
\newblock Tensor and matrix low-rank value-function approximation in reinforcement learning.
\newblock {\em IEEE Transactions on Signal Processing}, 72:1634--1649, 2024.

\bibitem{sam2023overcoming}
Tyler Sam, Yudong Chen, and Christina~Lee Yu.
\newblock Overcoming the long horizon barrier for sample-efficient reinforcement learning with latent low-rank structure.
\newblock {\em Proceedings of the ACM on Measurement and Analysis of Computing Systems}, 7(2):1--60, 2023.

\bibitem{shah2020sample}
Devavrat Shah, Dogyoon Song, Zhi Xu, and Yuzhe Yang.
\newblock Sample efficient reinforcement learning via low-rank matrix estimation.
\newblock {\em Advances in Neural Information Processing Systems}, 33:12092--12103, 2020.

\bibitem{sidford2018}
Aaron Sidford, Mengdi Wang, Xian Wu, Lin Yang, and Yinyu Ye.
\newblock Near-optimal time and sample complexities for solving markov decision processes with a generative model.
\newblock In S.~Bengio, H.~Wallach, H.~Larochelle, K.~Grauman, N.~Cesa-Bianchi, and R.~Garnett, editors, {\em Advances in Neural Information Processing Systems}, volume~31. Curran Associates, Inc., 2018.

\bibitem{stojanovic2024spectral}
Stefan Stojanovic, Yassir Jedra, and Alexandre Proutiere.
\newblock Spectral entry-wise matrix estimation for low-rank reinforcement learning.
\newblock {\em Advances in Neural Information Processing Systems}, 36, 2024.

\bibitem{stooke21a}
Adam Stooke, Kimin Lee, Pieter Abbeel, and Michael Laskin.
\newblock {Decoupling Representation Learning from Reinforcement Learning}.
\newblock In {\em Proceedings of the 38th International Conference on Machine Learning}, volume 139 of {\em Proceedings of Machine Learning Research}, pages 9870--9879. PMLR, 18--24 Jul 2021.

\bibitem{sun2019}
Wen Sun, Nan Jiang, Akshay Krishnamurthy, Alekh Agarwal, and John Langford.
\newblock {Model-based RL in Contextual Decision Processes: PAC bounds and Exponential Improvements over Model-free Approaches}.
\newblock In {\em Proceedings of the Thirty-Second Conference on Learning Theory}, volume~99 of {\em Proceedings of Machine Learning Research}, pages 2898--2933. PMLR, 25--28 Jun 2019.

\bibitem{uehara2022}
Masatoshi Uehara, Xuezhou Zhang, and Wen Sun.
\newblock {Representation Learning for Online and Offline {RL} in Low-rank {MDP}s}.
\newblock In {\em International Conference on Learning Representations}, 2022.

\bibitem{wang2018provably}
Yining Wang and Aarti Singh.
\newblock Provably correct algorithms for matrix column subset selection with selectively sampled data.
\newblock {\em Journal of Machine Learning Research}, 18(156):1--42, 2018.

\bibitem{wu2022}
Yue Wu and Jes\'{u}s~A. De~Loera.
\newblock Geometric policy iteration for markov decision processes.
\newblock In {\em Proceedings of the 28th ACM SIGKDD Conference on Knowledge Discovery and Data Mining}, KDD '22, page 2070–2078, New York, NY, USA, 2022. Association for Computing Machinery.

\bibitem{yang2020Harnessing}
Yuzhe Yang, Guo Zhang, Zhi Xu, and Dina Katabi.
\newblock Harnessing structures for value-based planning and reinforcement learning.
\newblock In {\em International Conference on Learning Representations}, 2020.

\bibitem{zhang2022making}
Tianjun Zhang, Tongzheng Ren, Mengjiao Yang, Joseph Gonzalez, Dale Schuurmans, and Bo~Dai.
\newblock {Making Linear MDPs Practical via Contrastive Representation Learning}.
\newblock In {\em Proceedings of the 39th International Conference on Machine Learning}, volume 162 of {\em Proceedings of Machine Learning Research}, pages 26447--26466. PMLR, 17--23 Jul 2022.

\bibitem{zhang2022BMDP}
Xuezhou Zhang, Yuda Song, Masatoshi Uehara, Mengdi Wang, Alekh Agarwal, and Wen Sun.
\newblock {Efficient Reinforcement Learning in Block {MDP}s: A Model-free Representation Learning Approach}.
\newblock In {\em Proceedings of the 39th International Conference on Machine Learning}, volume 162 of {\em Proceedings of Machine Learning Research}, pages 26517--26547. PMLR, 17--23 Jul 2022.

\end{thebibliography}


\appendix

\section{Numerical Experiments}\label{app:num} 
All experiments in this section were performed on HP EliteBook 830 G8 with an Intel i7 core and 16 GB of RAM. Each experiment's runtime for individual realizations took at most 2-3 hours, and reproducing all results is feasible within a day.
\subsection{Parameters of the toy example in Figure \ref{fig1}}
\label{subsec:app_toy_experiment}
We considered an MDP with $\cardS = \cardA = 2$, $\gamma = 0.87$, a reward matrix given by
\begin{align*}
    r = \begin{bmatrix} -0.46 & -0.48\\ -0.14 &0.28 \end{bmatrix},
\end{align*}
and the following transition probabilities:
\begin{align*}
    P(s'\vert s,a = a_1) = \begin{bmatrix}
        0.4 &  0.6 \\ 0.15 & 0.85
    \end{bmatrix}
    \qquad 
    P(s'\vert s,a=a_2) = \begin{bmatrix}
        0.25 &  0.75 \\ 0.29 & 0.71
    \end{bmatrix}
\end{align*}
We initialized VI with $V^{(0)} = \begin{bmatrix}2.86 & 2.98 \end{bmatrix}^\top$. Note that $V_{\max} = \frac{\Rmax}{1-\gamma} = 3.69$ and thus $V^{(0)} \in [-V_{\max},V_{\max}]^2$. 

For this example, the condition numbers of the Q-functions induced by policies are $16.08, 4.38, 15.29, 12.07$, while the maximum condition number during value iteration is $\approx 2497.82$.

We stress here that this MDP is full-rank, and the purpose of this example is to demonstrate the potential instability of VI in the presence of large condition numbers. For low-rank MDPs, this corresponds to the matrix $Q^\pi$ having an effectively smaller rank than expected, and estimating all $d$ singular vectors despite $\sigma_1(Q^\pi)/\sigma_d(Q^\pi) \to \infty$.

\subsection{Matrix completion with leveraged anchors}
We consider matrix completion with a fixed matrix $M^\star$ to be estimated, testing four different methods. First, we test a method based on CUR-approximation with anchors chosen uniformly at random. Next, we have a method based on the estimation of leverage scores, where, for a given budget of samples, we use half of them for estimating leverage scores as described in the main text. Then, we consider a method with oracle anchors, where the anchors are chosen with respect to the true leverage scores. Lastly, we consider standard SVD decomposition, where we keep only the first $d$ largest singular values of the matrix.

\begin{figure}[htb]
\centering
\begin{subfigure}{.5\textwidth}
  \centering
  \includegraphics[width=1\linewidth]{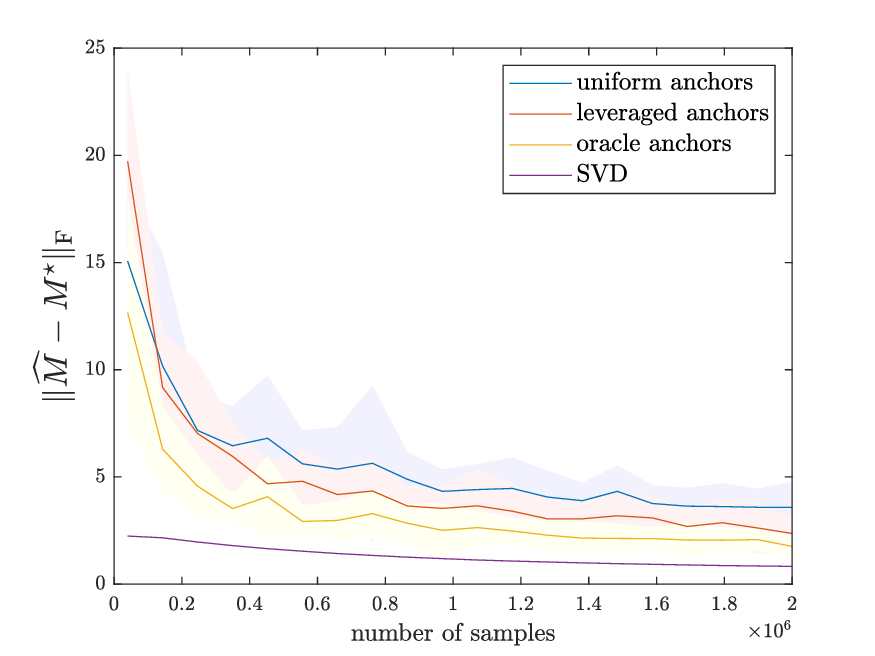}
\end{subfigure}%
\begin{subfigure}{.5\textwidth}
  \centering
  \includegraphics[width=1\linewidth]{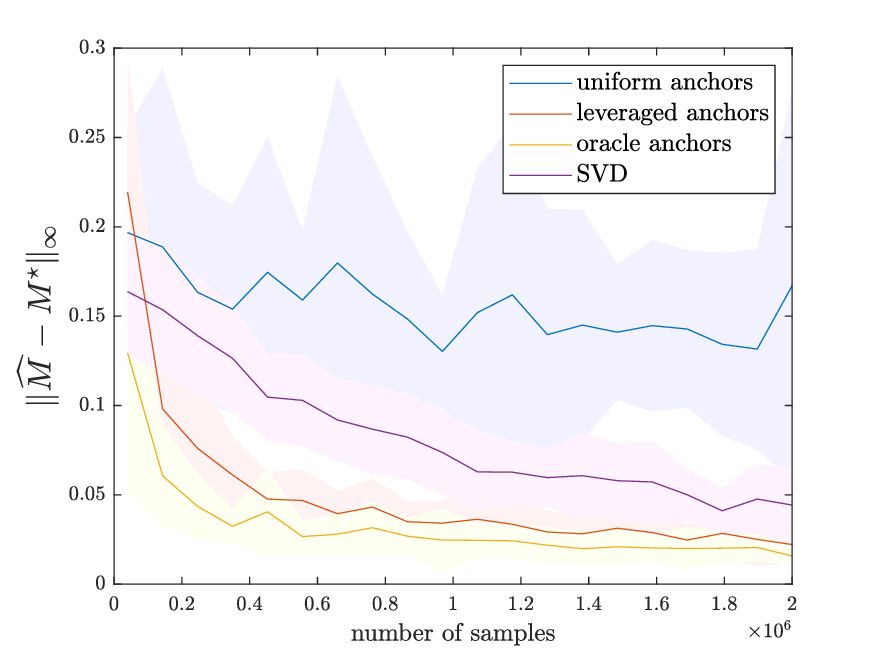}
\end{subfigure}
  \caption{Matrix completion: matrix $M^\star$ is of size $1000\times 1000$, rank $d=5$ and sampled entries have additive Gaussian noise with $\sigma = 0.01$. Number of anchors used was $\nachors = 10$. All plots are averaged over $30$ simulations and a new random matrix $M^\star$ was generated in every $5$ simulations.}
\label{fig_MC}
\end{figure}

As expected, CUR-based methods depend heavily on the quality of anchor selection. The gap between leverage-score-based anchors and oracle anchors is slight, even when half the samples are used to estimate the leverage scores. While SVD shows a smaller Frobenius error, it has higher entrywise error compared to CUR-based methods with good anchors.

\subsection{Leverage scores for VI and PI}
We demonstrate the importance of choosing anchors based on leverage scores for value iteration (VI) and policy iteration (PI). We postpone learning of the anchor states to the next subsections and assume that the true leverage scores of matrices $(Q^{(t)})_{t\geq 1}$ are given. For methods with leveraged anchors, anchors are chosen as those with the highest leverage scores (true leverage scores of $Q^{(t)}$). For uniform anchors, anchors are chosen uniformly without repetitions.  

\begin{figure}[htb]
\centering
\begin{subfigure}{.5\textwidth}
  \centering
  \includegraphics[width=1\linewidth]{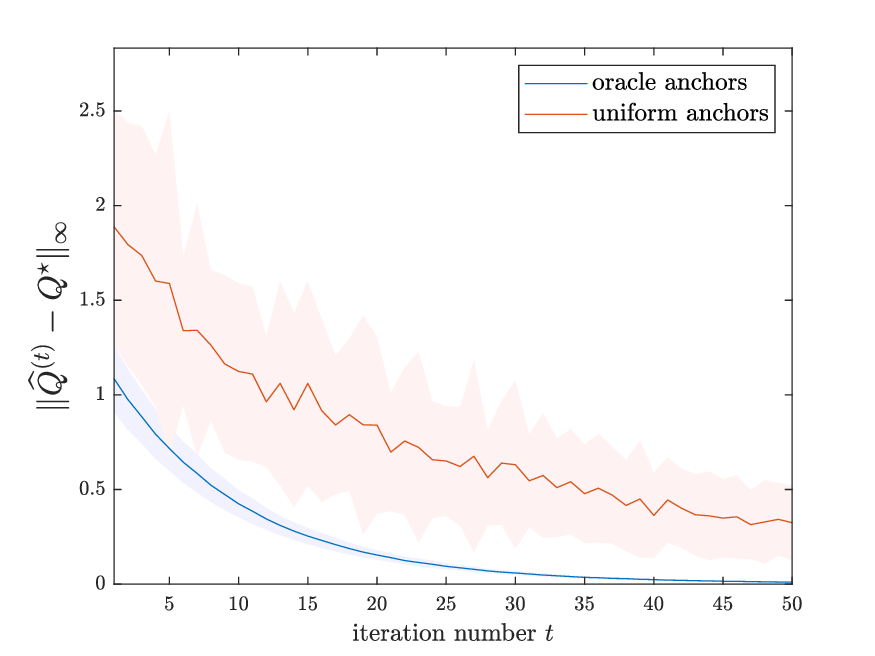}
  \label{fig_LS_VI}
  \caption{Value iteration with anchors.}
\end{subfigure}%
\begin{subfigure}{.5\textwidth}
  \centering
  \includegraphics[width=1\linewidth]{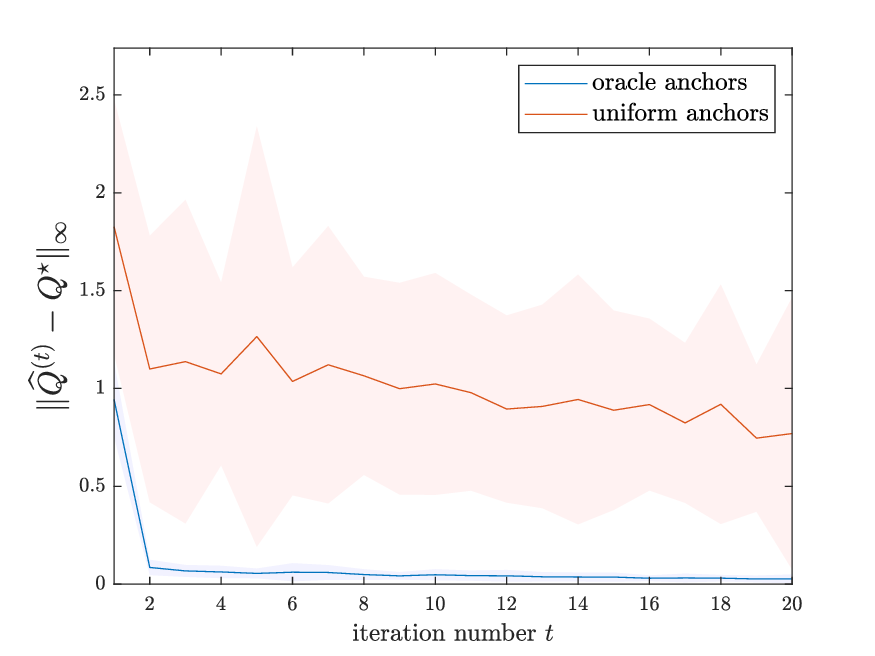}
  \label{fig_LS_PI}
\caption{Policy iteration with anchors.}
\end{subfigure}
  \caption{Matrix $Q^\star$ is obtained from rank $d=5$ rewards and transition matrices. Moreover, $\cardS=70,\cardA=50$, $\gamma = 0.9$, and we choose number of anchors $K = 15$. Observations are noisy with additive Gaussian noise with $\sigma = 0.01$. Plots are averaged over $100$ simulations, and new MDPs are generated every $5$ simulations, while the number of samples in an iteration $t$ is $10(1.1)^t$. }
\label{fig_LS}
\end{figure}

These results highlight that leveraged anchors reduce entrywise error significantly for general matrices. In contrast, uniform anchors show significant randomness, although the error decreases in expectation over iterations.

\subsection{Low-rank Value Iteration}
We evaluate a VI-based variant of Algorithm \ref{algo:lora}, that we refer to as \loravi. We do not assume prior knowledge of the matrices, and use samples to estimate leverage scores and matrices $(Q^{(t)})_{t\geq 1}$. 

\begin{figure}[htb]
\centering
\begin{subfigure}{.5\textwidth}
  \centering
  \includegraphics[width=1\linewidth]{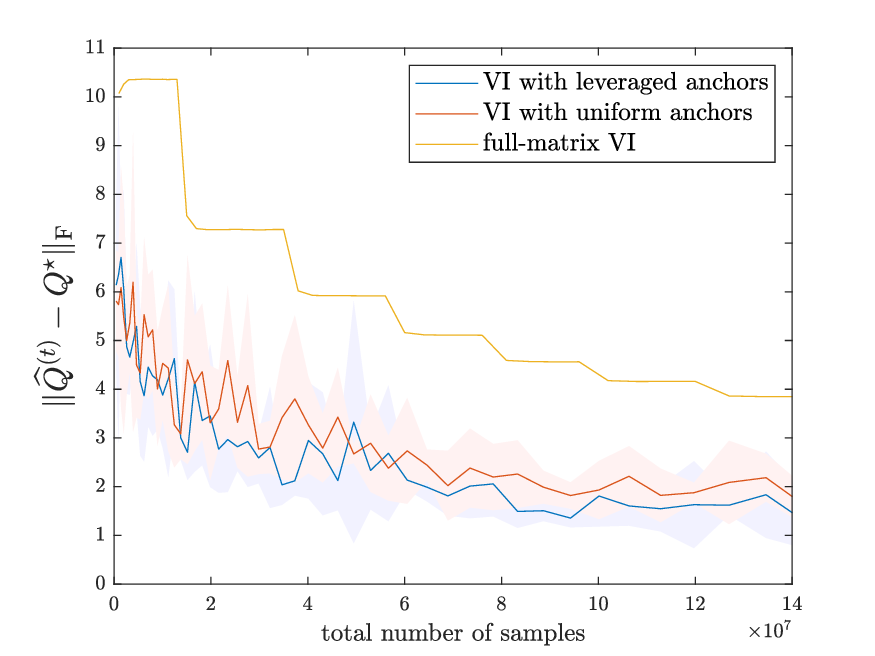}
\end{subfigure}%
\begin{subfigure}{.5\textwidth}
  \centering
  \includegraphics[width=1\linewidth]{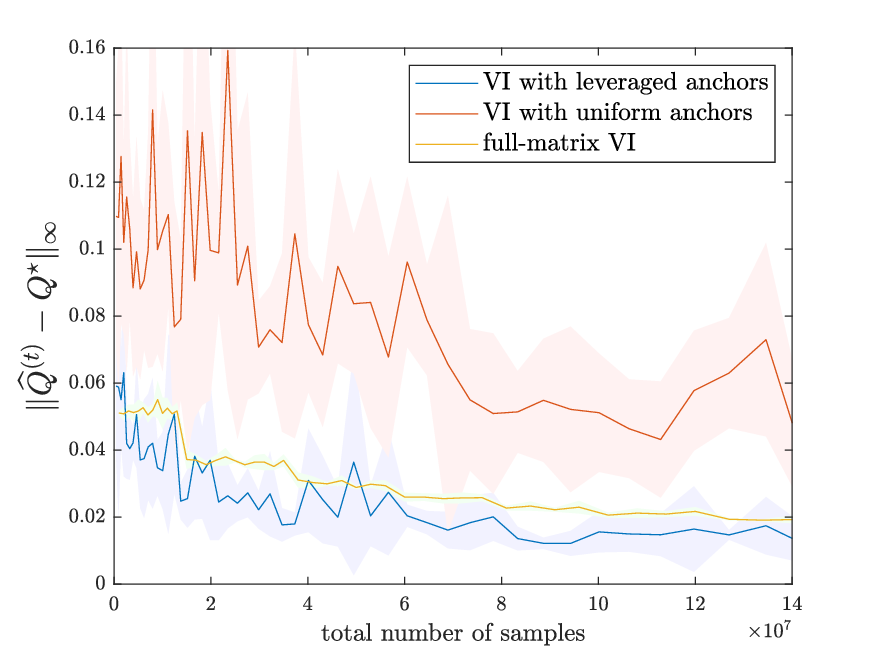}
\end{subfigure}
\caption{\loravi: $Q^\star$ generated from low-rank $r$ and $P$ of rank $d=4$, $S=A=1000$, $\gamma = 0.1$. We used $\nachors = 10$ anchors, $V^{(0)}=0$, rewards are noisy with Gaussian noise $\sigma = 0.01$. All plots are averaged over $5$ simulations, each consisting of $50$ epochs, and the number of samples in an epoch $t$ is approximately $ 20 (1.05)^t (S+A)\nachors$.}
\label{fig_full_VI}
\end{figure}

Even though we did not theoretically analyze the VI-based method in this work, for the reasons mentioned in Section \ref{subsec:PI_vs_VI}, we note that this method works well in practice for the settings considered in this study. We consider three methods: VI with leveraged anchors, where we use half of the experiences to estimate leverage scores and based on them sample the second half in a CUR-like fashion. Next, we consider VI with uniform anchors, where anchors are chosen uniformly at random without repetitions. And finally, we consider full-matrix VI, a standard VI approach without any matrix completion steps, where each entry of the matrix gets observed a certain number of times.

We see in Figure \ref{fig_full_VI} that VI with leveraged anchors achieves the best performance measured in Frobenius and entrywise norm. On the other hand, VI with uniform-anchors does not recover specific entries with high values well (as seen in the right figure), but because there are not too many entries with high values, it achieves decent performance in the Frobenius norm. Finally, even though full-matrix VI can observe all entries of the matrix, it still lags behind VI with leveraged anchors. We also want to remind the reader that VI with leveraged anchors uses only half of the available samples for matrix recovery, while the other half is used for learning the leverage scores.  

The algorithm used in the experimental section of [35] closely resembles our \loravi\ algorithm when uniform anchors are applied. As a result, the numerical results from [35] can be reproduced within our framework, which offers a more general and flexible setting.

\subsection{Low-rank Policy Iteration}
Finally, we experimentally study performance of the proposed algorithm \lorapi.

\begin{figure}[htb]
\centering
\begin{subfigure}{.5\textwidth}
  \centering
  \includegraphics[width=1\linewidth]{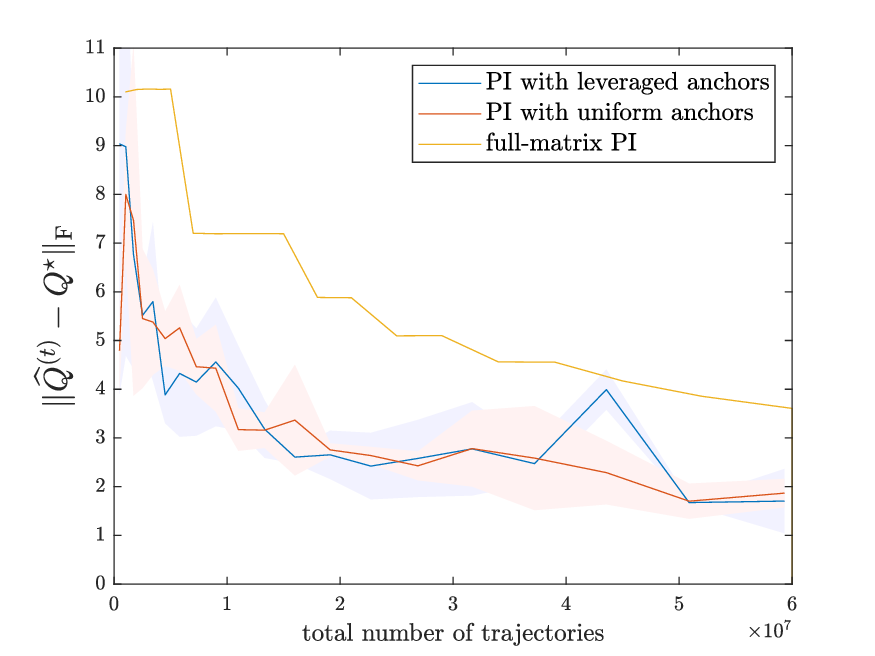}
\end{subfigure}%
\begin{subfigure}{.5\textwidth}
  \centering
  \includegraphics[width=1\linewidth]{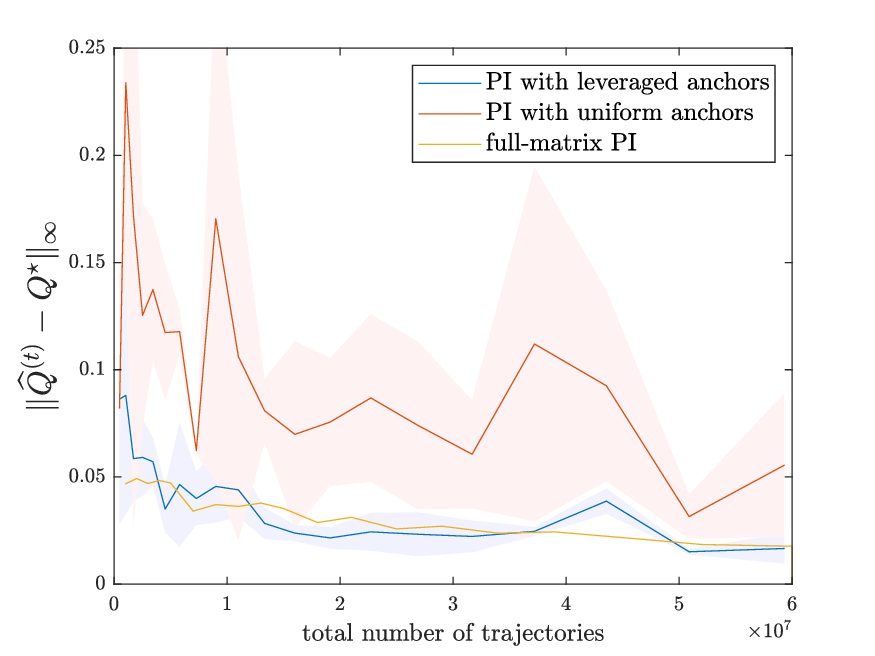}
\end{subfigure}
\caption{\lorapi: $Q^\star$ generated from low-rank $r$ and $P$ of rank $d=4$, $S=A=1000$, $\gamma = 0.1$, $\tau = 5$. We used $\nachors = 10$ anchors, uniformly random initial policy, and noisy rewards with Gaussian noise $\sigma = 0.01$. Plots for PI with anchors are averaged over $3$ simulations, while the one for full-matrix PI is simulated once. Each simulation consisted of $20$ epochs, and the number of samples in an epoch $t$ is approximately $ 10 (1.15)^t (S+A)\nachors$.}
\label{fig_full_PI}
\end{figure}

Similarly as in the previous subsection we study performance of three different methods using PI instead of VI this time, and the observed performance is similar to the one of VI-based methods. In contrast to the other methods, using leverage scores seems to ensure that Frobenius error behaves similarly to entrywise error, up to a scaling factor. This might be caused by uniform dispersion of the estimation error over the entries with large values for PI/VI with leveraged anchors.

The choice of $\gamma = 0.1$ is governed by an observation that this value of parameter $\gamma$ ensures that the largest singular values of $Q^\star$ are scaling similarly. In other words, it is a heuristic for ensuring small $\kappa$ needed for CUR-like methods. Furthermore, we believe that performance could be improved if a more tuned way of pseudoinversion is used. Namely, as $Q^{(t)}$ is effectively rank-deficient for many epochs, it is crucial to implement a stable way of calculating the pseudoinverse of $L\tilde{Q}(\Ianchors,\Janchors)R$, and make it dependent on the current epoch and the level of the estimation error.

Lastly, we believe that the performance of the proposed methods can be significantly improved (compared to full-matrix methods) for larger state-action spaces, as well as by implementing a more advanced way of distributing samples across epochs.  

\vfill
\newpage
\section{Leverage Scores Estimation Analysis}\label{sec:proof-lscores}

In this section we provide the proof of Theorem \ref{thm:leverage_scores}. The proof relies on a tight instance dependent row-wise guarantee on the singular subspace recovery which is provided in Theorem \ref{thm:approx_sing_vectors_guarantee} together with a proof. Throughout this section and for brevity, we use the notation
\begin{align*}
    \Ttau = \frac{T}{\tau+1}, \qquad \mathrm{and} \qquad \balpha =  \frac{\Rmax}{1-\gamma}\frac{\sqrt{\cardS\cardA}}{\sigma_d(Q^\pi )}
\end{align*}
in the entirety of this section to denote the number of sampled trajectories $\Ttau$, and spikiness-related parameter $\balpha$ (recall definition of spikiness $\alpha$ from Section \ref{subsec:coherence}). Furthermore, recall the truncated value matrix $Q^\pi_\tau$ defined in Section \ref{subsec:prelim}, and let us define its corresponding approximation error by $\Delta = Q_\tau^\pi - Q^\pi$.

\subsection{Instance-dependent row-wise singular subspace recovery}
Below, we present Theorem \ref{thm:row-wise-guarantee}, which as highlighted before, is crucial in deriving Theorem \ref{thm:leverage_scores}. 

\begin{theorem}\label{thm:row-wise-guarantee}
    If $\Ttau = \widetilde{\Omega}\left( \balpha^2(\cardS+\cardA)\right)$ and $\Vert \Delta \Vert_\op \leq \sigma_{d}(Q^\pi)/32$, then we have that the event: for every $i \in [\cardS]$, $j \in [\cardA]$ 
    \begin{align*}
        \Vert U_{i,:} - \hU_{i,:} O_{\widehat{U}} \Vert_2 & = \widetilde{O} \left[ \balpha\left( \sqrt{\frac{d}{\Ttau}} + \kappa \Vert U_{i,:} \Vert_2 \sqrt{\frac{\cardS+\cardA}{\Ttau}} \right)  +\frac{ \sqrt{\cardS+\cardA}\Vert \Delta \Vert_{\infty}}{\sigma_{d}(Q^\pi)}   + \kappa\Vert U_{i,:} \Vert_2 \frac{\Vert \Delta\Vert_\op}{\sigma_d(Q^\pi)} \right], \\
           \Vert W_{j,:} - \hW_{j,:} O_{\widehat{W}} \Vert_2 & = \widetilde{O} \left[ \balpha\left( \sqrt{\frac{d}{\Ttau}} + \kappa \Vert W_{j,:} \Vert_2 \sqrt{\frac{\cardS+\cardA}{\Ttau}} \right)  +\frac{ \sqrt{\cardS+\cardA}\Vert \Delta \Vert_{\infty}}{\sigma_{d}(Q^\pi)}   + \kappa\Vert W_{j,:} \Vert_2 \frac{\Vert \Delta\Vert_\op}{\sigma_d(Q^\pi)} \right],
    \end{align*}
    holds with probability at least $1-\delta$, where we define  $O_{\widehat{U}} = \hU^\top U$ and $O_{\widehat{W}} = \hW^\top W$.
    \label{thm:approx_sing_vectors_guarantee}
\end{theorem}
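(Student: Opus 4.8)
The plan is to view the empirical truncated matrix as a perturbation of the ground truth, writing $\widetilde{Q}^\pi_\tau = Q^\pi + \Delta + E$ where $\Delta = Q^\pi_\tau - Q^\pi$ is the deterministic truncation bias (controlled by Lemma~\ref{lem:truncated-Q}) and $E = \widetilde{Q}^\pi_\tau - Q^\pi_\tau$ is the zero-mean stochastic error. Since the statement is symmetric in $U$ and $W$, I would work with the Hermitian dilation $\begin{bmatrix} 0 & \widetilde{Q}^\pi_\tau \\ (\widetilde{Q}^\pi_\tau)^\top & 0\end{bmatrix}$ so that left/right singular vectors become eigenvectors of one self-adjoint matrix and both displayed bounds follow from a single argument (the $W$-bound by transposition). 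The guiding identity is $U_{i,:} - \widehat{U}_{i,:} O_{\widehat{U}} = \big((I - \widehat{U}\widehat{U}^\top)U\big)_{i,:}$, so the target is the $i$-th row norm of the residual of $U$ onto the estimated subspace.

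First I would establish global spectral control. Each of the $N = \Ttau/2$ trajectories contributes a rank-one term $\tfrac{SA}{N} g_k\, e_{s_{k,0}} e_{a_{k,0}}^\top$ with $g_k = \sum_{t \le \tau}\gamma^t r^\pi_{k,t}$, $|g_k| \le r_{\max}/(1-\gamma)$, and uniformly random support; matrix Bernstein then yields $\Vert E \Vert_{\op} = \widetilde{O}\big(\sigma_d(Q^\pi)\,\balpha\sqrt{(S+A)/\Ttau}\big)$ with probability $\ge 1 - \delta$. Together with the hypotheses $\Vert \Delta \Vert_{\op} \le \sigma_d(Q^\pi)/32$ and $\Ttau = \widetilde{\Omega}(\balpha^2(S+A))$, the total perturbation $\Vert \Delta + E \Vert_{\op}$ becomes a small fraction of $\sigma_d(Q^\pi)$. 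A Weyl argument then certifies the correct rank and a preserved singular gap, and Wedin's theorem gives $\Vert \sin\Theta \Vert_{\op} \lesssim (\Vert E \Vert_{\op} + \Vert \Delta \Vert_{\op})/\sigma_d(Q^\pi)$; this is the global step that consumes the sample-size requirement.

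The core is to expand $\widehat{U}_{i,:}O_{\widehat{U}}$ using $\widehat{U} = \widetilde{Q}^\pi_\tau \widehat{W}\widehat{\Sigma}^{-1}$ and $(Q^\pi)_{i,:} = U_{i,:}\Sigma W^\top$ into four pieces. (a) A \emph{signal-alignment} term $U_{i,:}\big(\Sigma W^\top \widehat{W}\widehat{\Sigma}^{-1}O_{\widehat{U}} - I\big)$, whose operator factor is $\lesssim \kappa \Vert \sin\Theta\Vert_{\op}$ because of the $\Sigma\widehat{\Sigma}^{-1}$ scaling; plugging in the Wedin bound produces exactly the two $\Vert U_{i,:}\Vert_2$-weighted terms $\balpha\kappa\Vert U_{i,:}\Vert_2\sqrt{(S+A)/\Ttau}$ (from $\Vert E \Vert_{\op}$) and $\kappa\Vert U_{i,:}\Vert_2\Vert \Delta\Vert_{\op}/\sigma_d(Q^\pi)$ (from $\Vert \Delta\Vert_{\op}$). (b) A \emph{first-order noise} term $E_{i,:}\widehat{W}\widehat{\Sigma}^{-1}$; here I would decouple via a leave-one-out matrix $\widetilde{Q}^{(i)}$, obtained by deleting the stochastic part of row $i$ of $E$, whose right subspace $\widehat{W}^{(i)}$ is independent of the trajectories starting in state $i$. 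Writing $E_{i,:}\widehat{W} = E_{i,:}\widehat{W}^{(i)}O' + E_{i,:}(\widehat{W} - \widehat{W}^{(i)}O')$, the decoupled part is a sum of independent conditionally-centred $d$-vectors, and a vector Bernstein inequality with variance proxy $\Vert \widehat{W}^{(i)}\Vert_{\F}^2/A = d/A$ gives the flat term $\balpha\sqrt{d/\Ttau}$. (c) The cross term is bounded by $\Vert E_{i,:}\Vert_2 \Vert \widehat{W} - \widehat{W}^{(i)}O'\Vert_{\op}/\sigma_d(Q^\pi)$ and is higher order. (d) The \emph{bias} term $\Delta_{i,:}\widehat{W}\widehat{\Sigma}^{-1}$ contributes $\sqrt{S+A}\,\Vert \Delta\Vert_\infty/\sigma_d(Q^\pi)$ via $\Vert \Delta_{i,:}\Vert_2 \le \sqrt{A}\,\Vert \Delta\Vert_\infty$. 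Summing (a)--(d) and taking a union bound over the $S$ rows (polylog factors absorbed in $\widetilde{O}$) gives the stated inequality, and the argument for $W$ is identical after transposition.

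The main obstacle is the leave-one-out step in (b): I must show the closeness $\Vert \widehat{W} - \widehat{W}^{(i)}O'\Vert_{\op}$ is small enough that the cross term (c) is genuinely negligible, which means propagating the effect of a single deleted row through the preserved singular gap while keeping only a $\Vert U_{i,:}\Vert_2$- (not $\Vert U\Vert_{2\to\infty}$-) sensitive dependence; this is precisely what makes the bound instance-dependent and hence usable for leverage-score estimation. A second difficulty is that the sampling model is not i.i.d. across entries — each trajectory lands on a single uniformly random entry, so the per-entry counts are multinomial and the entries of $E$ are dependent — so both the matrix-Bernstein bound on $\Vert E\Vert_{\op}$ and the vector-Bernstein variance proxy $d/A$ for the decoupled term must be derived directly from the trajectory decomposition rather than quoted from standard i.i.d. matrix-completion results. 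Everything else is routine bookkeeping.
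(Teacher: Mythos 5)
Your proposal is correct and follows essentially the same route as the paper's proof: symmetrize via the Hermitian dilation, obtain global control of $\Vert \bfE \Vert_\op$ and the $\sin\Theta$ distance under the sample-size condition $\Ttau = \widetilde{\Omega}(\balpha^2(S+A))$, then run a row-specific leave-one-out decomposition that yields exactly the paper's four contributions --- the signal-alignment term $\kappa \Vert U_{i,:}\Vert_2 (\Vert \bfE\Vert_\op + \Vert \bfDelta \Vert_\op)/\sigma_d(Q^\pi)$, the decoupled first-order noise $\balpha\sqrt{d/\Ttau}$, a higher-order cross term controlled by leave-one-out proximity, and the truncation bias $\sqrt{S+A}\,\Vert \Delta \Vert_\infty/\sigma_d(Q^\pi)$ --- while correctly flagging the two genuine difficulties the paper also addresses (keeping the leave-one-out proximity bound $\Vert U_{i,:}\Vert_2$-sensitive rather than $\Vert U \Vert_{2\to\infty}$-dependent, and Poissonizing the multinomial trajectory-sampling dependence). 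The only divergence is cosmetic routing within the same framework: you expand the estimated subspace via $\bfhU = \tM \bfhU \bfhSigma^{-1}$ and decouple $\bfE_{i,:}\bfhU$ directly, whereas the paper expands the true subspace via $\bfU = (\tM_d - \bfE)\bfU\bfSigma^{-1}$ and applies the leave-one-out surrogate to the residual term $\bfE(\bfU - \bfhU\bfhU^\top\bfU)$; both are standard, interchangeable variants of the same leave-one-out analysis.
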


\begin{corollary}
    If $\Vert \Delta \Vert_{\infty} \leq \min \left\{ \frac{\Rmax}{1-\gamma} \sqrt{\frac{d(S\wedge A)}{\Ttau}}, \frac{\sigma_d(Q^\pi)}{32\sqrt{SA}} \right\}$ and $\Ttau = \widetilde{\Omega}\left( \balpha^2(\cardS+\cardA)\right)$, then w.h.p:
    \begin{align*}
        \Vert U_{i,:} - \hU_{i,:}(\hU^\top U) \Vert_2 = \widetilde{O} \left[ \balpha \left( \sqrt{\frac{d}{\Ttau}} + \sqrt{\frac{\cardS+\cardA}{\Ttau}} \kappa \Vert U_{i,:} \Vert_2    \right)  \right].
    \end{align*}
    An analogous inequality holds for $\Vert W_{i,:} - \hW_{i,:}(\hW^\top W) \Vert_2$.
    \label{corr:perturbation_vectors_delta_vanished}
\end{corollary}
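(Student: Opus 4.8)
The plan is to derive the Corollary directly from the row-wise guarantee of Theorem~\ref{thm:row-wise-guarantee}, by showing that under the additional entry-wise control on $\Delta$ the two residual terms of that theorem that involve $\Delta$ are dominated by the two ``sampling'' terms $\balpha\sqrt{d/\Ttau}$ and $\balpha\kappa\Vert U_{i,:}\Vert_2\sqrt{(S+A)/\Ttau}$. I would organize everything around the two parts of the hypothesis $\Vert\Delta\Vert_\infty\le\min\{A_1,A_2\}$, with $A_1=\frac{\Rmax}{1-\gamma}\sqrt{d(S\wedge A)/\Ttau}$ and $A_2=\sigma_d(Q^\pi)/(32\sqrt{SA})$, since each plays a distinct role.

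First I would check that the hypotheses of Theorem~\ref{thm:row-wise-guarantee} hold. The sample-size condition $\Ttau=\widetilde\Omega(\balpha^2(S+A))$ is assumed verbatim. For the operator-norm hypothesis $\Vert\Delta\Vert_\op\le\sigma_d(Q^\pi)/32$, I would invoke the elementary chain $\Vert\Delta\Vert_\op\le\Vert\Delta\Vert_{\F}\le\sqrt{SA}\,\Vert\Delta\Vert_\infty$ together with $\Vert\Delta\Vert_\infty\le A_2$; this is exactly why $A_2$ carries the factor $1/\sqrt{SA}$. With the hypotheses in place, Theorem~\ref{thm:row-wise-guarantee} gives, with probability at least $1-\delta$ and for every $i$ (writing $O_{\widehat{U}}=\hU^\top U$),
\[
\Vert U_{i,:}-\hU_{i,:}O_{\widehat{U}}\Vert_2=\widetilde O\!\left[\balpha\sqrt{\tfrac{d}{\Ttau}}+\balpha\kappa\Vert U_{i,:}\Vert_2\sqrt{\tfrac{S+A}{\Ttau}}+\tfrac{\sqrt{S+A}\,\Vert\Delta\Vert_\infty}{\sigma_d(Q^\pi)}+\kappa\Vert U_{i,:}\Vert_2\tfrac{\Vert\Delta\Vert_\op}{\sigma_d(Q^\pi)}\right].
\]

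The remaining work is to absorb the last two terms into the first two. For the third term I would substitute $\Vert\Delta\Vert_\infty\le A_1$ and simplify with $\sqrt{(S+A)(S\wedge A)}\le\sqrt{2SA}$ and the definition of $\balpha$; this turns $\frac{\sqrt{S+A}\,\Vert\Delta\Vert_\infty}{\sigma_d(Q^\pi)}$ into $\widetilde O(\balpha\sqrt{d/\Ttau})$, which is precisely the first sampling term, and this computation is what dictates the shape of $A_1$. For the fourth term I would again pass from the operator norm to the entry-wise norm via $\Vert\Delta\Vert_\op\le\sqrt{SA}\,\Vert\Delta\Vert_\infty$, insert $\Vert\Delta\Vert_\infty\le A_1$, and observe that the factor $\kappa\Vert U_{i,:}\Vert_2$ already appears in the target, so it suffices to dominate $\frac{\Vert\Delta\Vert_\op}{\sigma_d(Q^\pi)}\le\balpha\sqrt{d(S\wedge A)/\Ttau}$ by $\balpha\sqrt{(S+A)/\Ttau}$. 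Collecting the four bounds yields the stated expression, and the claim for $W$ follows identically from the symmetric statement in Theorem~\ref{thm:row-wise-guarantee}.

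I expect the fourth term to be the only delicate point. The conversion $\Vert\Delta\Vert_\op\le\sqrt{SA}\,\Vert\Delta\Vert_\infty$ is lossy, and after inserting $A_1$ one is left with $\balpha\kappa\Vert U_{i,:}\Vert_2\sqrt{d(S\wedge A)/\Ttau}$, which exceeds the stated second term $\balpha\kappa\Vert U_{i,:}\Vert_2\sqrt{(S+A)/\Ttau}$ by a factor $\sqrt{d(S\wedge A)/(S+A)}$ (of order $\sqrt d$ when $S\asymp A$). To make the statement literally tight one must either fold this $\sqrt d$ into the suppressed lower-order factors, or tighten the threshold to $A_1=\frac{\Rmax}{1-\gamma}\sqrt{d/((S\wedge A)\Ttau)}$, which makes both residual terms vanish cleanly at the cost of a stronger assumption on $\Vert\Delta\Vert_\infty$. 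In either reading the dominance is genuine; since the Corollary only feeds into the leverage-score bound of Theorem~\ref{thm:leverage_scores}, where such polynomial-in-$d$ factors are reabsorbed, I would carry the $A_1$-based bound throughout and flag this mild slack explicitly rather than chase an exact constant.
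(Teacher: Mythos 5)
Your proposal is correct and follows exactly the route the paper intends: the paper gives no separate proof of this corollary, treating it as a direct substitution of the hypotheses into Theorem~\ref{thm:row-wise-guarantee} --- verifying $\Vert \Delta \Vert_\op \le \sqrt{SA}\,\Vert \Delta \Vert_\infty \le \sigma_d(Q^\pi)/32$ via the second branch of the minimum, and absorbing the two $\Delta$-dependent residual terms into the sampling terms via the first branch --- which is precisely your argument. The $\sqrt{d(S\wedge A)/(S+A)}$ slack you flag when converting $\Vert \Delta \Vert_\op \le \sqrt{SA}\,\Vert \Delta \Vert_\infty$ in the fourth term is a genuine and correctly diagnosed looseness that the paper silently sweeps into its $\widetilde{O}(\cdot)$ notation; it is indeed harmless in the only place the corollary is used (the proof of Theorem~\ref{thm:leverage_scores}, where the actual truncation error $\Vert \Delta\Vert_\infty \le r_{\max}/T$ is far below the threshold $A_1$, so both $\Delta$-terms are in fact negligible), and your suggested tightening of $A_1$ is a legitimate alternative fix.
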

It is a simple algebraic exercise to show that $\epsilon = \Vert \Delta \Vert_{\infty}$ from \eqref{eq:eps_tau} satisfies condition of the corollary above in given regime of $\Ttau$.

\subsection{Proof of Theorem \ref{thm:leverage_scores}}
\label{app:proof_theorem_leverage}
\begin{proof}
    First we consider those states with $\Vert U_{s,:} \Vert_2^2 > \frac{d}{4S}$. From Corollary \ref{corr:perturbation_vectors_delta_vanished} we obtain that for these states and large enough $\Ttau$:
    \begin{align*}
        \Vert U_{s,:} - \hU_{s,:}(\hU^\top U) \Vert_2 \leq  c_1 \balpha \left( \sqrt{\frac{d}{\Ttau}} + \sqrt{\frac{\cardS+\cardA}{\Ttau}} \kappa \Vert U_{s,:} \Vert_2    \right) \log^{3/2} \left(\frac{\Ttau(\cardS+\cardA)}{\delta}\right)
    \end{align*}
    w.h.p. and for some universal constant $c_1 > 0$. Since $\Vert U_{s,:} \Vert_2 > \sqrt{\frac{d}{4S}}$ this implies that $\Vert U_{s,:} \Vert_2 > \sqrt{\frac{d}{4(S+A)\kappa^2}}$, we can simplify last inequality as follows:
   \begin{align*}
        \Vert U_{s,:} - \hU_{s,:}(\hU^\top U) \Vert_2 \leq  2 c_1 \balpha \sqrt{\frac{\cardS+\cardA}{\Ttau}} \kappa \Vert U_{s,:} \Vert_2  \log^{3/2} \left(\frac{\Ttau(\cardS+\cardA)}{\delta}\right)
    \end{align*}
    Next, for $\Ttau \geq 50 c_1^2 \balpha^2 (S+A) \kappa^2 \log^3\left(\frac{\Ttau(\cardS+\cardA)}{\delta}\right)$, we have: $\Vert U_{s,:} - \hU_{s,:}(\hU^\top U) \Vert_2 \leq ( 1- \frac{1}{\sqrt{2}}) \Vert U_{s,:} \Vert_2$. Finally, using reverse triangle inequality we obtain:
    \begin{align*}
        \Vert \hU_{s,:}\Vert_2^2 \geq ( \Vert U_{s,:}\Vert_2 - \Vert U_{s,:} - \hU_{s,:}(\hU^\top U) \Vert_2)^2 \geq \frac{1}{2}\Vert U_{s,:}\Vert_2^2
    \end{align*}
    and thus:
    \begin{align*}
        \tilde{\ell}_s =  \Vert \hU_{s,:} \Vert_2^2 \vee \frac{d}{S}  \geq  \Vert \hU_{s,:} \Vert_2^2 \geq \frac{1}{2}\Vert U_{s,:}\Vert_2^2
    \end{align*}
    
    Now we consider states with $\Vert U_{s,:} \Vert_2^2 \leq \frac{d}{4S}$. Again, by means of Corollary \ref{corr:perturbation_vectors_delta_vanished} we get w.h.p:
   \begin{align*}
        \Vert U_{s,:} - \hU_{s,:}(\hU^\top U) \Vert_2 \leq 2 c_1 \balpha \kappa \sqrt{\frac{d}{\Ttau}} \sqrt{\frac{S+A}{S}} \log^{3/2} \left(\frac{\Ttau(\cardS+\cardA)}{\delta}\right) \leq \sqrt{\frac{d}{4S}} 
    \end{align*}
    for $\Ttau\geq 16 c_1^2 \balpha^2 \kappa^2 (S+A)\log^3\left(\frac{\Ttau(\cardS+\cardA)}{\delta}\right)$. Thus we obtain that for all $s$ with $\Vert U_{s,:} \Vert_2^2 \leq \frac{d}{4S}$, it also holds:
    \begin{align*}
        \Vert \hU_{s,:} \Vert_2 \leq  \Vert U_{s,:} \Vert_2 + \Vert U_{s,:} - \hU_{s,:}(\hU^\top U) \Vert_2 \leq \sqrt{\frac{d}{S}} 
    \end{align*}
    Since by definition $\tilde{\ell}_s \geq \frac{d}{S}$, we obtain that $\tilde{\ell}_s \geq \Vert U_{s,:} \Vert_2^2$ for states with $\Vert U_{s,:} \Vert_2^2 \leq \frac{d}{4S}$.

    Finally, we show similar inequalities hold for leverage scores $\ell$ and $\hat{\ell}$. Namely, we have:
    \begin{align*}
        \hat{\ell}_s = \frac{\tilde{\ell}_s}{ \Vert \tilde{\ell} \Vert_1} &\geq \frac{\tilde{\ell}_s}{ \sum_{i: \Vert \hU_{i,:} \Vert_2^2 > \frac{d}{S}} \Vert \hU_{i,:} \Vert_2^2 + \sum_{j: \Vert \hU_{j,:} \Vert_2^2 \leq \frac{d}{S}} \frac{d}{S}}  \\ &\geq   \frac{\tilde{\ell}_s}{ \sum_{i =1}^S \Vert \hU_{i,:} \Vert_2^2 + S \frac{d}{S}} = \frac{\tilde{\ell}_s}{2d}   \geq \frac{\Vert U_{s,:}\Vert_2^2}{4d} = \frac{1}{4} \ell_s
    \end{align*}
    where we used first part of the proof for the final inequality.

\end{proof}

\subsection{Proof of Theorem \ref{thm:approx_sing_vectors_guarantee}}
\label{proof:row-wise-guarantee}

Proof is based on leave-one-out method used for proving entry.wise guarantees for singular vectors of SVD estimates. We refer the interested reader to \cite{chen2021spectral} for a comprehensive survey about the method. Here, we repeat the main arguments of the proof and improve the analysis in the following two ways:
\begin{itemize}
    \item[(i)] We keep track of approximation error during the whole proof in order to be able to apply it to approximately low rank matrix $Q_\tau^\pi$;
    \item[(ii)] Instead of showing guarantees in $\Vert \cdot \Vert_{2\to\infty}$ norm, we prove row/column specific guarantees. Indeed, note that our guarantees do not depend explicitly on the incoherence parameter $\mu$
    but instead the guarantee specific to the row vector $U_{i,:}$ of the singular subspace $U$,  depends only on its own incoherence parameter, i.e., $\Vert U_{i,:} \Vert_2$. This enables us to do leverage score analysis and obtain Theorem \ref{thm:leverage_scores}.  
\end{itemize}

Leave-one-out method is applied to symmetric matrices, so we first redefine our matrices in this context. For a matrix $Q^{\pi} \in \mathbb{R}^{\cardS \times \cardA}$ with SVD $Q^\pi = U\Sigma W^\top$ define symmetrizated matrix $\Md$ as follows:    \begin{align*}
        \Md =  \begin{bmatrix}
        0 & Q^\pi\\
        (Q^\pi)^\top & 0
    \end{bmatrix}    
     = \underbrace{\frac{1}{\sqrt{2}} \begin{bmatrix}
            U & U \\ W & -W
        \end{bmatrix}}_{\mathbf{U}}
        \underbrace{
        \begin{bmatrix}
            \Sigma & 0\\
            0 & -\Sigma
        \end{bmatrix}}_{\mathbf{\Sigma}}
        \underbrace{\frac{1}{\sqrt{2}} \begin{bmatrix}
            U& U \\ W & -W
        \end{bmatrix}^\top}_{\mathbf{U}^\top} 
    \end{align*}
    and similarly define symmetrized matrix $M$ from $Q_\tau^\pi$, $\bfDelta$ from $\Delta$, $\hM_d$ from $\hQ^\pi$, and $\bfE$ from $E = \tQ^\pi_\tau - Q^\pi_\tau$. Note that, using this notation, we have $M = \Md + \bfDelta $, with rank $d' = 2d$ matrix $\Md$ and $\Vert \bfDelta \Vert_{\infty} = \Vert \Delta \Vert_{\infty},\Vert \bfDelta \Vert_{\op} = \Vert \Delta \Vert_{\op} $. Assume observation matrix is given by $\tM = M + \bfE = \Md + \bfDelta + \bfE$ and note that $\hM_d = \bfhU \bfhSigma \bfhU^\top$ and $\hM_d = \tM \bfhU \bfhU^\top $. Thus, in order to prove lemma, it is sufficient to show: 
    \begin{align*}
        \Vert \bfU_{i,:} - \bfhU_{i,:}(\bfhU^\top \bfU) \Vert_2 \lesssim  \left( \sqrt{d'} + \kappa \Vert \bfU_{i,:} \Vert_2 \sqrt{\cardS+\cardA} \right)&\frac{\Rmax}{1-\gamma}\frac{\sqrt{SA}}{\sqrt{\Ttau}\sigma_{d'}(M)}\log^{3/2} \left(\frac{\Ttau(\cardS+\cardA)}{\delta}\right) \\
        & +\frac{ \sqrt{\cardS+\cardA}\Vert \bfDelta \Vert_{\infty}}{\sigma_{d'}(M)}  + \kappa\Vert \bfU_{i,:} \Vert_2 \frac{\Vert \bfDelta\Vert_\op}{\sigma_{d'}(M)} 
    \end{align*}

    For now, let us assume that $\Vert \bfE \Vert_\op \leq \sigma_{d'}(M)/32$, and prove that such inequality holds for $\Ttau$ large enough (consequence of Proposition \ref{prop:concentration}). 
    
    Define $\tM_d = \Md + \bfE$. Fix any $i \in [\cardS+\cardA]$ and for any matrix $A \in \mathbb{R}^{(\cardS+\cardA)\times n}$ define seminorm: $\Vert A\Vert_{2,i} = \Vert A_{i,:}\Vert_2$. Now using that $\bfU = \Md \bfU \bfSigma^{-1} = (\tM_d - \bfE) \bfU \bfSigma^{-1}$, we have:
    \begin{align*}
        \Vert \bfU - \bfhU \bfhU^\top \bfU \Vert_{2,i}  \le \frac{\Vert \tM_d \bfU - \bfhU (\bfhU^\top \bfU) \bfSigma \Vert_{2,i} }{\sigma_{d'}(M)} + \frac{\Vert \bfE \bfU \Vert_{2,i} }{\sigma_{d'}(M)} 
\end{align*}

To bound the numerator of the first term, 
note that $\bfhU (\bfhU^{\top} \bfU) \bfSigma = \bfhU \bfhU^{\top}\Md \bfU = \bfhU \bfhU^{\top} (\tM - \bfDelta - \bfE)  \bfU$. Since $\bfhU^{\top}\tM=\bfhU^{\top}\hM_d=\bfhSigma \bfhU^{\top},$
we get 
\begin{align*}
    \bfhU\bfhU^{\top} \bfU\bfSigma&=\bfhU\bfhSigma\bfhU^{\top} \bfU - \bfhU \bfhU^{\top} (\bfE+\bfDelta) \bfU \\
    &=\tM\bfhU \bfhU^{\top} \bfU - \bfhU \bfhU^{\top} (\bfE + \bfDelta) \bfU \\
    &=\tM_d\bfhU \bfhU^{\top} \bfU + \bfDelta\bfhU \bfhU^{\top} \bfU -\bfhU\bfhU^{\top} (\bfE + \bfDelta) \bfU \\
    &=\tM_d\bfhU \bfhU^{\top} \bfU + \bfDelta(\bfhU \bfhU^{\top} \bfU- \bfU) + \bfDelta \bfU -\bfhU \bfhU^{\top} (\bfE+\bfDelta) \bfU 
\end{align*}

Consequently, for $(\star) := \Vert \tM_d \bfU  - \bfhU (\bfhU^\top \bfU) \bfSigma \Vert_{2,i}$ we have:
   \begin{align*}
        (\star)
         \le 
         \Vert \Md (\bfU  -  \bfhU \bfhU^\top \bfU ) \Vert_{2,i}    &+  \Vert \bfE (\bfU  -  \bfhU \bfhU^\top \bfU ) \Vert_{2,i} + \Vert \bfDelta(\bfU-\bfhU \bfhU^{\top} \bfU)\Vert_{2,i} \\ &+ \Vert \bfDelta \bfU \Vert_{2,i} + \Vert 
         \bfhU \bfhU^\top (\bfE+\bfDelta) \bfU \Vert_{2,i}
    \end{align*}
    Throughout the proof we use Davis-Kahan's inequality (Corollary 2.8 in \cite{chen2021spectral}): if $\Vert \bfE \Vert_\op < (1-1/\sqrt{2})\sigma_{d'}(M)$, then:
    \begin{align*}
        \Vert \bfU - \bfhU \bfhU^\top \bfU \Vert_\op \leq \frac{2\Vert \bfE\Vert_\op}{\sigma_{d'}(M)}
    \end{align*}
    Now, similarly to (39) in \cite{stojanovic2024spectral} and using that $\Vert AB\Vert_{2,i} \leq \Vert A\Vert_{2,i} \Vert B\Vert_\op$ we obtain:
    \begin{align*}
        \Vert \Md (\bfU - \bfhU \bfhU^\top \bfU)\Vert_{2,i} &\leq 4\Vert \bfU \Vert_{2,i} \Vert \bfSigma \Vert_\op \frac{ \Vert \bfE \Vert_\op^2  }{\sigma_{d'}^2(M)} \\
        \Vert \bfDelta ( \bfU - \bfhU \bfhU^\top \bfU)\Vert_{2,i} &\leq 2\Vert \bfDelta \Vert_{2,i} \frac{ \Vert \bfE \Vert_\op  }{\sigma_{d'}(M)}\\
        \Vert \bfDelta \bfU\Vert_{2,i} &\leq \Vert \bfDelta \Vert_{2,i} \Vert \bfU \Vert_\op \leq\Vert \bfDelta \Vert_{2,i}
    \end{align*}

Using these inequalities together with Lemma \ref{lemma:technical_term_LOO_analysis} and $\frac{\Vert \bfE \Vert_\op + \Vert \bfDelta \Vert_\op}{\sigma_{d'}(M)} \leq 1/16$ we obtain:
      \begin{align*}
        (\star) 
        \le  2\Vert \bfDelta \Vert_{2,i} +  \frac{9}{2}\Vert \bfU \Vert_{2,i}  \frac{\Vert \bfSigma \Vert_\op}{\sigma_{d'}(M)} (\Vert \bfE\Vert_\op + \Vert \bfDelta \Vert_\op)  +  2\Vert \bfE (\bfU  -  \bfhU \bfhU^\top \bfU ) \Vert_{2,i} + \Vert \bfE  \bfU \Vert_{2,i} 
    \end{align*}
    which in the end gives 
       \begin{align}
        \Vert \bfU - \bfhU \bfhU^\top \bfU \Vert_{2,i} \le \frac{2}{\sigma_{d'}(M)} \Big(5 \Vert \bfU \Vert_{2,i} & \frac{\Vert \bfSigma \Vert_\op}{\sigma_{d'}(M)} (\Vert \bfE\Vert_\op + \Vert \bfDelta \Vert_\op) + \Vert \bfE \bfU\Vert_{2,i} \nonumber \\ &+  \Vert \bfE (\bfU - \bfhU \bfhU^\top  \bfU )\Vert_{2,i} + \Vert \bfDelta \Vert_{2,i} \bigg)
        \label{eq:Uperp_ineq_midstep}
    \end{align}
    \paragraph{Leave-one-out decomposition:}
    Define matrix $\tMi$ as follows:
   \begin{align*}
        \tMi_{j,k} = \begin{cases}
            \tM_{j,k},\quad &\text{if } j\neq i \text{ or } k \neq i \\
            M_{j,k}, \quad &\text{otherwise}
        \end{cases}
    \end{align*}
    and let $\bfhUi$ denote matrix of $d'$ dominant singular vectors of $\tMi$. Then, by triangle inequality we can write:
    \begin{align*}
        \Vert \bfE (\bfU - \bfhU \bfhU^\top  \bfU )\Vert_{2,i} \leq \Vert \bfE (\bfU - \bfhUi (\bfhUi)^\top  \bfU )\Vert_{2,i} + \Vert \bfE \Vert_\op \Vert \bfhUi (\bfhUi)^\top \bfU - \bfhU \bfhU^\top  \bfU \Vert_{\F}
    \end{align*}
    We bound the last term using Lemma \ref{lemma:LOO_analysis_frobenius} to obtain:
     \begin{align}
        \Vert \bfE (\bfU - \bfhU \bfhU^\top  \bfU )\Vert_{2,i} \leq 2\Vert \bfE &(\bfU - \bfhUi (\bfhUi)^\top  \bfU )\Vert_{2,i} \nonumber \\&+ 
        6\frac{\Vert \bfE \Vert_\op}{\sigma_{d'}(M)} \bigg( \Vert \bfE \bfU \Vert_{2,i} + 2\Vert \bfE\Vert_\op (\Vert \bfU \Vert_{2,i} + \Vert \bfU - \bfhU (\bfhU^\top \bfU) \Vert_{2,i}) \bigg)
        \label{eq:EUpert_midstep}
    \end{align}
    Substituting \eqref{eq:EUpert_midstep} into \eqref{eq:Uperp_ineq_midstep} we obtain:
    \begin{align*}
         \Vert \bfU - \bfhU \bfhU^\top \bfU \Vert_{2,i} \le \frac{12}{\sigma_{d'}(M)} \Big( \Vert \bfU \Vert_{2,i} & \frac{\Vert \bfSigma \Vert_\op}{\sigma_{d'}(M)} (\Vert \bfE\Vert_\op + \Vert \bfDelta \Vert_\op)  + \Vert \bfE \bfU\Vert_{2,i}  \\ &+  \Vert \bfE (\bfU - \bfhUi (\bfhUi)^\top  \bfU )\Vert_{2,i}  + \Vert \bfDelta \Vert_{2,i} \bigg)
    \end{align*}
    Then we apply Proposition \ref{prop:concentration} on $\Vert \bfE \bfU\Vert_{2,i}$ and $\Vert \bfE (\bfU - \bfhUi (\bfhUi)^\top  \bfU )\Vert_{2,i}$. We use that $\Vert \bfU \Vert_{\F} \leq \sqrt{d'}$ and $\Vert \bfU - \bfhUi (\bfhUi)^\top \bfU \Vert_{\F} \leq 2\sqrt{d'}$. Finally, we use that $\Vert \bfDelta \Vert_{2,i} \leq \sqrt{S+A} \Vert \Delta \Vert_{\infty}$, the fact that
    \begin{align*}
        \sigma_{d'}(M) \geq \sigma_{d'}(\Md) - \Vert \bfDelta \Vert_\op \geq \sigma_{d'}(\Md)(1-1/32) 
    \end{align*}
    and thus $\frac{\Vert \bfSigma\Vert_\op}{\sigma_{d'}(M)} \leq 2\frac{\Vert \Md \Vert_\op}{\sigma_{d'}(\Md)} \leq 2\kappa$.

\subsection{Rank estimation guarantee}
\label{app:subsec_rank_estimation}
Recall from Section \ref{subsec:l-scores-est} and Proposition \ref{thm:approx_sing_vectors_guarantee_informal} that, given the singular values $(\hsigma_i)_i$ of our estimates $\tQ_\tau^\pi$, we estimate effective rank as follows: $\hat{d} = \sum_{i=1}^{S \wedge A} \indicator{\lbrace \hat{\sigma}_i \ge \beta \rbrace}$ with
\begin{align*}
    \beta =  \sqrt{\frac{ r_{\max}^2 \cardS \cardA (S+ A)}{ (1-\gamma)^3 T} \log^4\left( \frac{  (S+A)T}{ (1-\gamma)
    \delta}\right) }    + \frac{\Rmax\sqrt{SA}}{T} 
\end{align*}

Here we repeat first part of the Proposition \ref{thm:approx_sing_vectors_guarantee_informal} and prove it:\begin{lemma}    
If $T$ satisfies \eqref{eq:T_requirement_beta}, then estimated rank $\hat{d}$ satisfies $\hat{d} = d_\pi$ with probability at least $1-\delta$.
\label{lemma:rank_estimation_appendix}
\end{lemma}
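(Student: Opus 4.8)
The plan is to control the singular values of the empirical matrix $\tilde Q^\pi_\tau$ by a perturbation argument and then show that the threshold $\beta$ cleanly separates the $d_\pi$ nonzero singular values of $Q^\pi$ from the zero ones. Write the total perturbation as $\tilde Q^\pi_\tau - Q^\pi = \Delta + E$, where $\Delta = Q^\pi_\tau - Q^\pi$ is the deterministic truncation bias and $E = \tilde Q^\pi_\tau - Q^\pi_\tau$ is the zero-mean statistical noise. Since $\hat\sigma_i = \sigma_i(\tilde Q^\pi_\tau)$, Weyl's inequality gives $\vert \hat\sigma_i - \sigma_i(Q^\pi)\vert \le \Vert \Delta\Vert_\op + \Vert E\Vert_\op$ for every $i$. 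As $Q^\pi$ has rank exactly $d_\pi$, we have $\sigma_i(Q^\pi)=0$ for $i>d_\pi$ and $\sigma_i(Q^\pi)\ge \sigma_{d_\pi}(Q^\pi)$ for $i\le d_\pi$, so the lemma will follow once we establish, on a single high-probability event, the two inequalities $\Vert \Delta\Vert_\op + \Vert E\Vert_\op \le \beta$ and $\beta \le \sigma_{d_\pi}(Q^\pi)/2$.

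First I would bound the bias term deterministically. By Lemma \ref{lem:truncated-Q} with the choice $\epsilon = r_{\max}/T$ from \eqref{eq:eps_tau}, we have $\Vert \Delta\Vert_\infty \le r_{\max}/T$, hence $\Vert \Delta\Vert_\op \le \sqrt{SA}\,\Vert \Delta\Vert_\infty \le r_{\max}\sqrt{SA}/T$, which is precisely the second additive term of $\beta$ in \eqref{eq:threshold}. Next I would invoke the matrix concentration bound of Proposition \ref{prop:concentration} to control $\Vert E\Vert_\op$. The entries of $\tilde Q^\pi_\tau$ are averages over $N\asymp T/((1-\gamma)^{-1}\log(T/(1-\gamma)))$ trajectories of truncated returns bounded by $O(r_{\max}/(1-\gamma))$, rescaled by $SA/N$; a matrix-Bernstein computation shows that $\Vert E\Vert_\op$ concentrates at the order $\sqrt{r_{\max}^2 SA(S+A)/((1-\gamma)^3 T)}$ up to polylogarithmic factors, which is exactly the first additive term of $\beta$. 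Combining the two bounds yields $\Vert \Delta\Vert_\op + \Vert E\Vert_\op \le \beta$ with probability at least $1-\delta$, which immediately gives the upper-threshold half: for every $i>d_\pi$, $\hat\sigma_i \le \Vert \Delta\Vert_\op + \Vert E\Vert_\op \le \beta$, so these indices are discarded and $\hat d \le d_\pi$.

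For the lower-threshold half I would use the sample-complexity requirement \eqref{eq:T_requirement_beta}. Substituting $T = \widetilde{\Omega}_\delta\!\left(r_{\max}^2 SA(S+A)/(\sigma_{d_\pi}^2(Q^\pi)(1-\gamma)^3)\right)$ into the formula for $\beta$ and taking the hidden polylogarithmic constant large enough, both additive terms of $\beta$ become at most a small constant fraction of $\sigma_{d_\pi}(Q^\pi)$, so that $\beta \le \sigma_{d_\pi}(Q^\pi)/2$. Consequently, for every $i\le d_\pi$ we have $\hat\sigma_i \ge \sigma_{d_\pi}(Q^\pi) - (\Vert \Delta\Vert_\op + \Vert E\Vert_\op) \ge \sigma_{d_\pi}(Q^\pi) - \beta \ge \beta$, so these indices are retained and $\hat d \ge d_\pi$. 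Together with the previous step, this gives $\hat d = d_\pi$ on the same event of probability at least $1-\delta$.

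The main obstacle is the operator-norm control of the noise matrix $E$ in the second step: $E$ is a sum of independent, rank-one, heavy-tailed contributions (one per sampled trajectory, with a random landing entry), so obtaining the sharp $\sqrt{(S+A)/T}$-type scaling with the correct dependence on $r_{\max}$, $(1-\gamma)^{-1}$, and polylogarithmic factors that must align with $\beta$ requires a careful matrix-Bernstein (or matrix-Chernoff) argument, which is precisely the content of Proposition \ref{prop:concentration}. Everything else---the Weyl perturbation bookkeeping and the elementary algebra verifying $\beta \le \sigma_{d_\pi}(Q^\pi)/2$ under \eqref{eq:T_requirement_beta}---is routine.
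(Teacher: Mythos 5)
Your proposal is correct and follows essentially the same route as the paper's proof: the identical decomposition $\widetilde{Q}^\pi_\tau - Q^\pi = \Delta + E$, Weyl's inequality, the bound $\Vert \Delta \Vert_\op \le \sqrt{SA}\,\Vert \Delta \Vert_\infty \le r_{\max}\sqrt{SA}/T$ via Lemma \ref{lem:truncated-Q}, the operator-norm control of $E$ via Proposition \ref{prop:concentration}, and the final algebraic check that \eqref{eq:T_requirement_beta} forces $2\beta \le \sigma_{d_\pi}(Q^\pi)$. No gaps; your identification of the concentration bound on $\Vert E \Vert_\op$ as the only non-routine ingredient matches where the paper, too, defers to its Proposition \ref{prop:concentration}.
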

\begin{proof}
    By our assumptions $\sigma_i(Q^\pi) > 0$ only for $i\in[d_\pi]$, and thus $\forall i> d_\pi:\
        \hsigma_i \leq \Vert \tQ_\tau^\pi - Q^\pi \Vert_\op$ and $\forall i\leq d_\pi:\
        \hsigma_i \geq \sigma_{d_\pi}(Q^\pi) - \Vert \tQ_\tau^\pi - Q^\pi \Vert_\op$. Recall that $E = \tQ^\pi_\tau - Q^\pi_\tau$ and $\Delta = Q_\tau^\pi - Q^\pi$, and thus:
        \begin{align*}
            \Vert \tQ_\tau^\pi - Q^\pi \Vert_\op \leq \Vert E\Vert_\op + \Vert \Delta \Vert_\op
        \end{align*}
    We bound the second term by $\Vert \Delta \Vert_\op \leq \sqrt{\cardS \cardA}\Vert \Delta \Vert_{\infty}$ and use that  $\Vert \Delta \Vert_{\infty} \leq \frac{\Rmax}{T}$ from Lemma \ref{lem:truncated-Q}. The first term is upper bounded by Proposition \ref{prop:concentration}. Combining the two, we obtain that
    $\Vert \tQ_\tau^\pi - Q^\pi \Vert_\op \leq \beta$ with high probability. Thus, for $2\beta \leq \sigma_{d_\pi}(Q^\pi)$ we are guaranteed to recover the true rank $d_\pi$, since then $\forall i\in[d_\pi]$:
    \begin{align*}
        \hsigma_i \geq \sigma_{d_\pi}(Q^\pi) - \Vert \tQ_\tau^\pi - Q^\pi \Vert_\op \geq 2\beta - \beta \geq \beta
    \end{align*}

    It is straightforward to check that, if
     \begin{align*}
          T = {\Omega}\left(\frac{ r_{\max}^2  \cardS\cardA (\cardS+\cardA) } { (1-\gamma)^3  \sigma_{d}^2(Q^\pi)} \log^4\left( \frac{  (S+A)T}{ (1-\gamma)
    \delta}\right) \right) 
      \end{align*}
      then first term in definition of $\beta$ is $\leq \sigma_{d_\pi}(Q^\pi)/4$, and similar conclusion hold for the second term after noting that $\frac{\Rmax\sqrt{SA}}{\sigma_{d_\pi}(Q^\pi)}(S+A)\geq 1$.
\end{proof}

    \subsection{Technical lemmas from the proof of Theorem \ref{thm:approx_sing_vectors_guarantee}}
    In this section we shortly present concentration results used in the proof of Theorem \ref{thm:approx_sing_vectors_guarantee}. We refer reader to Section \ref{app:noise_equivalence} for discussion about equivalent noise model and Poisson approximation. Concentration inequalities proposed are fairly standard (see for example \cite{chen2021spectral}), but as discussed in \cite{stojanovic2024spectral} because of the sampling pattern, entries of the matrix $\bfE$ are slightly dependent. A way to deal with these dependencies has been discussed in \cite{stojanovic2024spectral}, and we refer to the results from that paper here directly. 
    
    \begin{proposition}
      Let $B$ be a $(\cardS+\cardA) \times 2d$ matrix independent of $\bfE$. Then, we have for all $\delta\in (0,1)$, for all $
            \Ttau \gtrsim (\cardS+\cardA) \log^3\left( (\cardS+\cardA)/\delta\right)$, both events:
            \begin{align*}
                 \Vert \bfE \Vert_\op  \lesssim  \frac{\Rmax}{1-\gamma}  \sqrt{\frac{\cardS \cardA}{\Ttau}} \sqrt{ \cardS+\cardA }   \log^{3/2} \left(\frac{\Ttau(\cardS+\cardA)}{\delta}\right)  
            \end{align*}  
            \begin{align*}
                \forall i \in [\cardS+\cardA]:\quad 
                \Vert \bfE_{i,:} B \Vert_\op
                \lesssim \frac{\Rmax}{1-\gamma} \Vert B \Vert_{\F} \sqrt{\frac{\cardS \cardA}{\Ttau}} \log^{3/2}\left( \frac{\Ttau(\cardS+\cardA)}{\delta}\right)  
            \end{align*}
            hold with probability at least $1-\delta$.  
            \label{prop:concentration}
    \end{proposition}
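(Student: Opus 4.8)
The plan is to write the noise matrix as a sum of independent, bounded, rank-one contributions---one per sampled trajectory---decouple the sampling dependencies via Poissonization, and then invoke matrix Bernstein for the operator-norm bound and a vector Bernstein inequality for the row-wise bound. Concretely, I would first decompose the statistical error $E = \widetilde{Q}^\pi_\tau - Q^\pi_\tau = \sum_{k=1}^{\Ttau} (X_k - \EE[X_k])$, where $X_k = \frac{\cardS\cardA}{\Ttau}\, G_k\, e_{s_{k,0}} e_{a_{k,0}}^\top$ is the contribution of the $k$-th trajectory and $G_k = \sum_{t=0}^\tau \gamma^t r^\pi_{k,t}$ its truncated return. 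Since the starting pair is uniform and $\EE[G_k \mid (s,a)] = Q^\pi_\tau(s,a)$, the summands are centered with $\EE[X_k] = Q^\pi_\tau/\Ttau$; moreover $|G_k| \le \Rmax/(1-\gamma)$ gives the uniform range $\Vert X_k \Vert_\op \le \frac{\cardS\cardA}{\Ttau}\frac{\Rmax}{1-\gamma} =: L$. The symmetrized matrix $\bfE$ is built from $E$ and inherits the same per-summand scale.

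\textbf{Decoupling the dependencies (the main obstacle).} The central difficulty is that fixing the sampling budget fixes the total number $\Ttau$ of trajectories, so the per-entry counts follow a multinomial law and the entries of $E$ are negatively dependent rather than independent. As flagged in the statement and detailed in Appendix \ref{app:noise_equivalence}, I would pass to an equivalent Poissonized model in which the number of trajectories started at each entry is an independent $\mathrm{Poisson}(\Ttau/(\cardS\cardA))$ variable; this makes the entries of $E$ (hence the two off-diagonal blocks of $\bfE$) independent and zero-mean. Transferring the concentration statements back to the fixed-budget model and controlling the residual dependence follows the machinery of \cite{stojanovic2024spectral}, and it is exactly this decoupling that costs an extra logarithmic factor, upgrading the naive $\log^{1/2}$ of Bernstein to the claimed $\log^{3/2}$.

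\textbf{Operator-norm bound.} Working in the Poissonized model, the summands are independent and I would apply matrix Bernstein to $\bfE$. The variance proxy is $v = \Vert \EE[\bfE^2]\Vert_\op = \max\{\Vert \EE[EE^\top]\Vert_\op, \Vert \EE[E^\top E]\Vert_\op\}$. A compound-Poisson computation gives $\mathrm{Var}(E_{s,a}) \le \frac{\cardS\cardA}{\Ttau}\frac{\Rmax^2}{(1-\gamma)^2}$, and by independence $\EE[EE^\top]$ is diagonal with entries $\sum_a \mathrm{Var}(E_{s,a}) \le \cardA \cdot \frac{\cardS\cardA}{\Ttau}\frac{\Rmax^2}{(1-\gamma)^2}$ (and symmetrically for $E^\top E$), so $v \le \frac{\cardS\cardA(\cardS+\cardA)}{\Ttau}\frac{\Rmax^2}{(1-\gamma)^2}$. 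The Bernstein tail $\Vert \bfE \Vert_\op \lesssim \sqrt{v \log(\cdot)} + L \log(\cdot)$ then has a variance term $\sqrt{v\log(\cdot)}$ matching the claimed bound up to the transfer factor noted above, while the hypothesis $\Ttau \gtrsim (\cardS+\cardA)\log^3$ forces $L^2 \log \lesssim v$, so the range term $L\log(\cdot)$ is dominated.

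\textbf{Row-wise bound.} For the second inequality I would fix $i$ and exploit that $B$ is independent of $\bfE$. The $i$-th row of $\bfE$ is supported on a single off-diagonal block, so $\bfE_{i,:}B = \sum_a E_{i,a}\,\tilde B_a$ is a sum of independent scalar-times-vector terms, where the $\tilde B_a$ are rows of $B$. Its second moment is $\sum_a \mathrm{Var}(E_{i,a})\Vert \tilde B_a \Vert_2^2 \le \frac{\cardS\cardA}{\Ttau}\frac{\Rmax^2}{(1-\gamma)^2}\Vert B \Vert_\F^2$, using $\sum_a \Vert \tilde B_a \Vert_2^2 \le \Vert B \Vert_\F^2$, and the range of each contribution is similarly controlled by $L$ times a row norm of $B$. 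A vector Bernstein inequality then yields $\Vert \bfE_{i,:}B \Vert_2 \lesssim \frac{\Rmax}{1-\gamma}\Vert B \Vert_\F \sqrt{\cardS\cardA/\Ttau}\,\log^{3/2}(\cdot)$, and a union bound over $i \in [\cardS+\cardA]$, absorbed into the polylog, gives the stated uniform-in-$i$ guarantee. Once independence is secured by Poissonization both Bernstein applications reduce to the variance and range computations above; the genuinely delicate part is the decoupling step.
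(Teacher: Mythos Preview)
Your proposal is correct and follows essentially the same route as the paper: the paper's proof is just a pointer to Propositions~26 and~27 of \cite{stojanovic2024spectral} together with the Poissonized noise model of Appendix~\ref{app:noise_equivalence}, and what you have written is precisely a reconstruction of that argument (Poissonization to decouple the multinomial dependence, matrix Bernstein for $\Vert\bfE\Vert_\op$, a Bernstein-type bound for each row $\bfE_{i,:}B$ with variance controlled by $\Vert B\Vert_\F^2$, union bound over $i$). The only cosmetic difference is that the paper phrases the range term for the row-wise bound via $\Vert B\Vert_{2\to\infty}\le\Vert B\Vert_\F$ rather than ``$L$ times a row norm of $B$,'' which amounts to the same thing.
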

    \begin{proof}
    Follows straightforwardly from proofs of Propositions 26 and 27 in \citep{stojanovic2024spectral} and using noise equivalent model from Section \ref{app:noise_equivalence}. Note that we keep the variance term upper bounded by $\Vert A\Vert_\F^2$ in the proof of Proposition 27 as in \cite{stojanovic2024spectral}, and make use of inequality $\Vert B\Vert_{2\to\infty} \leq \Vert B\Vert_{\F}$ to obtain dependence on $\Vert B\Vert_\F$ in the second inequality.
    \end{proof}
    
    \begin{lemma}
    If $\Vert \bfE \Vert \le \sigma_{d'}(M)/32$, then for every $i$:
        \begin{align*}
             \Vert \bfhU \bfhU^\top (\bfE+\bfDelta) \bfU \Vert_{2,i} \leq 4\frac{\Vert \bfE \Vert_\op + \Vert \bfDelta \Vert_\op}{\sigma_{d'}(M)} \Big(  \Vert \bfU \Vert_{2,i} \Vert \bfSigma \Vert_\op &+ \Vert \bfDelta \Vert_{2,i} \\ & + \Vert \bfE  \bfU \Vert_{2,i}  +  \Vert \tM ( \bfU - \bfhU \bfhU^\top \bfU)\Vert_{2,i} \Big)
        \end{align*}
        \label{lemma:technical_term_LOO_analysis}
    \end{lemma}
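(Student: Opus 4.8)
The plan is to exploit the defining eigen-relation of $\bfhU$, namely $\tM\bfhU=\bfhU\bfhSigma$ (valid since $\tM$ is symmetric), equivalently $\bfhU=\tM\bfhU\bfhSigma^{-1}$, in order to manufacture a factor $\bfhSigma^{-1}$ of order $\sigma_{d'}(M)^{-1}$ out of the \emph{leftmost} copy of $\bfhU$. Substituting this into the quantity of interest gives the factorization
\begin{align*}
\bfhU\bfhU^\top(\bfE+\bfDelta)\bfU = \tM\bfhU\, B, \qquad B:=\bfhSigma^{-1}\bfhU^\top(\bfE+\bfDelta)\bfU,
\end{align*}
so that $\Vert\bfhU\bfhU^\top(\bfE+\bfDelta)\bfU\Vert_{2,i}\le\Vert\tM\bfhU\Vert_{2,i}\,\Vert B\Vert_\op$ by the sub-multiplicativity $\Vert AB\Vert_{2,i}\le\Vert A\Vert_{2,i}\Vert B\Vert_\op$ already used elsewhere in the proof. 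The point is that $B$ carries the small prefactor, while $\tM\bfhU$ is re-expressed to recover exactly the four terms inside the parentheses.

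First I would bound $\Vert B\Vert_\op$. Since $\bfhSigma$ collects the $d'$ dominant eigenvalues of $\tM=M+\bfE$ in magnitude, $\Vert\bfhSigma^{-1}\Vert_\op=\sigma_{d'}(\tM)^{-1}$, and Weyl's inequality gives $\sigma_{d'}(\tM)\ge\sigma_{d'}(M)-\Vert\bfE\Vert_\op\ge\tfrac{31}{32}\sigma_{d'}(M)$ under the hypothesis $\Vert\bfE\Vert_\op\le\sigma_{d'}(M)/32$. Combined with $\Vert\bfhU^\top\Vert_\op=\Vert\bfU\Vert_\op=1$ and $\Vert\bfE+\bfDelta\Vert_\op\le\Vert\bfE\Vert_\op+\Vert\bfDelta\Vert_\op$, this yields $\Vert B\Vert_\op\le\tfrac{32}{31}(\Vert\bfE\Vert_\op+\Vert\bfDelta\Vert_\op)/\sigma_{d'}(M)$, which is the source of the $(\Vert\bfE\Vert_\op+\Vert\bfDelta\Vert_\op)/\sigma_{d'}(M)$ gain.

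Next I would control $\Vert\tM\bfhU\Vert_{2,i}$ by transporting $\bfhU$ back onto the population subspace $\bfU$. Using the identity $\bfhU=\bfhU\bfhU^\top\bfU(\bfhU^\top\bfU)^{-1}$ together with $\bfhU\bfhU^\top\bfU=\bfU-(\bfU-\bfhU\bfhU^\top\bfU)$, one gets $\tM\bfhU=\big(\tM\bfU-\tM(\bfU-\bfhU\bfhU^\top\bfU)\big)(\bfhU^\top\bfU)^{-1}$. Expanding $\tM\bfU=\Md\bfU+(\bfE+\bfDelta)\bfU=\bfU\bfSigma+\bfE\bfU+\bfDelta\bfU$ and bounding row-seminorms via $\Vert\bfU\bfSigma\Vert_{2,i}\le\Vert\bfU\Vert_{2,i}\Vert\bfSigma\Vert_\op$ and $\Vert\bfDelta\bfU\Vert_{2,i}\le\Vert\bfDelta\Vert_{2,i}$ gives $\Vert\tM\bfU\Vert_{2,i}\le\Vert\bfU\Vert_{2,i}\Vert\bfSigma\Vert_\op+\Vert\bfE\bfU\Vert_{2,i}+\Vert\bfDelta\Vert_{2,i}$, whence
\begin{align*}
\Vert\tM\bfhU\Vert_{2,i}\le\Big(\Vert\bfU\Vert_{2,i}\Vert\bfSigma\Vert_\op+\Vert\bfE\bfU\Vert_{2,i}+\Vert\bfDelta\Vert_{2,i}+\Vert\tM(\bfU-\bfhU\bfhU^\top\bfU)\Vert_{2,i}\Big)\Vert(\bfhU^\top\bfU)^{-1}\Vert_\op .
\end{align*}
The factor $\Vert(\bfhU^\top\bfU)^{-1}\Vert_\op$ is a benign constant: its reciprocal equals $\sigma_{\min}(\bfhU^\top\bfU)=\cos\theta_{\max}=\sqrt{1-\Vert\bfU-\bfhU\bfhU^\top\bfU\Vert_\op^2}$, and the Davis--Kahan bound already invoked in the main proof controls $\Vert\bfU-\bfhU\bfhU^\top\bfU\Vert_\op$ by a small constant (at most $1/16$) in the stated regime, so $\Vert(\bfhU^\top\bfU)^{-1}\Vert_\op$ is within a hair of $1$.

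Finally I would multiply the two bounds: the parenthesised factor above is exactly the quantity appearing in the statement, and the product of the residual constants $\tfrac{32}{31}\cdot\Vert(\bfhU^\top\bfU)^{-1}\Vert_\op$ stays comfortably below $4$, which delivers the claim. The only genuinely conceptual step is the opening factorization, which converts the innocuous projection $\bfhU\bfhU^\top$ into the decisive $\sigma_{d'}(M)^{-1}$ gain; everything afterward is routine. The main (if minor) obstacle is the constant bookkeeping — ensuring the Weyl estimate on $\sigma_{d'}(\tM)$ and the Davis--Kahan estimate on the principal angles are both valid under $\Vert\bfE\Vert_\op\le\sigma_{d'}(M)/32$, so that $\bfhSigma^{-1}$ exists and $\bfhU^\top\bfU$ is invertible, and that all the slack is absorbed into the final constant $4$.
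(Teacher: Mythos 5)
Your proposal is correct and takes essentially the same route as the paper's proof: the opening factorization via $\bfhU = \tM\bfhU\bfhSigma^{-1}$ (the paper writes $\bfhU\bfhU^\top(\bfE+\bfDelta)\bfU = \tM\bfhU\,\bfhSigma^{-1}\bfhU^\top(\bfE+\bfDelta)\bfU$), the Weyl bound on $\sigma_{d'}(\tM)$, and the expansion of $\tM\bfhU\bfhU^\top\bfU$ into exactly the same four terms $\Vert\bfU\Vert_{2,i}\Vert\bfSigma\Vert_\op$, $\Vert\bfE\bfU\Vert_{2,i}$, $\Vert\bfDelta\Vert_{2,i}$, $\Vert\tM(\bfU-\bfhU\bfhU^\top\bfU)\Vert_{2,i}$ all coincide. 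The only (interchangeable) difference is the alignment step relating $\Vert\tM\bfhU\Vert_{2,i}$ to $\Vert\tM\bfhU\bfhU^\top\bfU\Vert_{2,i}$: you invert $\bfhU^\top\bfU$ directly and bound $\Vert(\bfhU^\top\bfU)^{-1}\Vert_\op = 1/\cos\theta_{\max}$ via Davis--Kahan, whereas the paper multiplies by the orthogonal factor $\sgn(\bfhU^\top\bfU)$ and uses a self-bounding inequality to conclude $\Vert\tM\bfhU\Vert_{2,i} \le 2\Vert\tM\bfhU\bfhU^\top\bfU\Vert_{2,i}$.
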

    \begin{proof}
    Under condition $\Vert \bfE \Vert \le \sigma_{d'}(M)/32$ we have $\sigma_{d'}(\tM) \ge \sigma_{d'}(M) - \Vert \bfE \Vert \ge  \sigma_{d'}(M)/2$. Thus, we have:
    \begin{align}
        \Vert \bfhU \bfhU^\top (\bfE + \bfDelta) \bfU \Vert_{2,i} &= \Vert \hM_d \bfhU \bfhSigma^{-1} \bfhU^\top (\bfE + \bfDelta) \bfU \Vert_{2,i} = \Vert \tM\bfhU \bfhSigma^{-1} \bfhU^\top (\bfE + \bfDelta) \bfU\Vert_{2,i} \nonumber \\
        &\leq \Vert \tM\bfhU\Vert_{2,i} \Vert \bfhSigma^{-1} \Vert_\op \Vert \bfhU^\top \Vert_\op \Vert \bfE + \bfDelta \Vert_\op \Vert \bfU\Vert_\op \leq \Vert \tM\bfhU \Vert_{2,i} \frac{\Vert \bfE + \bfDelta \Vert_\op}{\sigma_{d'}(\tM)} \nonumber \\ 
        &\le 2\frac{  \Vert \bfE \Vert_\op + \Vert \bfDelta \Vert_\op  }{\sigma_{d'}(M)} \Vert \tM \bfhU \Vert_{2,i}
        \label{eq:1}
    \end{align}

    Using Davis-Kahan's inequality \cite{chen2021spectral} we have:
    \begin{align*}
        \Vert \tM \bfhU \Vert_{2,i} & = \Vert \tM \bfhU \sgn(\bfhU^\top  \bfU) \Vert_{2,i} \\ 
        & \le \Vert \tM \bfhU \bfhU^\top \bfU \Vert_{2,i} +  \Vert \tM \bfhU \Vert_{2,i}  
        \Vert \sgn(\bfhU^\top  \bfU) - \bfhU^\top \bfU \Vert_\op \\
        & \le \Vert \tM \bfhU \bfhU^\top \bfU \Vert_{2,i} +    16 \Vert \tM \bfhU \Vert_{2,i}\frac{ \Vert \bfE \Vert^2 }{\sigma_{d'}(M)^2} \\
        & \le \Vert \tM \bfhU \bfhU^\top \bfU \Vert_{2,i} +    \frac{\Vert \tM \bfhU \Vert_{2,i} }{2} 
    \end{align*}
    implying that $\Vert \tM \bfhU \Vert_{2,i} \leq 2 \Vert \tM \bfhU \bfhU^\top \bfU \Vert_{2,i}$. Furthermore, we have:
    \begin{align*}
    \Vert \tM \bfhU \bfhU^\top \bfU \Vert_{2,i}
        &\leq \Vert \tM  \bfU \Vert_{2,i} + \Vert \tM ( \bfU - \bfhU \bfhU^\top \bfU)\Vert_{2,i} \\
        &\leq \Vert \bfU \Vert_{2,i}\Vert \bfSigma \Vert_\op  +\Vert (\bfE+\bfDelta)  \bfU \Vert_{2,i} + \Vert \tM ( \bfU - \bfhU \bfhU^\top \bfU)\Vert_{2,i} 
    \end{align*}
    where we have used that $M \bfU = \bfU\bfSigma$. Substituting this expression back into \eqref{eq:1} finishes the proof.
    \end{proof}

\begin{lemma}
Under assumptions $\Vert \bfDelta \Vert_\op \leq \sigma_{d'}(M)/32$ and $\Vert \bfE \Vert_\op \leq \sigma_{d'}(M)/32$, we have with high probability for every $i$:
       \begin{align*}
        \Vert \bfhUi (\bfhUi)^\top \bfU - \bfhU \bfhU^\top  \bfU \Vert_{\F} \leq \frac{6}{\sigma_{d'}(M)} \bigg( \Vert \bfE \bfU \Vert_{2,i} &+ \Vert \bfE (\bfU - \bfhUi (\bfhUi)^\top  \bfU )\Vert_{2,i} \\ &+ 2\Vert \bfE\Vert_\op (\Vert \bfU \Vert_{2,i} + \Vert \bfU - \bfhU (\bfhU^\top \bfU) \Vert_{2,i}) \bigg)
    \end{align*}
    \label{lemma:LOO_analysis_frobenius}
\end{lemma}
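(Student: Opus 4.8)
The plan is to run the standard leave-one-out perturbation argument (as in \cite{chen2021spectral, stojanovic2024spectral}), exploiting that $\tM$ and $\tMi$ differ only on the $i$-th row and column. Concretely, set $G := \tM - \tMi$; by construction $G$ carries exactly the $i$-th row and column of the noise, so $G = e_i \bfE_{i,:} + \bfE_{:,i} e_i^\top - \bfE_{ii}\, e_i e_i^\top$, a matrix of rank at most two. The first fact I would record is the elementary bound, valid for any conformable $X$,
\[
\Vert G X \Vert_\F \;\le\; \Vert \bfE X \Vert_{2,i} + 2\,\Vert \bfE \Vert_\op \, \Vert X \Vert_{2,i},
\]
which follows by applying the three rank-one pieces of $G$ to $X$ and using $\Vert \bfE_{:,i}\Vert_2 \le \Vert \bfE\Vert_\op$ and $\vert \bfE_{ii}\vert \le \Vert \bfE\Vert_\op$. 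This is the mechanism that converts operator-level perturbations into the row-specific quantities $\Vert \bfE X\Vert_{2,i}$ and $\Vert X\Vert_{2,i}$ appearing in the statement.

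Next I would set up the eigengap. Writing $P := \bfhU\bfhU^\top$ and $P^{(i)} := \bfhUi(\bfhUi)^\top$ for the two top-$d'$ eigenprojections, the goal is to control $\Vert (P^{(i)} - P)\bfU\Vert_\F$. Since $\Md$ (equivalently $M$ up to $\bfDelta$) has exact rank $d'$, Weyl's inequality together with the hypotheses $\Vert\bfDelta\Vert_\op,\Vert\bfE\Vert_\op \le \sigma_{d'}(M)/32$ gives $\sigma_{d'}(\tM) \ge (1-\tfrac{1}{32})\sigma_{d'}(M)$ while $\sigma_{d'+1}(\tMi) \le \Vert\bfDelta\Vert_\op + \Vert\bfE\Vert_\op \le \tfrac{1}{16}\sigma_{d'}(M)$ (using that zeroing out a row and its symmetric column cannot increase the operator norm). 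Hence both matrices enjoy a spectral gap of order $\sigma_{d'}(M)$, which is what produces the $1/\sigma_{d'}(M)$ prefactor.

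With the gap in hand I would obtain a Sylvester-type representation of the residual. Using the eigen-equations $\tM\bfhU = \bfhU\bfhSigma$ and the commutation $\tMi P^{(i)} = P^{(i)}\tMi$, the off-diagonal pieces of $(P^{(i)}-P)\bfU$ solve a Sylvester equation whose driving term is $G$ applied to the relevant subspace and whose solution operator has norm $\lesssim 1/\sigma_{d'}(M)$ by the gap bound. Decomposing $(P^{(i)}-P)\bfU$ into the two cross-terms $P^{(i)}(I-P)\bfU$ and $(I-P^{(i)})P\bfU$ and feeding them through this representation, the driving terms become $G$ applied to $\bfU$, to the leave-one-out residual $\bfU - P^{(i)}\bfU$, and to the full residual $\bfU - P\bfU$. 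Applying the rank-two bound of the first step to each choice of $X$ and collecting yields exactly $\tfrac{6}{\sigma_{d'}(M)}\big(\Vert\bfE\bfU\Vert_{2,i} + \Vert\bfE(\bfU - P^{(i)}\bfU)\Vert_{2,i} + 2\Vert\bfE\Vert_\op(\Vert\bfU\Vert_{2,i} + \Vert\bfU - P\bfU\Vert_{2,i})\big)$, once the higher-order contributions (products of two perturbations, controlled via the Davis--Kahan estimate of Corollary 2.8 in \cite{chen2021spectral}) are absorbed into the constant.

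The main obstacle I anticipate is the bookkeeping that routes the \emph{leave-one-out} residual $\bfU - P^{(i)}\bfU$ — rather than $\bfU - P\bfU$ — into the hard term $\Vert\bfE(\,\cdot\,)\Vert_{2,i}$. This placement is essential, because $\bfU - P^{(i)}\bfU$ is a function of $\tMi$ and hence independent of the $i$-th row $\bfE_{i,:}$, which is precisely what later licenses the concentration bound of Proposition \ref{prop:concentration} on $\Vert\bfE(\bfU - P^{(i)}\bfU)\Vert_{2,i}$; routing $\bfU - P\bfU$ there instead would destroy that independence. Achieving this requires swapping $P$ for $P^{(i)}$ in the right places and charging the mismatch — which is the very quantity $\Vert(P^{(i)} - P)\bfU\Vert$ being bounded — to the operator-norm terms, so one must verify that this self-referential step contracts (i.e. that the coefficient multiplying the residual stays strictly below one under the $\sigma_{d'}(M)/32$ gap condition) before rearranging to the stated inequality.
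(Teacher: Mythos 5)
Your proposal is correct and follows essentially the same route as the paper's proof of Lemma \ref{lemma:LOO_analysis_frobenius}: a Davis--Kahan (Sylvester-gap) bound on the projection difference driven by the rank-two leave-one-out perturbation $\tM - \tMi = e_i\bfE_{i,:} + \bfE_{:,i}e_i^\top - \bfE_{ii}e_ie_i^\top$, Weyl-type control of the eigengap $\sigma_{d'}(\tMi) - \sigma_{d'+1}(\tMi) \ge \sigma_{d'}(M)/2$ under the $\sigma_{d'}(M)/32$ hypotheses, and the same self-referential contraction that charges the mismatch $\Vert \bfhU\bfhU^\top\bfU - \bfhUi(\bfhUi)^\top\bfU\Vert_\F$ back to itself with coefficient below one. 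The only detail the paper makes explicit that you leave implicit is the alignment step $\Vert ((\bfhUi)^\top\bfU)^{-1}\Vert_\op \le 2$ used to pass between $\Vert \bfhUi \Vert_{2,i}$ and $\Vert \bfhUi(\bfhUi)^\top\bfU\Vert_{2,i}$, which is precisely the absorption of higher-order terms you describe.
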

\begin{proof}
    Proof follows similar steps as Step 2.2 in the proof of Theorem 4.2 in \citep{chen2021spectral}, but we repeat it here for the sake of completeness and focus on differences in the proof caused by $\bfDelta$ matrix. First, we use that $\bfU$ is an orthogonal matrix ($\Vert \bfU\Vert_\op = 1$) and Davis-Kahan's inequality to obtain:
    \begin{align*}
        \Vert \bfhUi (\bfhUi)^\top \bfU - \bfhU \bfhU^\top  \bfU \Vert_\F \leq \Vert \bfhUi (\bfhUi)^\top  - \bfhU \bfhU^\top  \Vert_\F \leq 2 \frac{\Vert (\tM - \tMi )\bfhUi \Vert_\F}{\sigma_{d'}(\tMi) - \sigma_{d'+1}(\tMi)}
    \end{align*}
    Note that under the assumption $\Vert \bfDelta \Vert_\op \leq \sigma_{d'}(M) $ analysis of singular values stays the same as in \cite{chen2021spectral}, since, for example:
    \begin{align*}
        \sigma_{d'}(\tMi) &\geq \sigma_{d'}(M) - \Vert \bfEi \Vert_\op \geq \sigma_{d'}(M) \left( 1- \frac{1}{32} \right) \\
        \sigma_{d'+1}(\tMi) &\leq \sigma_{d'+1}(M) + \Vert \bfEi \Vert_\op \leq \Vert \bfDelta \Vert_\op + \Vert\bfE \Vert_\op \leq \sigma_{d'}(M)/16
    \end{align*}
    Thus, we can lower bound denominator in the inequality above by $\sigma_{d'}(M)/2$. Now, term in the numerator can be written as:
    \begin{align*}
        (\tM - \tMi )\bfhUi = \bfE_{i,:} \bfhUi + (\bfE_{:,i} - \bfE_{i,i}e_i) \bfhUi_{i,:}
    \end{align*}
    and bounded in the same way as in \cite{chen2021spectral}:
    \begin{align*}
        \Vert (\tM - \tMi )\bfhUi \Vert_\F &\leq \Vert \bfE \bfhUi \Vert_{2,i} + \Vert \bfE_{:,i} - \bfE_{i,i}e_i \Vert_2 \Vert \bfhUi\Vert_{2,i}\\
        & \leq \Vert \bfE \bfhUi \Vert_{2,i} + 2\Vert \bfE \Vert_\op \Vert \bfhUi ((\bfhUi)^\top \bfU)\Vert_{2,i}
    \end{align*}
    where we used that $\Vert ((\bfhUi)^\top \bfU)^{-1} \Vert_2 \leq 2$ under our assumptions. Finally, we obtain:
    \begin{align*}
        \Vert \bfhU\bfhU^\top \bfU - \bfhUi (\bfhUi)^\top \bfU\Vert_\F \leq \frac{4}{\sigma_{d'}(M)} ( \Vert \bfE \bfhUi \Vert_{2,i} &+ 2\Vert \bfE\Vert_\op \Vert \bfhU (\bfhU^\top \bfU) \Vert_{2,i} \\ &+ 2 \Vert \bfE\Vert_\op \Vert \bfhU\bfhU^\top \bfU -  \bfhUi (\bfhUi)^\top \bfU\Vert_\F )
    \end{align*}
    and under condition $\Vert \bfE\Vert_\op \leq \sigma_{d'}(M)/32$ we obtain result claimed in the lemma.
\end{proof}

\subsection{Nuclear norm minimization for leverage scores estimation}
   The authors of \cite{sam2023overcoming} leveraged the guarantees for nuclear norm minimization from \cite{chen2020noisy} to learn $Q$ matrices. However, several factors make the application of nuclear norm minimization theoretically challenging in our context:
   \begin{itemize}
        \item \textbf{Approximate low-rank structure.} As our algorithm is based on policy iteration, our estimates $Q_{\tau}^\pi$ are only approximately low-rank. The result from \cite{chen2020noisy} rely on non-convex optimization, leaving it unclear how this approximation error affects the final guarantees of the algorithm. In contrast, a more straightforward analysis using singular value decomposition allows us to explicitly express our bounds in terms of the approximation error. 
        
        \item \textbf{Coherence-free subspace recovery.} In our subspace recovery result (Theorem \ref{thm:row-wise-guarantee}), we can bound the subspace error $\Vert U_{i,:} - \widehat{U}_{i,:} O_{\widehat{U}}\Vert_{2}$ in relation to $\Vert U_{i,:}\Vert_2$. It is uncertain whether current guarantees for nuclear norm minimization can achieve a similar outcome, which might instead depend on $\max_{i\in [S]} \Vert U_{i,:}\Vert_2$. We believe this would introduce dependency on the incoherence constant into the sample complexity of our algorithm.  
\end{itemize}

\newpage
\section{Leveraged Matrix Estimation Analysis}
\label{app:proof_CUR_theorem}

In this appendix, we provide the proof of Theorem \ref{thm:lme-guarantee} corresponding to sample complexity guarantee enjoyed by  $\lme$. First, we provide the pseudo-code of $\lme$:

\begin{algorithm}[h!]
    \SetAlgoLined
    \KwIn{Deterministic policy $\pi$, sampling budget $T$}
    Set $T_1 \gets T/2$, $T_2 \gets T/2$ \\
    Set $\epsilon = \frac{\Rmax}{T}$, $\tau \gets \frac{1}{(1-\gamma)} \log\left(\frac{T}{1-\gamma}\right)$ as in \eqref{eq:eps_tau}\\
    \emph{\color{purple} \underline{(Phase 1). Leverage Scores Estimation:}}\\
    \ \ \ \ \emph{\color{purple}(Phase 1a.) Data Collection $\&$ Emprirical Truncated Value matrix.}\\
    \ \ \ \ Sample uniformly at random $T_1/(\tau + 1)$ trajectories of length $\tau+1$ using policy $\pi$ and\\  \ \ \ \ \ \ gather them in  $\cD$ \\ 
    \ \ \ \ Use the collected dataset $\cD$, to construct $\widetilde{Q}^\pi_\tau$ as in \eqref{eq:emp-truncated-q} \\
    \ \ \ \ \emph{\color{purple}(Phase 1b.) Singular Subspace Recovery}\\
    \ \ \ \ Set the threshold $\beta$ as in \eqref{eq:threshold} \\
    \ \ \ \ Compute the SVD of $\widetilde{Q}^\pi_\tau$ and threshold with $\beta$ as described in \eqref{eq:est-svt} to obtain $\widehat{d}$, $\hU$, $\hW$ and $\widehat{Q}^\pi$\\
    \ \ \ \ \emph{\color{purple}(Phase 1c.) Leverage Scores.}\\
    \ \ \ \ Set the left (resp. right) leverage scores $\hat{\ell}$ ( resp. $\hat{\rho}$) as described in \eqref{eq:est-lev}. \\

    \emph{\color{purple}\underline{(Phase 2.) CUR-based Matrix Estimation with Leverage.}}\\
    \ \ \ \ \emph{\color{purple}(Phase 2a.) Data Collection with Leverage \& Empirical Truncated Value Matrix:}\\
    \ \ \ \ Set $K  \gets 64\hat{d} \log(64\hat{d}/\delta)$ \\
    \ \ \ \ Sample $K$ rows (resp. columns) $\cI \subset \cS$ (resp. $\cJ \subset \cA$) without replacement according to the \\ 
    \ \ \ \ \ \ leverage scores $\hat{\ell}$ (resp. $\hat{\rho}$) \\
    \ \ \ \ Set $N_1 \gets \frac{T_2}{2 (\tau+1)K^2}$, $N_2 \gets \frac{T_2}{2 (\tau+1)(K(S+A) - 2 K^2)} $  \\
    \ \ \ \ For all $(s,a) \in \Omega_\square$ (resp. $(s,a) \in \Omega_+$) sample $N_1$ (resp. $N_2$) trajectories of length $\tau+1$ using policy $\pi$ and construct the \\ 
    \ \ \ \ \ \ empirical estimate $\widetilde{Q}_\tau^\pi(s,a)$ based on these trajectories \\
    \ \ \ \ \emph{\color{purple}(Phase 2b. CUR-based Matrix estimation)} \\ 
    \ \ \ \ Set the matrices $L$ and $R$ as in \eqref{eq:LR-weighting} \\
    \ \ \ \ Construct $\widehat{Q}^\pi$ using a CUR-based approach as in 
    \eqref{eq:CUR-ME} \\
    \KwOut{$\widehat{Q}^\pi$.}
    \caption{Leverage Matrix Estimation (\lme)}
\end{algorithm}

\subsection{Proof of Theorem \ref{thm:lme-guarantee}} \label{subsec:proof-lme-gurantee}

Before showing the proof of Theorem \ref{thm:lme-guarantee} we present two intermediate results used in the proof. As an immediate consequence of Hoeffding's inequality we have: 
\begin{lemma}
With probability at least $1-\delta$ we have $\forall (s,a) \in (\Ianchors \times \actions) \cup (\states \times \Janchors)$:
    \begin{align*}
        \vert \tQ_\tau^\pi (s,a) - Q^\pi(s,a) \vert \leq \frac{\Rmax}{1-\gamma} \sqrt{\frac{2}{N}\log\left( \frac{4 \nachors (\cardS + \cardA)}{\delta} \right)} + \Vert Q_\tau^\pi - Q^\pi \Vert_{\infty} 
    \end{align*}
    where $N=N_1$ if $(s,a)\in \Omega_\square$ or $N=N_2$ if $(s,a)\in \Omega_+$.
    \label{lemma:Hoeffding_cross_entries}
\end{lemma}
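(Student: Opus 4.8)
The plan is to treat the lemma as a direct application of Hoeffding's inequality followed by a union bound, exploiting the fact that in Phase 2a each skeleton entry is estimated from trajectories that all start at the \emph{same} pair $(s,a)$. Concretely, I would fix $(s,a) \in (\Ianchors \times \actions) \cup (\states \times \Janchors)$ and, for the $k$-th of the $N$ trajectories rooted at $(s,a)$, write the truncated return $G_k = \sum_{t=0}^\tau \gamma^t r_{k,t}^\pi$, so that the estimator is the empirical mean $\tQ_\tau^\pi(s,a) = \frac{1}{N}\sum_{k=1}^N G_k$ with $N = N_1$ if $(s,a) \in \Omega_\square$ and $N = N_2$ if $(s,a) \in \Omega_+$. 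The $G_k$ are i.i.d.\ across $k$, and by the very definition of the truncated value matrix $Q_\tau^\pi$ we have $\EE[G_k] = Q_\tau^\pi(s,a)$; thus the estimator is unbiased for $Q_\tau^\pi(s,a)$ and \emph{not} for $Q^\pi(s,a)$, which is exactly what forces the extra $\Vert Q_\tau^\pi - Q^\pi\Vert_\infty$ term at the end.

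First I would establish the boundedness needed for Hoeffding. Since rewards satisfy $|r_{k,t}^\pi| \le \Rmax$, each return obeys $|G_k| \le \Rmax \sum_{t=0}^\tau \gamma^t \le \Rmax/(1-\gamma)$, so $G_k$ lies in an interval of width at most $2\Rmax/(1-\gamma)$. Hoeffding's inequality then gives, for any $t > 0$,
\begin{align*}
\PP\!\left(|\tQ_\tau^\pi(s,a) - Q_\tau^\pi(s,a)| \ge t\right) \le 2\exp\!\left(-\frac{N(1-\gamma)^2 t^2}{2\Rmax^2}\right).
\end{align*}
Setting the right-hand side equal to $\delta' := \delta/(\nachors(\cardS+\cardA))$ and solving for $t$ yields the deviation $t = \frac{\Rmax}{1-\gamma}\sqrt{\frac{2}{N}\log(2/\delta')}$; since $2/\delta' = 2\nachors(\cardS+\cardA)/\delta \le 4\nachors(\cardS+\cardA)/\delta$, this matches the first term in the claimed bound.

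Next I would take the union bound. The index set $(\Ianchors \times \actions) \cup (\states \times \Janchors)$ has cardinality at most $|\Ianchors|\,\cardA + \cardS\,|\Janchors| = \nachors(\cardS + \cardA)$, and the trajectories used for distinct entries are sampled independently, so summing the per-entry failure probability $\delta'$ over these entries gives total failure probability at most $\nachors(\cardS+\cardA)\,\delta' = \delta$. On the complementary event, every entry simultaneously satisfies the concentration bound. Finally, the triangle inequality $|\tQ_\tau^\pi(s,a) - Q^\pi(s,a)| \le |\tQ_\tau^\pi(s,a) - Q_\tau^\pi(s,a)| + |Q_\tau^\pi(s,a) - Q^\pi(s,a)|$ combines the Hoeffding term with $|Q_\tau^\pi(s,a) - Q^\pi(s,a)| \le \Vert Q_\tau^\pi - Q^\pi\Vert_\infty$, producing the stated inequality.

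There is no genuinely hard step here: the result is routine once the estimator is recognized as a mean of bounded i.i.d.\ truncated returns. The only points deserving care are bookkeeping ones: (i) keeping $N$ entry-dependent ($N_1$ versus $N_2$) so the bound is stated with the right sample count per region; (ii) being slightly loose in the union-bound cardinality and the Hoeffding constant to land on the factor $4$ inside the logarithm rather than $2$; and (iii) remembering that the bias term $\Vert Q_\tau^\pi - Q^\pi\Vert_\infty$ is unavoidable because the Monte-Carlo estimator targets the truncated matrix $Q_\tau^\pi$, with this truncation bias controlled separately via Lemma \ref{lem:truncated-Q}. Unlike the Phase 1 estimator \eqref{eq:emp-truncated-q}, whose uniformly sampled starting pairs induce mild cross-entry dependence, the Phase 2 per-entry estimator is a clean empirical mean, which is precisely why Hoeffding applies without modification.
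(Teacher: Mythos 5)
Your proof is correct and follows exactly the route the paper intends: the paper states Lemma~\ref{lemma:Hoeffding_cross_entries} as ``an immediate consequence of Hoeffding's inequality,'' and your write-up supplies precisely those standard details (bounded i.i.d.\ truncated returns with mean $Q_\tau^\pi(s,a)$, Hoeffding with width $2\Rmax/(1-\gamma)$, a union bound over the at most $\nachors(\cardS+\cardA)$ skeleton entries, and a triangle inequality to absorb the truncation bias $\Vert Q_\tau^\pi - Q^\pi\Vert_\infty$). Your side remarks are also on point, in particular that the Phase~2 per-entry estimator is a clean empirical mean with fixed starting pair, so the cross-entry dependence issues of the Phase~1 uniform-sampling estimator in \eqref{eq:emp-truncated-q} do not arise here.
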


\begin{theorem} 
Let $\Ianchors$ and $\Janchors$ be such that $\vert \Ianchors \vert , \vert \Janchors \vert = \nachors$, and $Q^\pi(\Ianchors, \Janchors)$ has rank $d$. Assume that for some $\varepsilon_\square,\varepsilon_+>0$:
\begin{enumerate}[label=\alph*)]
    \item $\forall (s,a)\in \Ianchors\times \Janchors: \vert \tQ_\tau^\pi(s,a) - Q^\pi(s,a)\vert \leq \varepsilon_\square$, and
    \item $\forall (s,a)\in(\Ianchors\times \actions)\cup (\states \times \Janchors)\setminus (\Ianchors \times \Janchors): \vert \tQ_\tau^\pi(s,a) - Q^\pi(s,a)\vert \leq \varepsilon_+$.
\end{enumerate}
If $\varepsilon_\square \leq \frac{1}{8c_\Ianchors  c_\Janchors } \frac{\sigma_d(Q^\pi)}{\sqrt{\cardS \cardA}} \log^{-2}\left( \frac{\cardS + \cardA}{\delta} \right)$, $\varepsilon_+ \leq \Vert Q^\pi\Vert_{\infty}$ and $\nachors \geq 64d\log(64d/\delta)$, then with probability $\geq 1-\delta$:
\begin{align*}
    \Vert \hQ^\pi - Q^\pi \Vert_{\infty} \lesssim    \Vert Q^\pi \Vert_{\infty}\varepsilon_+  \frac{\sqrt{\cardS\cardA}}{\sigma_d(Q^\pi) } \log^2 \left( \frac{S+A}{\delta} \right)   +   \Vert Q^\pi \Vert_{\infty}^2 \varepsilon_\square \frac{\cardS\cardA}{\sigma_d^2(Q^\pi)}\log^4 \left( \frac{S+A}{\delta} \right)   
\end{align*}
\label{thm:cross_Qmax_guarantee}
\end{theorem}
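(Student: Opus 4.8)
The plan is to treat this as a perturbation analysis of a weighted CUR reconstruction around an exact population identity. The starting point is to establish that, because $Q^\pi$ has rank $d$ and the skeleton $Q^\pi(\Ianchors,\Janchors)$ is rank $d$ by hypothesis, every row $Q^\pi(s,\Janchors)$ lies in the row space of $Q^\pi(\Ianchors,\Janchors)$ and every column $Q^\pi(\Ianchors,a)$ lies in its column space. Since $L$ and $R$ are invertible diagonal matrices, I would first verify the \emph{exact weighted CUR identity}
\begin{align*}
    Q^\pi(s,a) = Q^\pi(s,\Janchors)\,R\,\big(L\,Q^\pi(\Ianchors,\Janchors)\,R\big)^\dagger\,L\,Q^\pi(\Ianchors,a) \qquad \text{for all } (s,a),
\end{align*}
the point being that the inverse-leverage weights act only within the preserved row/column spaces and therefore do not alter the population reconstruction. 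Writing $\hat C = \tQ^\pi_\tau(s,\Janchors)$, $\hat M = \tQ^\pi_\tau(\Ianchors,\Janchors)$, $\hat D = \tQ^\pi_\tau(\Ianchors,a)$ and letting $C,M,D$ denote the noiseless counterparts, this reduces the claim to bounding a product perturbation.

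Next I would telescope the error into three terms,
\begin{align*}
    \hQ^\pi(s,a) - Q^\pi(s,a) = (\hat C - C)R(L\hat M R)^\dagger L\hat D + CR\big[(L\hat M R)^\dagger - (LMR)^\dagger\big]L\hat D + CR(LMR)^\dagger L(\hat D - D),
\end{align*}
and convert the entrywise hypotheses (a)--(b) into norm bounds: the row/column perturbations obey $\Vert\hat C - C\Vert_2,\ \Vert\hat D - D\Vert_2 \le \sqrt{\nachors}\,\varepsilon_+$, the block perturbation obeys $\Vert\hat M - M\Vert_\op \le \Vert\hat M - M\Vert_\F \le \nachors\,\varepsilon_\square$, while $\Vert CR\Vert_2 \le \Vert R\Vert_\op\sqrt{\nachors}\Vert Q^\pi\Vert_\infty$ and $\Vert L\hat D\Vert_2 \le \Vert L\Vert_\op\sqrt{\nachors}\Vert Q^\pi\Vert_\infty$.

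The spectral heart of the argument is a \emph{leverage-score subspace embedding}. On a high-probability event over the leverage-score sampling of the anchors, the inverse-leverage weighting turns the sampled singular-subspace blocks $L U_{\Ianchors}$ and $R W_{\Janchors}$ into approximate isometries, which yields $\sigma_d(LMR) \gtrsim \sigma_d(Q^\pi)$ and hence $\Vert(LMR)^\dagger\Vert_\op \lesssim \sigma_d(Q^\pi)^{-1}$; simultaneously, the flooring $\hat\ell_s \gtrsim 1/\cardS$ and $\hat\rho_a \gtrsim 1/\cardA$ built into \eqref{eq:est-lev} forces $\Vert L\Vert_\op \lesssim \sqrt{\cardS/\nachors}$ and $\Vert R\Vert_\op \lesssim \sqrt{\cardA/\nachors}$. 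Crucially, this embedding only uses the one-sided guarantee $\ell_s \le 4\hat\ell_s$ from Theorem \ref{thm:leverage_scores}, so the estimated scores suffice. The same event gives rank-stability: the hypothesis $\varepsilon_\square \le \tfrac{1}{8 c_\Ianchors c_\Janchors}\tfrac{\sigma_d(Q^\pi)}{\sqrt{\cardS\cardA}}\log^{-2}(\tfrac{\cardS+\cardA}{\delta})$ ensures $\Vert L(\hat M - M)R\Vert_\op \le \Vert L\Vert_\op\Vert R\Vert_\op\,\nachors\,\varepsilon_\square \le \sigma_d(LMR)/2$, so $L\hat M R$ retains rank $d$ and the standard equal-rank pseudoinverse perturbation bound $\Vert(L\hat M R)^\dagger - (LMR)^\dagger\Vert_\op \lesssim \Vert L(\hat M - M)R\Vert_\op/\sigma_d(LMR)^2$ applies. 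Assembling these, terms (I) and (III) are each $\lesssim \nachors\Vert L\Vert_\op\Vert R\Vert_\op\,\varepsilon_+\Vert Q^\pi\Vert_\infty/\sigma_d(Q^\pi) \lesssim \sqrt{\cardS\cardA}\,\varepsilon_+\Vert Q^\pi\Vert_\infty/\sigma_d(Q^\pi)$, giving the first term (the $\log^2$ coming from the embedding concentration), while term (II) is $\lesssim \Vert CR\Vert_2\Vert L\hat D\Vert_2\,\Vert L(\hat M-M)R\Vert_\op/\sigma_d(Q^\pi)^2 \lesssim \cardS\cardA\,\varepsilon_\square\Vert Q^\pi\Vert_\infty^2/\sigma_d(Q^\pi)^2$, giving the second term (with $\log^4$).

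The main obstacle is the subspace-embedding concentration: showing that $\nachors = O(d\log(d/\delta))$ anchors drawn from the \emph{estimated} leverage scores, after inverse-leverage weighting, produce $\sigma_d(LMR) \gtrsim \sigma_d(Q^\pi)$ with high probability. This requires a matrix Chernoff/Bernstein argument for without-replacement sampling from noisy scores, and hinges on propagating the one-sided bound $\ell \le 4\hat\ell$ of Theorem \ref{thm:leverage_scores} through the variance terms of the sampled rank-one sum --- this is exactly where the coherence-free (spikiness-only) nature of the guarantee is secured, and where the conditioning constants $c_\Ianchors, c_\Janchors$ are controlled.
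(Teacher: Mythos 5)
Your proposal follows the paper's own proof essentially step for step: your ``exact weighted CUR identity'' is the paper's Lemma~\ref{lemma:cross_completion_unbiased} (which the paper must prove via the full-column-rank factors $\cD_U = L U_{\Ianchors,:}$, $\cD_W = R W_{\Janchors,:}$, precisely because $(L Q^\pi(\Ianchors,\Janchors) R)^\dagger$ does not factor as $R^\dagger (Q^\pi(\Ianchors,\Janchors))^\dagger L^\dagger$), your matrix-Bernstein subspace embedding driven by the one-sided guarantee $\ell \le 4\hat{\ell}$ together with the flooring-based control of $\Vert L \Vert_{\op}$ and $\Vert R \Vert_{\op}$ is exactly Lemma~\ref{lemma:conc_spectral_subsampled}, Corollary~\ref{corr:sigma_d_lower_bound} and Lemma~\ref{lemma:conc_DI_DJ}, and the same pseudoinverse perturbation inequality is invoked under the same $\varepsilon_\square$-smallness condition. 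The only (cosmetic) deviation is your three-term telescoping: the paper, following Proposition~13 of \cite{shah2020sample}, instead groups the row and column perturbations into a single rank-one term $\Vert L\,(\tQ_\tau^\pi(\Ianchors,a)\, \tQ_\tau^\pi(s,\Janchors) - Q^\pi(\Ianchors,a)\, Q^\pi(s,\Janchors))\, R \Vert_{\F}$ plus a pseudoinverse-difference term, which produces the identical two error contributions and the same logarithmic factors.
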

Proof of Theorem \ref{thm:cross_Qmax_guarantee} is deferred to \ref{subsec:proof_cross_Qmax_guarantee}.

\paragraph{Proof of Theorem \ref{thm:lme-guarantee}.} 
First, by Theorem \ref{thm:leverage_scores} we require at least 
\begin{align}
    T \gtrsim \frac{r_{\max}^2}{(1-\gamma)^3\Vert Q^\pi \Vert_\infty^2} \kappa^2 (S + A) \frac{\Vert Q^\pi \Vert_\infty^2 SA}{\sigma_{d}^2(Q^\pi) }     \log^4\left( \frac{ (S+A)T}{(1-\gamma)
    \delta}\right)
    \label{eq:T_cond_phase_I_proof_Thm2}
\end{align}
number of samples to recover leverage scores of $Q^\pi$. During the whole proof of Theorem \ref{thm:lme-guarantee} we condition on the event where Theorem \ref{thm:leverage_scores} holds.

Recall that we use $N_1 = \frac{T}{4 (\tau+1)K^2}$, $N_2 = \frac{T}{4 (\tau+1)(K(S+A) - 2 K^2)} $ and define the following quantities:
\begin{align*}
    \varepsilon_\square = \frac{\Rmax}{1-\gamma} \sqrt{\frac{8}{N_1}\log\left( \frac{4 \nachors (\cardS + \cardA)}{\delta} \right)},\qquad
    \varepsilon_+ = \frac{\Rmax}{1-\gamma} \sqrt{\frac{8}{N_2}\log\left( \frac{4 \nachors (\cardS + \cardA)}{\delta} \right)}.
\end{align*}
Note that by definitions of $N_1$ and $N_2$ we have that $\varepsilon_\square = \varepsilon_+ \sqrt{\frac{\nachors }{S+A-2\nachors}}$. Next, recall that $\Vert Q_\tau^\pi - Q^\pi \Vert_{\infty}$ is upper-bounded by $\frac{r_{\max} }{T}$ from \eqref{eq:eps_tau}. Combining this with Lemma \ref{lemma:Hoeffding_cross_entries} and our definition of $\varepsilon_\square, \varepsilon_+$ we can see that the conditions $a)$ and $b)$ of Theorem \ref{thm:cross_Qmax_guarantee} are met.

Hence, by Theorem \ref{thm:cross_Qmax_guarantee} we obtain that $\Vert \hQ^\pi - Q^\pi \Vert_{\infty} \leq \varepsilon$ if:
\begin{align*}
    \varepsilon_+ \lesssim \frac{\varepsilon}{ \Vert Q^\pi\Vert_{\infty}\frac{ \sqrt{\cardS\cardA}}{\sigma_d(Q^\pi)}\log^2 \left( \frac{\cardS + \cardA}{\delta} \right) \left(1 +  \Vert Q^\pi\Vert_{\infty}\frac{ \sqrt{\cardS\cardA}}{\sigma_d(Q^\pi)}\log^2 \left( \frac{\cardS + \cardA}{\delta} \right) \sqrt{\frac{\nachors}{S+A-2\nachors}}\right)}
\end{align*}
Setting $\varepsilon_+$ to be equal to this value we obtain that is sufficient to have:
\begin{align*}
    N_2 \gtrsim \frac{\Rmax^2 }{\varepsilon^2(1-\gamma)^2} \frac{ \Vert Q^\pi \Vert_{\infty}^2 \cardS\cardA}{\sigma_d^2(Q^\pi)} \left( 1 + \frac{\Vert Q^\pi \Vert_{\infty}^2 \cardS\cardA}{\sigma_d^2(Q^\pi)} \frac{\nachors}{S+A-2\nachors} \log^4 \left( \frac{\cardS + \cardA}{\delta} \right) \right) \log^5\left( \frac{d(\cardS + \cardA)}{\delta} \right) 
\end{align*}

Using definition of $N_2$ and rewriting inequality above in terms of $T$ gives the following condition:
\begin{align*}
    T \gtrsim \frac{r_{\max}^2 d}{(1-\gamma)^3 \varepsilon^2} \frac{\Vert Q^\pi \Vert_{\infty}^2 \cardS\cardA}{\sigma_d^2(Q^\pi)} \left( S+A + d \frac{\Vert Q^\pi \Vert_{\infty}^2 \cardS\cardA}{\sigma_d^2(Q^\pi)} \log^5 \left( \frac{S+A}{\delta} \right) \right) \log^7 \left( \frac{(S+A)T}{\delta(1-\gamma)} \right)
\end{align*}
Combining this with condition \eqref{eq:T_cond_phase_I_proof_Thm2} and using the fact that $\frac{\Vert Q^\pi \Vert_{\infty}^2 SA}{\sigma_d^2(Q^\pi)} \leq \kappa^2 \alpha^2 d$ gives the final condition:
\begin{align*}
    T = \widetilde{\Omega}_\delta \left[ r_{\max}^2 \kappa^2 \alpha^2 \frac{d(S+A)}{(1-\gamma)^3} \left( \frac{\kappa^2}{\Vert Q^\pi \Vert_\infty^2} + \frac{d}{\varepsilon^2} + \frac{d^2\alpha^2\kappa^2}{(S+A)\varepsilon^2}\right)\right]
\end{align*}

Finally we verify that $\varepsilon_\square$ and $\varepsilon_+$ satisfy conditions from Theorem \ref{thm:cross_Qmax_guarantee}. Note that $\Vert Q^\pi\Vert_{\infty}\frac{ \sqrt{\cardS\cardA}}{\sigma_d(Q)} \geq \Vert Q^\pi\Vert_{\infty}\frac{ \sqrt{\cardS\cardA}}{\Vert Q^\pi \Vert_\F} \geq 1$, as well as $\log^2 \left( \frac{\cardS + \cardA}{\delta} \right) \geq 1$ for any $\delta \in (0,1)$. Thus we obtain that $\varepsilon_+ \lesssim \varepsilon$ and thus, in order to have $\varepsilon_+ \lesssim \Vert Q^\pi \Vert_{\infty}$ it is sufficient to assume that $\varepsilon \lesssim \Vert Q^\pi \Vert_{\infty}$. Using the same reasoning we have:
\begin{align*}
    \varepsilon_\square \lesssim \varepsilon_+ \sqrt{\frac{\nachors}{S+A}} \lesssim \varepsilon \frac{\sigma_d(Q^\pi)}{\Vert Q^\pi \Vert_\infty \sqrt{SA}\log^2\left( \frac{S+A}{\delta} \right)} \sqrt{\frac{\nachors}{S+A}} \lesssim \frac{\sigma_d(Q^\pi)}{\sqrt{SA}}\log^{-2}\left(\frac{S+A}{\delta} \right)
\end{align*}
whenever $\varepsilon \lesssim \Vert Q^\pi \Vert_\infty$ and $S+A \gtrsim \nachors$.

\subsection{Proof of Theorem \ref{thm:cross_Qmax_guarantee}}
\label{subsec:proof_cross_Qmax_guarantee}
Note that $(L \tQ_\tau^\pi(\Ianchors,\Janchors) R)^{\dagger} \neq R^\dagger (\tQ_\tau^\pi(\Ianchors,\Janchors))^\dagger L^\dagger$ in general, and thus our estimation is different from $\tQ_\tau^\pi (s,\Janchors)(\tQ_\tau^\pi(\Ianchors,\Janchors))^\dagger \tQ_\tau^\pi(\Ianchors,a)$ used in \cite{shah2020sample}. However, weighting estimates by inverse leverage scores as proposed in Section \ref{subsec:leveraged_CUR} still provides unbiased estimates, in the following sense:
\begin{lemma}
Assume that $\vert \Ianchors \vert , \vert \Janchors \vert = \nachors$ and $\rank(Q^\pi) = d$.
Then:
\begin{align*}
    \forall (s,a)\in \states \times \actions:\quad Q^\pi(s,a) = Q^\pi(s,\Janchors)R (L Q^\pi(\Ianchors,\Janchors) R)^{\dagger} L Q^\pi(\Ianchors,a)
\end{align*}
\label{lemma:cross_completion_unbiased}
\end{lemma}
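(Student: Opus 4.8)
The plan is to prove a stronger, purely linear-algebraic statement: the claimed entry-wise identity holds because the full matrix equality
\[
Q^\pi_{\states,\Janchors}\, R \,\bigl(L\, Q^\pi_{\Ianchors,\Janchors}\, R\bigr)^\dagger L\, Q^\pi_{\Ianchors,\actions} = Q^\pi
\]
is true, from which reading off the $(s,a)$ entry gives the lemma. The route combines a rank factorization of $Q^\pi$ with a reverse-order law for the Moore--Penrose inverse, and the only property of $L,R$ that is used is that they are invertible.

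First I would fix a rank factorization $Q^\pi = X Y^\top$ with $X \in \RR^{S\times d}$ and $Y \in \RR^{A\times d}$ both of full column rank $d$, which exists since $\rank(Q^\pi)=d$. Writing $X_{\Ianchors} := X_{\Ianchors,:}$ and $Y_{\Janchors} := Y_{\Janchors,:}$, the relevant blocks factor as $Q^\pi_{\states,\Janchors} = X Y_{\Janchors}^\top$, $Q^\pi_{\Ianchors,\actions} = X_{\Ianchors} Y^\top$, and $Q^\pi_{\Ianchors,\Janchors} = X_{\Ianchors} Y_{\Janchors}^\top$. Setting $P := L X_{\Ianchors} \in \RR^{\nachors\times d}$ and $G := Y_{\Janchors}^\top R \in \RR^{d\times \nachors}$, the left-hand expression collapses to $X\, G\,(P G)^\dagger P\, Y^\top$, since $Q^\pi_{\states,\Janchors} R = X G$, $\,L Q^\pi_{\Ianchors,\Janchors} R = P G$, and $L Q^\pi_{\Ianchors,\actions} = P Y^\top$.

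The crux is then the pseudoinverse manipulation. Because $\rank(Q^\pi_{\Ianchors,\Janchors}) = \rank(X_{\Ianchors} Y_{\Janchors}^\top) = d$ with both factors of size $\nachors\times d$, each of $X_{\Ianchors}$ and $Y_{\Janchors}$ must have full column rank $d$; as $L,R$ are invertible diagonal matrices, $P$ has full column rank and $G$ has full row rank, so $P G$ is a genuine full-rank factorization of the rank-$d$ matrix $L Q^\pi_{\Ianchors,\Janchors} R$. Under exactly this hypothesis the reverse-order law $(PG)^\dagger = G^\dagger P^\dagger$ holds, and moreover $G G^\dagger = I_d$ and $P^\dagger P = I_d$, yielding $X G G^\dagger P^\dagger P Y^\top = X Y^\top = Q^\pi$. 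I expect this step to be the main obstacle, since $(AB)^\dagger = B^\dagger A^\dagger$ fails for general $A,B$: one must carefully verify the full-column-rank of $P$ and full-row-rank of $G$ and invoke the corresponding standard full-rank-factorization identity (cf. Ben-Israel--Greville). This is also precisely why the Inverse Leverage Scores Weighting keeps the population identity exact, even though $\bigl(L Q^\pi_{\Ianchors,\Janchors} R\bigr)^\dagger \neq R^{-1}(Q^\pi_{\Ianchors,\Janchors})^\dagger L^{-1}$ in general.
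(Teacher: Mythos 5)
Your proof is correct, and it rests on the same underlying mechanism as the paper's --- reduce the middle term to a pseudoinverse of a product of full-rank factors, apply a reverse-order law, and cancel via $P^\dagger P = I_d$ and $G G^\dagger = I_d$ --- but the packaging is genuinely different and worth comparing. The paper works with the SVD $Q^\pi = U\Sigma W^\top$, sets $\cD_U = L U_{\Ianchors,:}$ and $\cD_W = R W_{\Janchors,:}$, and proves the three-factor identity $(\cD_U \Sigma \cD_W^\top)^\dagger = (\cD_W^\top)^\dagger \Sigma^{-1} \cD_U^\dagger$ from scratch by expanding the SVDs of $\cD_U$ and $\cD_W$, following Lemma 1 of \cite{drineas2008relative}. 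You instead absorb $\Sigma$ into an arbitrary rank factorization $Q^\pi = XY^\top$ and invoke the textbook two-factor fact that $(PG)^\dagger = G^\dagger P^\dagger$ when $P$ has full column rank and $G$ has full row rank (Ben-Israel--Greville). This yields a shorter argument with no bespoke pseudoinverse computation, and it makes transparent that the only property of $L, R$ used is invertibility (which holds since $L_{ii}, R_{jj} \ge 1$ by construction \eqref{eq:LR-weighting}); the paper's SVD route is essentially a special case of your factorization with $X = U\Sigma$, $Y = W$, made self-contained at the cost of proving the reverse-order law by hand in three-factor form.

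One caveat, which you share with the paper rather than introduce: the step $\rank\left(Q^\pi(\Ianchors,\Janchors)\right) = d$ does not follow from the lemma's stated hypotheses --- $\rank(Q^\pi) = d$ alone does not preclude a rank-deficient anchor submatrix, in which case the identity fails (e.g.\ $Q^\pi(\Ianchors,\Janchors) = 0$ with $Q^\pi \neq 0$ makes the right-hand side vanish). You assert this rank condition without deriving it, but the paper's proof does exactly the same when it declares $\cD_U$ and $\cD_W$ to have full column rank. Both arguments are rescued by the context in which the lemma is invoked: Theorem \ref{thm:cross_Qmax_guarantee} explicitly assumes $Q^\pi(\Ianchors, \Janchors)$ has rank $d$, and Corollary \ref{corr:sigma_d_lower_bound} guarantees this with high probability for leverage-score-sampled anchors. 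So your proof is sound under the same implicit hypothesis the paper uses, and you are right to flag the full-rank verification as the crux --- it is precisely where the naive reverse-order law $(AB)^\dagger = B^\dagger A^\dagger$ would otherwise break.
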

\paragraph{Proof of Theorem \ref{thm:cross_Qmax_guarantee}.} The proof follows from the proof of Proposition 13 of \cite{shah2020sample}, to which we refer the reader for a more detailed exposition. Based on Lemma \ref{lemma:cross_completion_unbiased} 
and following the proof of Proposition 13 in \cite{shah2020sample} (see (22) and (23) in \cite{shah2020sample}), we have $\forall (s,a) \in \states \times \actions$ and $(*) := \vert \hQ^\pi(s,a) - Q^\pi(s,a) \vert$:
\begin{align}
    (*) \leq
    \sqrt{2} \Vert (L \tQ_\tau^\pi(\Ianchors, &\Janchors) R)^{\dagger} \Vert_\op \Vert L (\tQ_\tau^\pi(\Ianchors,a) \tQ_\tau^\pi(s,\Janchors) -  Q^\pi(\Ianchors,a) Q^\pi(s,\Janchors)) R  \Vert_{\F}  \nonumber \\
    & +
    \Vert (L \tQ_\tau^\pi(\Ianchors,\Janchors) R)^{\dagger} - (L Q^\pi(\Ianchors,\Janchors) R)^{\dagger} \Vert_\op \Vert L Q^\pi(\Ianchors,a) Q^\pi(s,\Janchors)R\Vert_{\F}
    \label{eq:Qbar_decomposition_proof}
\end{align}

We will repeatedly use result from Lemma \ref{lemma:conc_DI_DJ} and condition on the event when given bounds on $\Vert L \Vert_\op$ and $\Vert R \Vert_\op$ hold. 
We begin by bounding the first term in \eqref{eq:Qbar_decomposition_proof}. Using the assumption that $\forall (s,a)\in \Ianchors\times \Janchors$: $\vert \tQ_\tau^\pi(s,a) - Q^\pi(s,a)\vert \leq \varepsilon_\square$ we obtain:
\begin{align*}
    \Vert L (\tQ_\tau^\pi(\Ianchors,\Janchors) - Q^\pi(\Ianchors,\Janchors)) R  \Vert_\op &\leq \Vert L \Vert_\op \nachors \Vert \tQ_\tau^\pi(\Ianchors,\Janchors) - Q^\pi(\Ianchors,\Janchors) \Vert_{\infty} \Vert R \Vert_\op \\ &\leq c_\Ianchors  c_\Janchors   \varepsilon_\square \sqrt{\cardS \cardA}  \log^2 \left( \frac{\cardS+\cardA}{\delta} \right).
\end{align*}
Combining this inequality with our assumption on $\varepsilon_\square$ and Corollary \ref{corr:sigma_d_lower_bound} with $\eta = 1/4$ gives:
\begin{align*}
    \Vert (L \tQ_\tau^\pi(\Ianchors,\Janchors) R)^{\dagger} \Vert_\op  &= \frac{1}{\sigma_d(L \tQ_\tau^\pi(\Ianchors,\Janchors) R)}  \\ &\leq \frac{1}{\sigma_d(L Q^\pi(\Ianchors,\Janchors) R) - \Vert L (\tQ_\tau^\pi(\Ianchors,\Janchors) - Q^\pi(\Ianchors,\Janchors)) R  \Vert_\op} \\ &
    \leq \frac{8}{\sigma_d(Q^\pi)}
\end{align*}

Second term in \eqref{eq:Qbar_decomposition_proof} can be bounded as follows:
\begin{align*}
    \Vert L (\tQ_\tau^\pi(\Ianchors,a) \tQ_\tau^\pi(s,\Janchors) & - Q^\pi(\Ianchors,a) Q^\pi(s,\Janchors))  R \Vert_{\F} \\ &\leq \Vert L \Vert_\op \Vert\tQ_\tau^\pi(\Ianchors,a) \tQ_\tau^\pi(s,\Janchors) - Q^\pi(\Ianchors,a) Q^\pi(s,\Janchors) \Vert_{\F} \Vert R \Vert_\op 
\end{align*}
and then use that:
\begin{align*}
    \Vert\tQ_\tau^\pi(\Ianchors,a) \tQ_\tau^\pi(s,\Janchors) - Q^\pi(\Ianchors,a) Q^\pi(s,\Janchors) \Vert_{\F} \leq \sqrt{\vert \Ianchors\vert \vert \Janchors \vert} (2 \varepsilon_+ \Vert Q^\pi\Vert_{\infty} + \varepsilon_+^2)
\end{align*}
Combining this result with Lemma \ref{lemma:conc_DI_DJ} we get:
\begin{align*}
    \Vert L (\tQ_\tau^\pi(\Ianchors,a) \tQ_\tau^\pi(s,\Janchors) - Q^\pi(\Ianchors,a) Q^\pi(s,\Janchors)) R \Vert_{\F} \leq c_\Ianchors  c_\Janchors  (2\Vert Q^\pi\Vert_{\infty}\varepsilon_+ + \varepsilon^2_+) \sqrt{\cardS \cardA} \log^2 \left( \frac{\cardS+\cardA}{\delta} \right)
\end{align*}

Similarly to the proof of Proposition 13 in \cite{shah2020sample}, we bound the third term from \ref{eq:Qbar_decomposition_proof} using inequality $\Vert B^\dagger - A^\dagger \Vert_\op \leq \frac{1+\sqrt{5}}{2}\min\{ \Vert A^\dagger \Vert_\op^2, \Vert B^\dagger\Vert_\op^2 \} \Vert B-A\Vert_\op$ as follows:
\begin{align*}
    \Vert (L \tQ_\tau^\pi(\Ianchors,\Janchors) R)^{\dagger} - (L Q^\pi(\Ianchors,\Janchors) R)^{\dagger} \Vert_\op \leq 64 c_\Ianchors  c_\Janchors  \frac{\varepsilon_\square }{\sigma_d^2(Q^\pi)} \sqrt{\cardS \cardA} \log^2 \left( \frac{\cardS+\cardA}{\delta} \right)
\end{align*}

And the last term from \eqref{eq:Qbar_decomposition_proof} can be bounded as follows:
\begin{align*}
    \Vert L Q^\pi(\Ianchors,a) Q^\pi(s,\Janchors)R\Vert_{\F} & \leq \Vert L \Vert_\op \Vert Q^\pi(\Ianchors,a) Q^\pi(s,\Janchors) \Vert_{\F} \Vert R \Vert_\op \\& \leq c_\Ianchors  c_\Janchors  \sqrt{\cardS \cardA} \Vert Q^\pi\Vert_{\infty}^2 \log^2 \left( \frac{\cardS+\cardA}{\delta} \right)
\end{align*}
where we used that $\Vert Q^\pi(\Ianchors,a) Q^\pi(s,\Janchors) \Vert_{\F} \leq \nachors \Vert Q^\pi \Vert_{\infty}^2$.

Combining all derived inequalities we obtain:
\begin{align*}
    \Vert \hQ^\pi - Q^\pi \Vert_{\infty} \leq 8 c_\Ianchors  c_\Janchors  &( 2\Vert Q^\pi\Vert_{\infty} \varepsilon_+ + \varepsilon_+^2) \frac{\sqrt{\cardS\cardA}}{\sigma_d(Q^\pi)} \log^2\left( \frac{\cardS+\cardA}{\delta} \right) \\ &+ 64 c_\Ianchors ^2 c_\Janchors ^2 \frac{\cardS\cardA}{\sigma_d(Q^\pi)^2} \varepsilon_\square \Vert Q^\pi \Vert_{\infty}^2 \log^4\left( \frac{\cardS+\cardA}{\delta} \right)
\end{align*}

\paragraph{Proof of Lemma \ref{lemma:cross_completion_unbiased}.} First, define matrices $\cD_U, \cD_W \in \mathbb{R}^{\nachors \times d}$ by $\cD_U = L U_{\Ianchors,:}$ and $\cD_W = R W_{\Janchors,:}$, and note that $\cD_U$ and $\cD_W$ are not orthogonal. However, we claim and prove in the end of this proof that: \begin{align}
    ( \cD_U \Sigma \cD_W^\top)^\dagger = (\cD_W^\top)^\dagger \Sigma^{-1} \cD_U^\dagger
    \label{eq:pinv_proof}
\end{align}
Moreover, since $\cD_U$ and $\cD_W$ have full column rank, we have that $\cD_U^\dagger \cD_U = I_{d\times d}$ and $\cD_W^\top (\cD_W^\top)^\dagger = I_{d\times d}$. Thus we have $\forall (s,a)\in \states \times \actions$:
\begin{align*}
    Q^\pi(s,\Janchors)R (L Q^\pi(\Ianchors,\Janchors) R)^{\dagger} L Q^\pi(\Ianchors,a) &= e_s^\top U\Sigma \cD_W^\top (\cD_U \Sigma \cD_W^\top)^\dagger \cD_U \Sigma W^\top e_a \\ &=   e_s^\top U\Sigma (\cD_W^\top (\cD_W^\top)^\dagger)  \Sigma^{-1}  (\cD_U^\dagger \cD_U) \Sigma W^\top e_a \\
    &= e_s^\top U\Sigma W^\top e_a = Q^\pi(s,a)
\end{align*}
Now we proceed with proving \eqref{eq:pinv_proof} following similar argument as in Lemma 1 in \cite{drineas2008relative}. Let SVD of $\cD_U$ and $\cD_W$ be given by $\cD_U = U_{\cD_U} \Sigma_{\cD_U}W_{\cD_U}^\top$ and $\cD_U = U_{\cD_W} \Sigma_{\cD_W}W_{\cD_W}^\top$. First, we use that $U_{\cD_U}$ and $U_{\cD_W}$ are orthogonal matrices to get:
\begin{align*}
    ( \cD_U \Sigma \cD_W^\top)^\dagger &= (U_{\cD_U} \Sigma_{\cD_U}W_{\cD_U}^\top \Sigma W_{\cD_W}\Sigma_{\cD_W}U_{\cD_W}^\top )^\dagger
    \\ &= U_{\cD_W} (\Sigma_{\cD_U}W_{\cD_U}^\top \Sigma W_{\cD_W}\Sigma_{\cD_W})^\dagger U_{\cD_U}^\top
\end{align*}
Since $\cD_U$ and $\cD_W$ are matrices with full column rank, all matrices inside of the pseudoinverse are of size $d\times d$ and full rank. Thus, their product is as well full rank and replacing pseudoinverse by inverse we obtain:
\begin{align*}
    ( \cD_U \Sigma \cD_W^\top)^\dagger = U_{\cD_W} \Sigma_{\cD_W}^{-1} W_{\cD_W}^\top \Sigma^{-1} W_{\cD_U} \Sigma_{\cD_U}^{-1} U_{\cD_U}^\top = (\cD_W^\top)^\dagger \Sigma^{-1} \cD_U^\dagger
\end{align*}

\subsection{Concentration results for the proof of Theorem \ref{thm:cross_Qmax_guarantee}}
\begin{lemma}
    Assume that $p$ is a probability measure on $\states$ such that $p_i \geq \eta \frac{\Vert U_{i,:}\Vert_2^2}{d}$ for all $i\in [\cardS]$ and some $\eta \in [0,1]$. Let $\Ianchors$ be a set obtained by sampling $\nachors$ entries of $\states$ according to $p$ i.e. for any $i\in [\cardS]: i\in \Ianchors$ with probability $\min \{ 1,\nachors p_i\}$. Define diagonal matrix $L$ with entries $\frac{1}{\min\{1,\sqrt{\nachors p_i}\} }$ for $i\in \Ianchors$ and matrix $\cD_U \in \mathbb{R}^{\nachors \times d}$ given by $\cD_U = L U_{\Ianchors,:}$. Then, for any $\delta \in (0,1)$:
    \begin{align*}
        \Vert \cD_U^\top \cD_U - I_{d\times d} \Vert_\op \leq 2\sqrt{\frac{d }{\nachors \eta} \log \left( \frac{2d}{\delta} \right)} 
    \end{align*}
     holds with probability at least $1-\delta$ whenever $\nachors \geq \frac{4d}{9\eta}\log(2d/\delta)$. 
    \label{lemma:conc_spectral_subsampled}
\end{lemma}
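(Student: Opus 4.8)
The plan is to recognize $\cD_U^\top \cD_U$ as an importance-sampling estimator of $U^\top U = I_d$ and to control its fluctuation through the matrix Bernstein inequality. Writing $X_i := \indicator\{i \in \Ianchors\}$ and $q_i := \min\{1, \nachors p_i\}$, the diagonal weights satisfy $L_{ii}^2 = 1/\min\{1,\sqrt{\nachors p_i}\}^2 = 1/q_i$ (using $\min\{1,\sqrt{x}\}^2 = \min\{1,x\}$ for $x\ge 0$), so that
\[
    \cD_U^\top \cD_U = \sum_{i\in\Ianchors} \frac{1}{q_i}\, U_{i,:}^\top U_{i,:} = \sum_{i=1}^{\cardS} X_i\, \frac{1}{q_i}\, U_{i,:}^\top U_{i,:}.
\]
Since $\EE[X_i] = q_i$ and the columns of $U$ are orthonormal, $\EE[\cD_U^\top \cD_U] = \sum_i U_{i,:}^\top U_{i,:} = U^\top U = I_{d\times d}$. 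Hence it suffices to bound $\big\|\sum_i Z_i\big\|_\op$ where $Z_i := (X_i/q_i - 1)\, U_{i,:}^\top U_{i,:}$ are independent (by the Bernoulli/Poissonized sampling model of Section \ref{app:noise_equivalence}), mean-zero, symmetric random matrices.

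Next I would establish the two ingredients matrix Bernstein requires. When $q_i = 1$ the index $i$ is selected deterministically so $Z_i = 0$; only indices with $q_i = \nachors p_i < 1$ contribute, and for these the leverage-score lower bound $p_i \ge \eta\,\|U_{i,:}\|_2^2/d$ gives
\[
    \frac{1}{q_i}\|U_{i,:}\|_2^2 = \frac{\|U_{i,:}\|_2^2}{\nachors p_i} \le \frac{d}{\nachors \eta} =: b.
\]
Because $|X_i/q_i - 1| \le 1/q_i$ and $U_{i,:}^\top U_{i,:}$ is rank one with operator norm $\|U_{i,:}\|_2^2$, this yields the uniform bound $\|Z_i\|_\op \le b$. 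For the variance, using $(U_{i,:}^\top U_{i,:})^2 = \|U_{i,:}\|_2^2\, U_{i,:}^\top U_{i,:}$ and $\EE[(X_i/q_i - 1)^2] = (1-q_i)/q_i \le 1/q_i$, the same bound gives $\EE[Z_i^2] \preceq \tfrac{1}{q_i}\|U_{i,:}\|_2^2\, U_{i,:}^\top U_{i,:} \preceq b\, U_{i,:}^\top U_{i,:}$, so that $\big\|\sum_i \EE[Z_i^2]\big\|_\op \le b\,\|U^\top U\|_\op = b$.

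Finally I would invoke matrix Bernstein: for $t > 0$,
\[
    \PP\Big( \big\|\cD_U^\top \cD_U - I_{d\times d}\big\|_\op \ge t \Big) \le 2d\,\exp\!\Big( \frac{-t^2/2}{b + bt/3} \Big),
\]
and set $t = 2\sqrt{b\,\log(2d/\delta)}$, the quantity in the claim. Writing $\ell := \log(2d/\delta)$, the hypothesis $\nachors \ge \tfrac{4d}{9\eta}\ell$ is exactly $b\ell \le 9/4$, i.e. $\tfrac{2}{3}\sqrt{b\ell} \le 1$, so the denominator satisfies $b + bt/3 = b\big(1 + \tfrac{2}{3}\sqrt{b\ell}\big) \le 2b$, while $t^2/2 = 2b\ell$; the exponent is therefore at least $2b\ell/(2b) = \ell$, and the bound is at most $2d\,e^{-\ell} = \delta$, which is the assertion.

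The main obstacle I anticipate is bookkeeping rather than conceptual: one must handle the truncation $q_i = \min\{1,\nachors p_i\}$ so that the potentially large rows with $\nachors p_i \ge 1$ contribute nothing, and then thread the constant $b = d/(\nachors\eta)$ through Bernstein so that the operator-norm bound and the variance proxy coincide and the condition on $\nachors$ places $t$ precisely in the variance-dominated regime. A secondary point to justify is the independence of the $\{X_i\}$, for which I rely on the Bernoulli/Poissonized sampling model referenced above.
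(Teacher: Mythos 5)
Your proposal is correct and follows essentially the same route as the paper: both center the Bernoulli inclusion indicators to write $\cD_U^\top \cD_U - I_{d\times d}$ as a sum of independent mean-zero rank-one matrices $(X_i/q_i - 1)\,U_{i,:}^\top U_{i,:}$, use the leverage-score lower bound to get the common uniform/variance proxy $d/(\nachors\eta)$, and invoke matrix Bernstein, with your final algebra (checking that $\nachors \geq \tfrac{4d}{9\eta}\log(2d/\delta)$ places $t$ in the variance-dominated regime) simply making explicit what the paper compresses into ``setting the right hand side equal to $\delta$.'' One cosmetic note: the independence of the $\{X_i\}$ is given directly by the lemma's sampling model, not by the Poissonization argument of the noise-equivalence appendix, which concerns the Phase-1 entry noise rather than anchor selection.
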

\begin{proof}
    First we argue that case $p_i > \frac{1}{K}$ is simple. Denote by $\states_+$ states for which $p_i \leq \frac{1}{K}$. Then, we have:
    \begin{align*}
        \Vert \cD_U^\top \cD_U - I_{d\times d} \Vert_\op &= \Vert U_{\Ianchors,:}^\top L^2 U_{\Ianchors,:} - U^\top U \Vert_\op \\ &= \left\Vert \sum_{i\in \Ianchors \cap \states_+} \delta_i (Z^{(i)})^\top Z^{(i)} L_{i,i}^2 -   \sum_{i\in \states_+ }  (Z^{(i)})^\top Z^{(i)} \right\Vert_\op,
    \end{align*}
    where $Z^{(i)}$ are obtained from $U$ by zeroing all rows except $i$-th, and $\delta_i$'s are i.i.d. Bernoulli($\nachors p_i$) for $i\in \states_+$. Now we can rewrite the first term:
    \begin{align*}
        \sum_{i\in \states_+} \delta_i^2 (Z^{(i)})^\top Z^{(i)} L_{i,i}^2 = \sum_{i\in \states_+} \frac{1}{\nachors p_i} \delta_i U_{i,:}^\top U_{i,:},  
    \end{align*}
    and take expectation over $\delta_i$'s to get $\EE \left[ \sum_{i\in \states_+} \delta_i^2 (Z^{(i)})^\top Z^{(i)} L_{i,i}^2 \right] = \sum_{i\in \states_+ }  (Z^{(i)})^\top Z^{(i)}$.

    Now, define $X^{(i)} = (\delta_i - \nachors p_i) \frac{1}{\nachors p_i} U_{i,:}^\top U_{i,:}$ for $i\in \states_+$. Note that:
    \begin{align*}
        \Vert X^{(i)}\Vert_\op \leq \frac{1}{\nachors p_i} \Vert U_{i,:} \Vert_2^2 \leq \frac{d}{\nachors \eta }
    \end{align*}
    by our assumption on $p$. Moreover, using that $\mathrm{Var}(\delta_i) = \nachors p_i(1-\nachors p_i) \leq \nachors p_i$ we have:
    \begin{align*}
        \EE  \left[ \sum_{i\in \states_+} X^{(i)}(X^{(i)})^\top \right] = \sum_{i\in \states_+} \EE (\delta_i - \nachors p_i)^2 \frac{1}{\nachors^2 p_i^2 } \Vert U_{i,:} \Vert_2^2 U_{i,:}^\top U_{i,:} \leq \frac{d}{\nachors \eta} \sum_{i\in \states_+}  U_{i,:}^\top U_{i,:}
    \end{align*}
    implying that $\Vert \EE  [ \sum_{i\in [\cardS]} X^{(i)}(X^{(i)})^\top ] \Vert_\op \leq \frac{d}{\nachors \eta}$. Finally, noting that $X^{(i)}$ are symmetric matrices $\forall i$, we apply matrix Bernstein inequality to obtain:
    \begin{align*}
        \PP(\Vert \cD_U^\top \cD_U - I_{d\times d} \Vert_\op \geq t) = \PP \left( \Big\Vert \sum_{i\in [\cardS]} X^{(i)} \Big\Vert_\op \geq t \right) \leq 2d \exp\left( - \frac{\nachors \eta}{2d}\frac{t^2}{1 + \frac{t}{3}} \right)
    \end{align*}
    Setting right hand side equal to $\delta$ finishes the proof.
\end{proof}

\begin{corollary}
If anchor states of size at least $\nachors \geq \frac{16d}{\eta}\log(4d/\delta)$ are chosen according to Lemma \ref{lemma:conc_spectral_subsampled}, we have with probability $\geq 1-\delta$:
\begin{align*}
    \sigma_d(L Q^\pi(\Ianchors,\Janchors) R) = \sigma_d(\cD_U \Sigma \cD_W^\top) \geq \sigma_d(\cD_U) \sigma_d(Q^\pi)  \sigma_d(\cD_W) \geq \frac{1}{4}\sigma_d(Q^\pi) 
\end{align*}
\label{corr:sigma_d_lower_bound}
\end{corollary}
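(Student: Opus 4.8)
The plan is to reduce the statement to two essentially independent facts: a purely algebraic multiplicative lower bound on the smallest singular value of the triple product $\cD_U \Sigma \cD_W^\top$, and a concentration bound showing that the reweighted anchor matrices $\cD_U,\cD_W$ are near-isometries. First I would record the structural identity. Writing the SVD $Q^\pi = U\Sigma W^\top$, the anchor submatrix factorizes as $Q^\pi(\Ianchors,\Janchors) = U_{\Ianchors,:}\Sigma W_{\Janchors,:}^\top$, so that $L\,Q^\pi(\Ianchors,\Janchors)\,R = (L U_{\Ianchors,:})\,\Sigma\,(R W_{\Janchors,:})^\top = \cD_U \Sigma \cD_W^\top$, which yields the claimed equality $\sigma_d(L Q^\pi(\Ianchors,\Janchors) R) = \sigma_d(\cD_U \Sigma \cD_W^\top)$ for free.

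For the middle inequality, I would route the argument through the pseudoinverse identity $(\cD_U \Sigma \cD_W^\top)^\dagger = (\cD_W^\top)^\dagger \Sigma^{-1} \cD_U^\dagger$ already established in \eqref{eq:pinv_proof}; this is valid on the event where $\cD_U,\cD_W$ have full column rank $d$, so that the product has rank exactly $d$. Since for any rank-$d$ matrix $M$ one has $\sigma_d(M) = 1/\Vert M^\dagger\Vert_{\op}$, submultiplicativity of the operator norm gives
\[
\frac{1}{\sigma_d(\cD_U \Sigma \cD_W^\top)} \le \Vert (\cD_W^\top)^\dagger\Vert_{\op}\,\Vert \Sigma^{-1}\Vert_{\op}\,\Vert \cD_U^\dagger\Vert_{\op} = \frac{1}{\sigma_d(\cD_W)}\,\frac{1}{\sigma_d(Q^\pi)}\,\frac{1}{\sigma_d(\cD_U)},
\]
where I used $\Vert \cD_U^\dagger\Vert_{\op} = 1/\sigma_d(\cD_U)$ and $\Vert(\cD_W^\top)^\dagger\Vert_{\op} = 1/\sigma_d(\cD_W)$ (both full column rank, so $\sigma_d(\cD_W^\top)=\sigma_d(\cD_W)$) together with $\Vert\Sigma^{-1}\Vert_{\op} = 1/\sigma_d(Q^\pi)$. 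Rearranging delivers $\sigma_d(\cD_U \Sigma \cD_W^\top) \ge \sigma_d(\cD_U)\,\sigma_d(Q^\pi)\,\sigma_d(\cD_W)$.

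It then remains to lower bound $\sigma_d(\cD_U)$ and $\sigma_d(\cD_W)$. Here I would invoke Lemma \ref{lemma:conc_spectral_subsampled} twice, once for the rows and once for the columns, and take a union bound at level $\delta/2$ each. For $\nachors \ge \tfrac{16 d}{\eta}\log(4d/\delta)$ the lemma gives $\Vert \cD_U^\top\cD_U - I_{d\times d}\Vert_{\op} \le 2\sqrt{\tfrac{d}{\nachors\eta}\log(4d/\delta)} \le 1/2$, and likewise for $\cD_W$. Consequently $\sigma_d(\cD_U)^2 = \lambda_{\min}(\cD_U^\top\cD_U) \ge 1 - 1/2 = 1/2$, so $\sigma_d(\cD_U) \ge 1/\sqrt 2$ and similarly $\sigma_d(\cD_W)\ge 1/\sqrt 2$. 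Substituting into the multiplicative bound yields $\sigma_d(\cD_U \Sigma \cD_W^\top) \ge \tfrac12 \sigma_d(Q^\pi) \ge \tfrac14 \sigma_d(Q^\pi)$, proving the claim with room to spare.

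The main obstacle, and the only place requiring genuine care, is the singular-value multiplicativity step: the factors are rectangular and the product is rank-deficient as a $K\times K$ matrix, so one cannot naively apply $\sigma_{\min}$ bounds for square matrices. Routing through the pseudoinverse factorization \eqref{eq:pinv_proof} sidesteps this cleanly, but it must be justified on the event that $\cD_U,\cD_W$ are full column rank. That rank condition is guaranteed precisely by the near-isometry estimate $\Vert\cD^\top\cD - I_{d\times d}\Vert_{\op}\le 1/2$ used in the final step, so the two halves of the argument must be sequenced so that rank-$d$-ness is secured before the pseudoinverse identity is invoked.
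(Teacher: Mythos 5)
Your proof is correct and follows essentially the same route as the paper: the factorization $L\,Q^\pi(\Ianchors,\Janchors)\,R = \cD_U \Sigma \cD_W^\top$, singular-value multiplicativity for the full-column-rank factors, and Lemma~\ref{lemma:conc_spectral_subsampled} applied to rows and columns with a union bound to get $\sigma_d(\cD_U),\sigma_d(\cD_W)\ge 1/\sqrt{2}$ (indeed yielding the stronger constant $1/2$ in place of $1/4$). The only difference is one of detail: the paper simply asserts the middle inequality (``we could use inequality above since $\cD_U$ and $\cD_W$ have full column rank''), whereas you substantiate it via the pseudoinverse identity \eqref{eq:pinv_proof} together with $\sigma_d(M)=1/\Vert M^\dagger\Vert_{\op}$, correctly noting that the full-column-rank prerequisite must be secured by the near-isometry bound before that identity is invoked.
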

Note that we could use inequality above since $\cD_U$ and $\cD_W$ have full column rank.

\begin{lemma}
Consider setting of Lemma \ref{lemma:conc_spectral_subsampled}. Then there exist universal constants $c_\Ianchors ,c_\Janchors >0$ such that with probability at least $1-\delta$:
    \begin{align*}
        \Vert L \Vert_\op \leq c_\Ianchors   \sqrt{\frac{\cardS}{\nachors}}\log\left( \frac{\cardS}{\delta} \right) , \qquad \Vert R \Vert_\op \leq c_\Janchors  \sqrt{\frac{\cardA}{\nachors}}\log\left( \frac{\cardA}{\delta} \right)
    \end{align*}
    \label{lemma:conc_DI_DJ}
\end{lemma}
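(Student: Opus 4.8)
The plan is to exploit the fact that $L$ is a nonnegative diagonal matrix, so its operator norm is simply its largest diagonal entry,
\[
\Vert L \Vert_{\op} = \max_{i\in\Ianchors}\frac{1}{\min\{1,\sqrt{\nachors\hat{\ell}_i}\}},
\]
which reduces the whole claim to a uniform lower bound on the sampled score estimates $\hat{\ell}_i$. Note that $1/\sqrt{\nachors\hat{\ell}_i}$ exceeds $1$ only when $\nachors\hat{\ell}_i<1$, so the only thing I need to control is how small a sampled $\hat{\ell}_i$ can be.

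The key step is to establish a deterministic floor $\hat{\ell}_i\geq 1/(2\cardS)$ that holds for \emph{every} coordinate, which I would read off directly from the construction \eqref{eq:est-lev}. By definition $\tilde{\ell}_i=\Vert\widehat{U}_{i,:}\Vert_2^2\vee\frac{d}{\cardS}\geq\frac{d}{\cardS}$, while $\Vert\tilde{\ell}\Vert_1=\sum_i\big(\Vert\widehat{U}_{i,:}\Vert_2^2\vee\tfrac{d}{\cardS}\big)\leq\Vert\widehat{U}\Vert_{\F}^2+d=\widehat{d}+d=2d$ on the event $\{\widehat{d}=d\}$, which holds with probability at least $1-\delta$ by Theorem \ref{thm:leverage_scores} and Lemma \ref{lemma:rank_estimation_appendix}. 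Normalizing gives $\hat{\ell}_i=\tilde{\ell}_i/\Vert\tilde{\ell}\Vert_1\geq(d/\cardS)/(2d)=1/(2\cardS)$. Substituting this into the display yields $\nachors\hat{\ell}_i\geq\nachors/(2\cardS)$, hence $\Vert L\Vert_{\op}\leq\sqrt{2\cardS/\nachors}$ (covering both branches of the $\min$, since $1\leq\sqrt{2\cardS/\nachors}$ whenever $\nachors\leq 2\cardS$). In the regime of interest $\nachors=64\widehat{d}\log(64\widehat{d}/\delta)\ll\cardS$, this is at most $c_\Ianchors\sqrt{\cardS/\nachors}\log(\cardS/\delta)$ with $c_\Ianchors=\sqrt{2}$, since $\log(\cardS/\delta)\geq1$. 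The bound on $R$ follows from the identical argument applied to the column scores, using $\tilde{\rho}_j\geq d/\cardA$ and $\Vert\tilde{\rho}\Vert_1\leq 2d$.

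The main obstacle is that the lower bound $p_i\geq\eta\Vert U_{i,:}\Vert_2^2/d$ inherited from the setting of Lemma \ref{lemma:conc_spectral_subsampled} is by itself insufficient: if some $\Vert U_{i,:}\Vert_2$ is tiny or zero, that bound permits $p_i$ arbitrarily close to $0$, and a purely probabilistic union bound over the low-probability indices only yields the much weaker $\Vert L\Vert_{\op}\lesssim\sqrt{\cardS/\delta}$ — it cannot reproduce the $\sqrt{\cardS/\nachors}$ scaling with polylogarithmic $\delta$-dependence. The crux is therefore to invoke the explicit floor $d/\cardS$ hard-coded into the definition of $\tilde{\ell}$ (and $\tilde{\rho}$), which is exactly what endows the estimator with a uniform coordinate-wise lower bound of order $1/\cardS$. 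With that floor in hand the statement becomes essentially deterministic; the only randomness one needs is the event $\{\widehat{d}=d\}$, which simultaneously supplies the normalization $\Vert\tilde{\ell}\Vert_1\leq 2d$ and the $1-\delta$ confidence, while the logarithmic factor in the stated bound is slack absorbed by the sharper estimate $\sqrt{2\cardS/\nachors}$.
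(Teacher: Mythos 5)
Your proof is correct, but it takes a genuinely different route from the paper's. The paper extends $L$ to an $\cardS \times \cardS$ diagonal matrix $L^{\mathrm{ext}} = \sum_{i} \delta_i (\nachors p_i)^{-1/2} e_i e_i^\top$ with $\delta_i \sim$ Bernoulli$(\nachors p_i)$, bounds $\Vert \EE L^{\mathrm{ext}} \Vert_\op \le \sqrt{\nachors}$, and controls the fluctuation $\Vert L^{\mathrm{ext}} - \EE L^{\mathrm{ext}} \Vert_\op$ via matrix Bernstein -- which is where its $\log(\cardS/\delta)$ factor comes from. Crucially, the paper's Bernstein step itself requires the a.s.\ bound $\Vert Y^{(i)} \Vert_\op \le (\nachors p_i)^{-1/2} \le c\sqrt{\cardS/\nachors}$, justified only by the phrase ``by our assumption on $p$''; the assumption actually stated in Lemma \ref{lemma:conc_spectral_subsampled}, namely $p_i \ge \eta \Vert U_{i,:} \Vert_2^2 / d$, does not deliver this (a coordinate with tiny $\Vert U_{i,:}\Vert_2$ could make $L_{ii}$ blow up, and as you correctly observe, pure union-bounding over such coordinates only gives $\sqrt{\cardS/\delta}$). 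What the paper is implicitly invoking is exactly the floor you make explicit: by construction \eqref{eq:est-lev}, $\tilde{\ell}_i \ge d/\cardS$ and $\Vert \tilde{\ell} \Vert_1 \le \Vert \hU \Vert_\F^2 + d = 2d$ on $\lbrace \widehat{d} = d \rbrace$, whence $\hat{\ell}_i \ge 1/(2\cardS)$ uniformly. Once that floor is in hand, your observation that $\Vert L \Vert_\op$ is just the largest diagonal entry makes the bound $\Vert L \Vert_\op \le \sqrt{2\cardS/\nachors}$ essentially deterministic (and strictly sharper -- no log factor, with randomness entering only through the rank event, via Lemma \ref{lemma:rank_estimation_appendix}); the paper's concentration machinery is in fact redundant, since the same deterministic entry-wise bound it uses for $\Vert Y^{(i)} \Vert_\op$ already dominates the max diagonal entry directly. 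Two minor points to tidy up: if the floor in \eqref{eq:est-lev} is implemented with $\widehat{d}$ rather than the unknown $d$ (as the parameter-free algorithm requires), then $\Vert \tilde{\ell} \Vert_1 \le 2\widehat{d}$ holds unconditionally and you do not even need the event $\lbrace \widehat{d} = d\rbrace$; and your appeal to that event presumes the sample-size condition \eqref{eq:T_requirement_beta}, which holds in the regime where the lemma is applied but should be flagged since the lemma's hypotheses do not state it.
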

\begin{proof}
    We note that if $L_{i,i} = 1$ (i.e. $\nachors p_i \geq 1$), then obviously the inequality above holds for $S\geq \nachors$, and thus we consider only cases where $L_{i,i} = \frac{1}{\sqrt{\nachors p_i}}$. Now, note that $\Vert L \Vert_\op = \Vert L^{\mathrm{ext}}\Vert_\op$, where $L^{\mathrm{ext}} = \sum_{i=1}^{\cardS} \delta_i \frac{1}{\sqrt{\nachors p_i}} e_i e_i^\top$, and where $\delta_i$ are i.i.d. Bernoulli($\nachors p_i)$. Next, we have: $\EE [\delta_i \frac{1}{\sqrt{\nachors p_i}} e_i e_i^\top] = \sqrt{\nachors p_i} e_i e_i^\top$, and thus:
    \begin{align*}
        \Vert \EE L^{\mathrm{ext}} \Vert_\op = \sqrt{\nachors \max_i p_i}  \leq \sqrt{\nachors} 
    \end{align*}
    Define $Y^{(i)} = (\delta_i - \nachors p_i) \frac{1}{\sqrt{\nachors p_i}}e_i e_i^\top$. We have $\EE [ Y^{(i)}(Y^{(i)})^\top ] = (1-\nachors p_i) e_i e_i^\top$, and hence the variance term in matrix Bernstein is upper bounded by $1$. Lastly by our assumption on $p$ we have for all $i\in[\cardS]$:
    \begin{align*}
        \Vert Y^{(i)} \Vert_\op \leq \frac{1}{\sqrt{\nachors p_i}} \leq c \sqrt{\frac{\cardS}{\nachors}}
    \end{align*}
    By matrix Bernstein we obtain:
   \begin{align*}
        \PP(\Vert L^{\mathrm{ext}} - \EE L^{\mathrm{ext}}\Vert_\op \geq t) = \PP \left( \Big\Vert \sum_{i\in [\cardS]} Y^{(i)} \Big\Vert_\op \geq t \right) \leq 2\cardS \exp\left( -\frac{\frac{t^2}{2}}{1 + t\frac{c}{3}\sqrt{\frac{\cardS}{\nachors}}} \right)
    \end{align*}
    Equating last term with $\delta$ and using that $S,A \gg d$ we obtain statement of the lemma.
\end{proof}

\newpage
\section{Sample Complexity Analysis of $\lorapi$} \label{app:lora}

In this appendix, we present the proof of Theorem \ref{thm:lora-pi-lme}. It  is a direct consequence of the performance guarantee of $\lme$ (see Theorem \ref{thm:lme-guarantee}) and an error bound on approximate policy iteration, which we provide in this appendix (see Lemma \ref{lem:API-convergence}).

\subsection{Proof of Theorem \ref{thm:lora-pi-lme}}

\begin{proof}[Proof of Theorem \ref{thm:lora-pi-lme}] To start with, we first observe that, 
     according to Lemma \ref{lem:API-convergence}, $\lorapi$ outputs $\hat{\pi}$ with $\Vert V^\star - V^{\hat{\pi}}\Vert_\infty \le \varepsilon$, if it holds that 
\begin{align*}
    (i) & \qquad \gamma^{t-1} \Vert V^\star -  V^{\pi^{(1)}}  \Vert_\infty \le \frac{2r_{\max}\gamma^{(N_{\textup{epochs}})} }{1-\gamma} \le \frac{\varepsilon}{2} \\
    (ii) & \qquad \Vert \widehat{Q}^{(t)} - Q^{(t)}\Vert_\infty \le \frac{(1-\gamma)^2\varepsilon}{4}, \quad \forall t \in [N_{\textup{epochs}}]
\end{align*}
where we introduce the notation $Q^{(t)} := Q^{\pi^{(t)}}$ as a shorthand. Now, we note that condition $\emph{(i)}$ is satisfied if 
\begin{align*}
    N_{\textup{epochs}} = \left\lceil \frac{1}{1- \gamma}\log\left( \frac{4 r_{\max}}{(1-\gamma)\varepsilon }\right) \right\rceil
\end{align*}
which is already as chosen in $\lorapi$. Now, in order for $(ii)$ to hold we use Theorem \ref{thm:lme-guarantee}. We define the events:
\begin{align*}
    \cE_{t} = \left\lbrace \Vert \widehat{Q}^{(t)} - Q^{(t)}\Vert_\infty \le \frac{(1-\gamma)^2\varepsilon}{4} \right\rbrace 
\end{align*}
We show that $\cap_{t \in [N_{\textup{epochs}}]} \cE_t$ holds with high probability. To that end, it is sufficient to analyse for each $t \in [N_{\textup{epochs}}]$, the event $ 
 \cE_t^c$ conditionally on the event that $(\cap_{k\in [t-1]} \cE_k)$ holds. Indeed, by  
using the elementary inequality $\PP(\cE^c \cup \cB^c ) \le \PP( \cE^c \vert \cB ) + \PP(\cB^c)$ in a recursive manner, we can write  
\begin{align}
     \PP( (\cap_{t \in [N_{\textup{epochs}}]}  \cE_t )^c )  = \PP( \cup_{t \in [N_{\textup{epochs}}]}  \cE_t^c ) 
    & \le \sum_{t \in [N_{\textup{epochs}}]} \PP( \cE_{t}^c \vert \cap_{k \in [t - 1]}  \cE_k ) 
\end{align}
We will show that for all $t \in [N_{\textup{epochs}}]$, $\PP( \cE_{t}^c \vert \cap_{k \in [t - 1]}  \cE_k ) \le \delta/N_{\textup{epochs}}$, which would entail that $\PP( (\cap_{t \in [N_{\textup{epochs}}]}  \cE_t )^c ) \le \delta$ and ensure that $\Vert V^\star - V^{\hat{\pi}}\Vert_\infty \le 1-\varepsilon$ holds with probability at least $1 -\delta$.

Let  $t \in [N_{\textup{epochs}}]$. Note that by using Theorem \ref{thm:lme-guarantee}, we can immediately show that $\PP( \cE_{t}^c \vert \cap_{k \in [t - 1]}  \cE_k ) \le \delta/N_{\textup{epochs}}$ provided that 
\begin{align*}
    \frac{T}{N_{\textup{epochs}}} = \widetilde{\Omega}\left( \frac{ r_{\max}^2  \kappa^4 \alpha^2  d^2 \left( (S+A) + \alpha^2  d  \right) }{ (1-\gamma)^7 \varepsilon^2} \ \log^{10}\left( \frac{N_{\textup{epochs}}}{\delta}\right)   \right)
\end{align*}
which entails, by definition of $N_{\textup{epochs}}$ as chosen in $\lorapi$, an equivalent sample complexity to  
\begin{align*}
    T & = \widetilde{\Omega}\left( \frac{ r_{\max}^2  \kappa^4 \alpha^2  d^2 \left((S+A) + \alpha^2  d  \right) }{ (1-\gamma)^8 \varepsilon^2} \ \log^{10}\left( \frac{1}{(1-\gamma)\delta} \log\left(  \frac{r_{\max}}{(1-\gamma)\varepsilon}\right)  \right)  \log\left(  \frac{r_{\max}}{(1-\gamma)\varepsilon}\right) \right) \\
    & =  \widetilde{\Omega}\left( \frac{ r_{\max}^2  \kappa^4 \alpha^2  d^2 \left((S+A) + \alpha^2  d  \right) \log^{10}(e/\delta) \log(e/\varepsilon)}{ (1-\gamma)^8 \varepsilon^2}  \right)
\end{align*}
where we emphasize that $\widetilde{\Omega}(\cdot)$ may hide poly-log dependencies on $S$, $A$, $(1-\gamma)^{-1}$, $d$, $\kappa$, $\alpha$, $r_{\max}$, $\log(e/\varepsilon)$, $\log(e/\delta)$. This the desired sample complexity in Theorem \ref{thm:lora-pi-lme}.

Note that Theorem \ref{thm:lme-guarantee} also requires that 
\begin{align}\label{eq:eps-needed}
    \frac{(1-\gamma)^2 \varepsilon} {4} \leq \Vert Q^{(t)} \Vert_\infty
\end{align}
We show that this is satisfied by the condition $\varepsilon \lesssim \underline{\varepsilon}$. First, we show that under this condition, we have $ \Vert Q^{(1)}\Vert \le 2 \Vert Q^{(t)} \Vert_\infty$. Using Lemma \ref{lem:improvement}, and conditionally on the event $\cap_{k \in [t]}\cE_{k}$ holding, we have that for all $k < t$,   
\begin{align*}
    \cT^\star(V^{(k)}) \le V^{(k+1)} + \frac{2\epsilon}{1-\gamma} \mathbf{1} \le \cT^\star\left( V^{(k+1)} + \frac{2\varepsilon^{(k)}}{1-\gamma} \mathbf{1} \right) \le  \cT^\star( V^{(k+1)}) + \frac{2 \gamma \epsilon}{1-\gamma} 
\end{align*}
where $\epsilon = \frac{(1-\gamma)^2\varepsilon}{4}$, implying, in particular, that
\begin{align*}
    \Vert Q^{(k)} \Vert_\infty \le \Vert Q^{(k+1)} \Vert_\infty + \frac{2\gamma (1-\gamma) \varepsilon}{2}. 
\end{align*}
Summing the above inequalities from $1$ to $t-1$, together with the fact that $t-1 \le N_{\textup{epochs}}$, gives  
\begin{align*}
     \Vert Q^{(1)} \Vert_\infty \le \Vert Q^{(t)} \Vert_\infty + \frac{\gamma(1-\gamma) (t-1) \varepsilon }{2} \le \Vert Q^{(t)} \Vert_\infty + \frac{\gamma \varepsilon}{2} \log\left( \frac{4\Rmax}{(1-\gamma)^2 \varepsilon}\right).
\end{align*}
In view of this inequality, we note that $2 \Vert Q^{(t)} \Vert_\infty \ge \Vert Q^{(1)} \Vert_{\infty}$, if 
\begin{align*}
\gamma \varepsilon  \log\left( \frac{4\Rmax}{(1-\gamma)^2 \varepsilon}\right) \le  \Vert Q^{(1)} \Vert_\infty 
\end{align*}
We can verify that the above condition is implied by: 
\begin{align}\label{eq:eps-needed-2}
    \frac{1}{\varepsilon} \ge \frac{4 \gamma}{\Vert Q^{(1)}\Vert_\infty} \log\left( \frac{16 \gamma \Rmax}{(1-\gamma)^2 \Vert Q^{(1)}\Vert_\infty}\right) \iff \varepsilon \le \frac{\Vert Q^{(1)} \Vert_\infty}{2 \gamma  \log\left( \frac{16 \gamma \Rmax}{(1-\gamma)^2 \Vert Q^{(1)}\Vert_\infty}\right) }
\end{align}
where we used the elementary fact $ x \ge 2a\log(2a) + 2 b \implies x \ge a\log(x) + b$ for all $a, b>0$.

Thus, from \eqref{eq:eps-needed-2} we conclude that the condition on $\varepsilon$, \eqref{eq:eps-needed}, is satisfied if the following condition holds:
\begin{align*}
     \varepsilon  \leq \min\left( 1 ,  \frac{1} {2 \gamma  \log\left( \frac{16 \gamma \Rmax}{(1-\gamma)^2 \Vert Q^{(1)}\Vert_\infty}\right) } \right) \Vert Q^{(1)} \Vert_\infty.
\end{align*}
This is the desired condition on $\varepsilon$ in Theorem \ref{thm:lora-pi-lme}. With this we have concluded the proof.    
\end{proof}

\subsection{Error Bound for Approximate Policy Iteration}

The following result, a standard variant of Proposition 6.2 in  \cite{bertsekas1996neuro}, shows that the described approximate policy iteration is guaranteed to converge within an $\epsilon$-accuracy.  
\begin{lemma}\label{lem:API-convergence} Let $(\pi^{(t)})_{t \ge 1}$ be a sequence of deterministic policies selected recursively as described in $\lorapi$, and denote $V^{(t)} =  V^{\pi^{(t)} }$ for all $t\ge 1$. Let $\epsilon > 0$ and suppose that for all $t \ge 1$, it holds that
\begin{align*}
      \Vert \widehat{Q}^{(t)} - Q^{(t)} \Vert_\infty \le \epsilon.
\end{align*}
Then, for all $t \ge 1$, we have 
    \begin{align*}
            \Vert V^\star - V^{(t+1)} \Vert \le \gamma^{t} \Vert V^\star -  V^{(1)}  \Vert_\infty  + \frac{2\epsilon}{(1- \gamma)^2}. 
    \end{align*}
\end{lemma}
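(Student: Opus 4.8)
The plan is to reduce the statement to a clean one-step contraction of the form $\Vert V^\star - V^{(t+1)}\Vert_\infty \le \gamma \Vert V^\star - V^{(t)}\Vert_\infty + \frac{2\epsilon}{1-\gamma}$, and then unroll the resulting geometric recursion. First I would translate the entry-wise estimation accuracy into an \emph{approximate greedy} guarantee. Since $\pi^{(t+1)}$ is greedy with respect to $\widehat{Q}^{(t)}$ and $\Vert \widehat{Q}^{(t)} - Q^{(t)}\Vert_\infty \le \epsilon$, for every state $s$ one has $Q^{(t)}(s,\pi^{(t+1)}(s)) \ge \widehat{Q}^{(t)}(s,\pi^{(t+1)}(s)) - \epsilon \ge \max_a \widehat{Q}^{(t)}(s,a) - \epsilon \ge \max_a Q^{(t)}(s,a) - 2\epsilon$. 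Recognizing $Q^{(t)}(s,\pi^{(t+1)}(s)) = (\cT^{\pi^{(t+1)}}V^{(t)})(s)$ and $\max_a Q^{(t)}(s,a) = (\cT^\star V^{(t)})(s)$, this yields the pointwise inequality $\cT^{\pi^{(t+1)}}V^{(t)} \ge \cT^\star V^{(t)} - 2\epsilon\mathbf{1}$.

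Next I would upgrade this one-step inequality into a lower bound on the \emph{evaluated} value $V^{(t+1)}$. Setting $\bar V := \cT^\star V^{(t)}$ and using $\cT^\star V^{(t)} \ge \cT^{\pi^{(t)}}V^{(t)} = V^{(t)}$, monotonicity of $\cT^{\pi^{(t+1)}}$ gives $\cT^{\pi^{(t+1)}}\bar V \ge \cT^{\pi^{(t+1)}}V^{(t)} \ge \bar V - 2\epsilon\mathbf{1}$. Iterating the monotone $\gamma$-contraction $\cT^{\pi^{(t+1)}}$ from $\bar V$ and summing the geometric series of the accumulated slack shows that its fixed point satisfies $V^{(t+1)} \ge \bar V - \frac{2\epsilon}{1-\gamma}\mathbf{1} = \cT^\star V^{(t)} - \frac{2\epsilon}{1-\gamma}\mathbf{1}$. (This is essentially the improvement estimate invoked as Lemma \ref{lem:improvement}.)

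With this lower bound in hand, I would bound the error in a single line: using $V^\star \ge V^{(t+1)}$ and $V^\star \ge V^{(t)}$,
\begin{align*}
    0 \le V^\star - V^{(t+1)} &\le V^\star - \cT^\star V^{(t)} + \tfrac{2\epsilon}{1-\gamma}\mathbf{1} = (\cT^\star V^\star - \cT^\star V^{(t)}) + \tfrac{2\epsilon}{1-\gamma}\mathbf{1} \\
    &\le \gamma\Vert V^\star - V^{(t)}\Vert_\infty \mathbf{1} + \tfrac{2\epsilon}{1-\gamma}\mathbf{1},
\end{align*}
where the last step uses that $\cT^\star$ is a $\gamma$-contraction in $\Vert\cdot\Vert_\infty$. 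Taking the sup-norm gives the desired one-step recursion, and unrolling it from $t$ down to $1$ with $\sum_{j\ge 0}\gamma^j = (1-\gamma)^{-1}$ produces the bound $\gamma^t\Vert V^\star - V^{(1)}\Vert_\infty + \frac{2\epsilon}{(1-\gamma)^2}$.

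The crux is the middle step: the naive decomposition of $V^\star - V^{(t+1)}$ that keeps a residual $\gamma P^{\pi^{(t+1)}}(V^\star - V^{(t+1)})$ only yields the weaker factor $\gamma/(1-\gamma)$ on $\Vert V^\star - V^{(t)}\Vert_\infty$, which does not contract. Routing through the lower bound on $V^{(t+1)}$ established above is what converts this into a genuine $\gamma$-contraction and hence the correct $(1-\gamma)^{-2}$ dependence; care is needed there to use monotonicity in the right direction (exploiting $\cT^\star V^{(t)} \ge V^{(t)}$) so that the slack accumulates as a convergent geometric series rather than blowing up.
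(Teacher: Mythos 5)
Your proof is correct and takes essentially the same route as the paper: both hinge on the improvement bound $V^{(t+1)} \ge \cT^\star (V^{(t)}) - \frac{2\epsilon}{1-\gamma}\mathbf{1}$ (the paper's Lemma \ref{lem:improvement}, which you re-derive correctly via the approximate-greedy inequality and iteration of the monotone $\gamma$-contraction $\cT_{\pi^{(t+1)}}$), followed by the contraction property of $\cT^\star$. The only difference is cosmetic: you take sup-norms at each step (using $V^\star - V^{(t+1)}\ge 0$) and unroll the scalar recursion $\Vert V^\star - V^{(t+1)}\Vert_\infty \le \gamma \Vert V^\star - V^{(t)}\Vert_\infty + \frac{2\epsilon}{1-\gamma}$, whereas the paper keeps the vector inequality, propagates it to $V^{(t+1)} \ge (\cT^\star)^{t}(V^{(1)}) - \frac{2\epsilon(1-\gamma^t)}{(1-\gamma)^2}\mathbf{1}$, and applies contraction once with power $t$ at the end --- both yield the identical bound.
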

The proof of Lemma \ref{lem:API-convergence} follows standard arguments, but we provide it for completeness.

\begin{lemma}\label{lem:improvement} Let $\pi$ be a deterministic policy, and assume that  
$
\Vert \widehat{Q}^\pi - Q^\pi \Vert_\infty \le \epsilon 
$.
Assume that policy  $\pi'$ is selected greedily with respect to $\widehat{Q}^\pi$, i.e., for all $s \in \cS$, $\pi'(s) = \argmax_{a \in \cA} \widehat{Q}^\pi(s, a)$, then 
\begin{align*}
         V^\pi \le \cT^\star(V^\pi) \le V^{\pi'} + \frac{2\epsilon}{1-\gamma} \mathbf{1}.
\end{align*}  
\end{lemma}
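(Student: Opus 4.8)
The plan is to establish the two inequalities separately, the first being immediate and the second requiring a contraction argument through the evaluation operator of $\pi'$. Throughout, I would write $\cT^{\pi'}$ for the Bellman evaluation operator of $\pi'$, defined by $\cT^{\pi'}(V)(s) = r(s,\pi'(s)) + \gamma \sum_{s'} p(s'\vert s,\pi'(s)) V(s')$, and recall its standard properties: it is monotone, it satisfies the affine-shift identity $\cT^{\pi'}(V + c\mathbf{1}) = \cT^{\pi'}(V) + \gamma c \mathbf{1}$ for any scalar $c$ (since $\sum_{s'} p(s'\vert s,\pi'(s)) = 1$), it is a $\gamma$-contraction in $\Vert \cdot \Vert_\infty$, and $V^{\pi'}$ is its unique fixed point. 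For the first inequality I would simply note that $\cT^\star(V^\pi)(s) = \max_a\big[ r(s,a) + \gamma \sum_{s'} p(s'\vert s,a) V^\pi(s')\big] = \max_a Q^\pi(s,a)$, so that $V^\pi(s) = Q^\pi(s,\pi(s)) \le \max_a Q^\pi(s,a) = \cT^\star(V^\pi)(s)$ for every $s \in \cS$.

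For the second inequality, the first key step is to convert the mismatch between the estimated-greedy and the true-greedy action into an additive $2\epsilon$ error. Fixing $s$ and letting $a^\star \in \argmax_a Q^\pi(s,a)$, I would use $\Vert \widehat{Q}^\pi - Q^\pi \Vert_\infty \le \epsilon$ twice together with the defining property $\widehat{Q}^\pi(s,\pi'(s)) = \max_a \widehat{Q}^\pi(s,a) \ge \widehat{Q}^\pi(s,a^\star)$ to chain
\[
\cT^\star(V^\pi)(s) = Q^\pi(s,a^\star) \le \widehat{Q}^\pi(s,a^\star) + \epsilon \le \widehat{Q}^\pi(s,\pi'(s)) + \epsilon \le Q^\pi(s,\pi'(s)) + 2\epsilon .
\]
Since $Q^\pi(s,\pi'(s)) = \cT^{\pi'}(V^\pi)(s)$ by definition, this yields the pointwise bound $\cT^\star(V^\pi) \le \cT^{\pi'}(V^\pi) + 2\epsilon\mathbf{1}$.

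The final step is propagation through $\cT^{\pi'}$. Combining the first inequality with the bound just obtained gives $V^\pi \le \cT^\star(V^\pi) \le \cT^{\pi'}(V^\pi) + 2\epsilon\mathbf{1}$, hence $V^\pi \le \cT^{\pi'}(V^\pi) + 2\epsilon\mathbf{1}$. Applying $\cT^{\pi'}$ repeatedly and invoking monotonicity and the affine-shift identity, I would telescope to
\[
\cT^{\pi'}(V^\pi) \le (\cT^{\pi'})^{n+1}(V^\pi) + 2\epsilon \Big(\textstyle\sum_{k=1}^{n}\gamma^k\Big)\mathbf{1}, \qquad \forall n \ge 1 ,
\]
then let $n \to \infty$, using the contraction to send $(\cT^{\pi'})^{n+1}(V^\pi) \to V^{\pi'}$ and $\sum_{k\ge 1}\gamma^k = \gamma/(1-\gamma)$. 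This produces $\cT^{\pi'}(V^\pi) \le V^{\pi'} + \tfrac{2\gamma\epsilon}{1-\gamma}\mathbf{1}$, and substituting back into $\cT^\star(V^\pi) \le \cT^{\pi'}(V^\pi) + 2\epsilon\mathbf{1}$ gives $\cT^\star(V^\pi) \le V^{\pi'} + \big(\tfrac{2\gamma\epsilon}{1-\gamma} + 2\epsilon\big)\mathbf{1} = V^{\pi'} + \tfrac{2\epsilon}{1-\gamma}\mathbf{1}$, as claimed.

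The main obstacle is getting the constant exactly right in the telescoping step. It is tempting to bound $V^\pi \le V^{\pi'} + \tfrac{2\epsilon}{1-\gamma}\mathbf{1}$ directly, but the lemma demands the stronger statement with $\cT^\star(V^\pi)$ on the left, so I must first peel off one application of $\cT^{\pi'}$—which contributes only $\tfrac{2\gamma\epsilon}{1-\gamma}$ because the geometric sum starts at $k=1$—and then add back the single $2\epsilon$ coming from the greedy-action comparison; these recombine to precisely $\tfrac{2\epsilon}{1-\gamma}$. The care required is in tracking that the affine-shift constant contracts by a factor $\gamma$ at each iteration and that the summation index begins at $k=1$ rather than $k=0$.
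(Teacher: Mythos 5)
Your proof is correct and follows essentially the same route as the paper's: the same pointwise chain $\cT^\star(V^\pi) \le \widehat{Q}^\pi(\cdot,\pi'(\cdot)) + \epsilon \le \cT_{\pi'}(V^\pi) + 2\epsilon\mathbf{1}$ (the paper writes it via policy compositions $\pi''\circ Q^\pi$), followed by the same iteration of the evaluation operator using monotonicity and the affine-shift identity, a geometric sum, and the limit $(\cT_{\pi'})^{n}(V^\pi)\to V^{\pi'}$. Your bookkeeping, peeling off one $2\epsilon$ and summing $2\epsilon\sum_{k\ge 1}\gamma^k$ separately, is just a slightly more explicit rendering of the paper's $2\epsilon\sum_{t=0}^{k}\gamma^t$ telescoping and yields the identical constant $\tfrac{2\epsilon}{1-\gamma}$.
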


\begin{proof}[Proof of Lemma \ref{lem:improvement}] Before we proceed with the proof, let us define the composition of a deterministic policy $\pi''$ and a $Q^\pi$ function, $\pi'' \circ Q^\pi (s) := Q^\pi(s, \pi''(s))$. We know that 
    \begin{align*}
        V^\pi & = \pi \circ Q^\pi   \le \max_{\pi''} \pi'' \circ Q^\pi = \cT^\star (V^\pi)
    \end{align*}
    where $\le$ is applied component-wise.
    Next, we have  
    \begin{align*}
        V^\pi \le \cT^\star(V^\pi)  = \max_{\pi''} \pi'' \circ Q^\pi  & \le \max_{\pi''} \pi'' \circ \widehat{Q}^\pi + \max_{\pi''} \pi'' \circ (Q^\pi - \widehat{Q}^\pi) \\
        & \le \pi' \circ \widehat{Q}^\pi + \epsilon \mathbf{1} \\
        & \le \pi' \circ Q^\pi + \pi' \circ(  \widehat{Q}^\pi- Q^\pi) + \epsilon \mathbf{1} \\
        & \le \pi' \circ Q^\pi + 2 \epsilon \mathbf{1} \\
        & \le \cT_{\pi'}(V^\pi) + 2 \varepsilon \mathbf{1}
    \end{align*}
    where $\cT_{\pi}$ is the Bellman policy evaluation operator. By monotonicity of the operator $\cT_{\pi}$, we can re-iterate 
    \begin{align*}
        \cT_{\pi'}(V^\pi) & \le \cT_{\pi'} (   \cT_{\pi'} (V^\pi) + 2 \epsilon \mathbf{1}) \le \cT_{\pi'}^2 (V^\pi) + 2 \gamma \epsilon \mathbf{1}.
    \end{align*}
    Thus, we finally obtain 
    \begin{align*}
        V^\pi \le \cT^\star(V^\pi) \le  \cT_{\pi'}^{k+1}(V^\pi) + 2\epsilon \left(\sum_{t = 0}^{k}  \gamma^t \right) \mathbf{1}.
    \end{align*}
    Taking $k\to \infty$, we get 
    \begin{align*}
         V^\pi \le \cT^\star(V^\pi) \le \cT_{\pi'}(V^\pi) + 2\epsilon \mathbf{1} \le  V^{\pi'} + \frac{2\epsilon}{1-\gamma} \mathbf{1}.
    \end{align*}
\end{proof}

\begin{proof}[Proof of Lemma \ref{lem:API-convergence}] 
    We start by noting that, thanks to Lemma \ref{lem:improvement}, we have: for all $t\ge 1$,
    \begin{align*}
      V^{(t+1)} + \frac{2\epsilon}{1- \gamma}\mathbf{1} \ge \cT^\star(V^{(t)}), 
    \end{align*}
    where $\ge$ is applied component-wise.
    Thus, applying this inequality recursively we obtain
    \begin{align*}
         V^{(t+1)} + \frac{2\epsilon}{1 -\gamma} \mathbf{1}  & \ge \cT^\star \left(V^{(t)} + \frac{2\epsilon}{1- \gamma} \mathbf{1}\right) - \frac{2\epsilon \gamma }{1- \gamma} \mathbf{1} \\
         & \ge (\cT^\star)^2\left(V^{(t-1)}\right) -  \frac{2\epsilon \gamma }{1- \gamma} \mathbf{1} \\ 
         & \ge (\cT^\star)^{t} (V^{(1)}) - \frac{2\epsilon}{1- \gamma} \left(\sum_{k = 1}^{t -1} \gamma^{k} \right) \mathbf{1} \\
         & \ge (\cT^{\star})^{t} (V^{(1)}) - \frac{2\epsilon(1- \gamma^{t})}{(1- \gamma)^2} \mathbf{1} +  \frac{2\epsilon}{1- \gamma} \mathbf{1},
    \end{align*}
    which gives at the end
    \begin{align*}
        V^{(t+1)}  & \ge  (\cT^\star)^{(t)} (V^{(1)}) - \frac{2\epsilon(1- \gamma^{t})}{(1- \gamma)^2} \mathbf{1}
    \end{align*}
    Thus, we have 
\begin{align*}
    V^\star - V^{(t+1)} & \le V^\star - (\cT^\star)^{t} (V^{(1)})  + \frac{2\epsilon(1- \gamma^{t})}{(1- \gamma)^2} \mathbf{1} 
    \le (\cT^\star)^{t}(V^\star) - (\cT^\star)^{t} (V^{(1)})  + \frac{2\epsilon(1- \gamma^{t})}{(1- \gamma)^2} \mathbf{1}  
\end{align*}
Thus, using the contraction property of $\cT^\star$, and that $1-\gamma^t \le 1$, we have 
\begin{align*}
    \Vert V^\star - V^{(t+1)} \Vert_\infty & \le \Vert(\cT^\star)^{t}(V^\star) - (\cT^\star)^{t} (V^{(1)})  \Vert_\infty  + \frac{2\epsilon}{(1- \gamma)^2} 
      \le \gamma^{t} \Vert V^\star -  V^{(1)}  \Vert_\infty  + \frac{2\epsilon}{(1- \gamma)^2}.
\end{align*}  
\end{proof}

\newpage
\section{Extension of Guarantees to Approximately-Low Rank MDPs}
\label{subsec:approximately_low_rank}
We consider the setting where the matrix $Q^\pi$ is approximately low rank. Specifically, we define a constant $\zeta_d$ such that $\zeta_d = \Vert Q^\pi(s,a) - Q^\pi_d(s,a) \Vert_{\infty}$, where $Q_d^\pi$ is the best $d$-rank approximation of $Q^\pi$ in the operator norm. Note that $\zeta_d \leq \sigma_{d+1}(Q^\pi) \leq \sqrt{SA}\zeta_d$. In contrast to Theorem \ref{thm:row-wise-guarantee}, where the additional perturbation term $\Delta$ arises from a controllable quantity (through roll-out length $\tau$), here we assume that $\zeta_d$ is fixed in advance and unknown. For simplicity, we omit terms stemming from the $\Delta$ perturbation, but the results still hold in that setting. Here, we show that if:
\begin{align}
    \zeta_d = \widetilde{O} \left( \sigma_d(Q^\pi)  \min\left\{  \frac{\sqrt{d}}{S+A}, \frac{1}{\kappa \sqrt{SA}}  \right\} \right)\tag{$A_+$}
    \label{assumpt_approx_low}
\end{align}
we can obtain similar guarantees for $\Vert V^\star - V^{\hat{\pi}}\Vert_\infty$ as in Theorem \ref{thm:lora-pi-lme} even in the approximate low rank setting, with an additive error scaling with $\widetilde{O}(\frac{1}{1-\gamma} \zeta_d d\kappa^2 \alpha^2)$. Next, we show that our three main theorems still hold in this setting.

\paragraph{Theorem \ref{thm:leverage_scores}: Leverage scores estimation.}
We can repeat the arguments from the proof of Theorem \ref{thm:row-wise-guarantee} to obtain, with high probability, $\forall s \in [\cardS]$:
\begin{align*}
    \Vert U_{s,:} - \hU_{s,:} O_{\widehat{U}} \Vert_2 & = \widetilde{O} \left( \balpha\left( \sqrt{\frac{d}{\Ttau}} + \kappa \Vert U_{s,:} \Vert_2 \sqrt{\frac{\cardS+\cardA}{\Ttau}} \right)  +\frac{ \textcolor{blue}{\zeta_d} \sqrt{\cardS+\cardA }}{\sigma_{d}(Q^\pi)}   + \kappa\Vert U_{s,:} \Vert_2 \frac{\textcolor{blue}{\sigma_{d+1}(Q^\pi)} }{\sigma_d(Q^\pi)} \right)
\end{align*}
if $\Ttau = \widetilde{\Omega}\left( \balpha^2(\cardS+\cardA)\right)$, and $\sigma_{d+1}(Q^\pi) \leq \sigma_{d}(Q^\pi)/64$. New terms are highlighted in blue in the inequality above. A similar inequality holds for the rows of the matrix of right singular vectors $W$.

Under Assumption \ref{assumpt_approx_low} and using that $\sigma_{d+1}(Q^\pi) \leq \sqrt{SA}\zeta_d$, we have:
\begin{align*}
    \frac{\zeta_d\sqrt{\cardS+\cardA}}{\sigma_d(Q^\pi)} = \widetilde{O} \left( \frac{\sqrt{d}}{\sqrt{\cardS+\cardA }} \right),\qquad \mathrm{and} \qquad \kappa \Vert U_{s,:}\Vert_2 \frac{\sigma_{d+1}(Q^\pi)}{\sigma_d(Q^\pi)} = \widetilde{O} \left( \Vert U_{s,:} \Vert_2 \right)
\end{align*}
indicating that the contributions of the two newly added terms are negligible for leverage score estimation and that Theorem \ref{thm:leverage_scores} still holds in this setting.

\paragraph{Theorem \ref{thm:lme-guarantee}: Complete matrix estimation.} Theorem \ref{thm:cross_Qmax_guarantee} holds with the same arguments. Instead of Lemma \ref{lemma:Hoeffding_cross_entries}, we have that with high probability: $\forall (s,a) \in (\Ianchors \times \actions) \cup (\states \times \Janchors)$:
    \begin{align*}
        \vert \tQ_\tau^\pi (s,a) - Q^\pi(s,a) \vert \leq \frac{\Rmax}{1-\gamma} \sqrt{\frac{2}{N}\log\left( \frac{4 \nachors (\cardS + \cardA)}{\delta} \right)} + \zeta_d 
    \end{align*}
Note that our conditions on $\zeta_d$ and $\sigma_{d+1}(Q^\pi)$ ensure that the conditions on $\varepsilon_\square$ and $\varepsilon_+$ in Theorem \ref{thm:cross_Qmax_guarantee} ($\varepsilon_\square \lesssim \frac{\sigma_d(Q^\pi)}{\sqrt{\cardS \cardA}} \log^{-2}\left( \frac{\cardS + \cardA}{\delta} \right)$, $\varepsilon_+ \lesssim \Vert Q^\pi\Vert_{\infty}$) still hold, as:
\begin{align*}
    \zeta_d  = \widetilde{O} \left( \frac{\sqrt{d}\sigma_d(Q^\pi)}{S+A} \right) = \widetilde{O} \left( \frac{\Vert Q^\pi \Vert_\F}{ S+A} \right) = \widetilde{O} \left( \frac{\sqrt{SA}\Vert Q^\pi \Vert_\infty}{S+A} \right) = \widetilde{O} \left( \Vert Q^\pi \Vert_\infty \right)
\end{align*}
Then, the upper bound on $ \Vert \hQ^\pi - Q^\pi \Vert_{\infty} $ from Theorem \ref{thm:cross_Qmax_guarantee} will include an additive term: $
 \zeta_d  \frac{\cardS\cardA\Vert Q^\pi \Vert_{\infty}^2}{\sigma_d^2(Q^\pi)}\log^4 \left( \frac{S+A}{\delta} \right) = \widetilde{O} \left( \zeta_d d\kappa^2 \alpha^2 \right)$. Finally, under approximate low-rank structure, Theorem \ref{thm:lme-guarantee} guarantees that with high probability, if $\varepsilon \lesssim \Vert Q^\pi \Vert_{\infty}$ and $
        T = \widetilde{\Omega}_\delta\left( \frac{ (S+A) + \alpha^2  d   }{ (1-\gamma)^3 \varepsilon^2} (r_{\max}^2  \kappa^4 \alpha^2 d^2  )  \right)$, we have $\Vert \widehat{Q}^\pi - Q^\pi \Vert_\infty \leq \varepsilon + \widetilde{O}\left(\zeta_d d\kappa^2 \alpha^2\right)$.

This aligns with Theorem 14 in \cite{sam2023overcoming}, where the approximation error scales by terms corresponding to $\frac{\cardS\cardA\Vert Q^\pi \Vert_{\infty}^2}{\sigma_d^2(Q^\pi)}$ in our setting, as both methods use CUR-like matrix recovery.

\paragraph{Theorem \ref{thm:lora-pi-lme}: Guarantee for \lorapi.} Based on the approximate policy iteration theorem, which claims:
\begin{align*}
    (1-\gamma)\Vert V^\star - V^{\hat{\pi}}  \Vert_\infty \le 2 r_ {\max}\gamma^{N_{\textup{epochs}}} + 2 \max_{t \in [N_{\textup{epochs}}]} \Vert \widehat{Q}^{(t)} - Q^{\pi^{(t)}}\Vert_\infty.
\end{align*}
we observe that the error from approximate low rank propagates through the second term, yielding an additive error of magnitude $\frac{1}{1-\gamma} \zeta_d d\kappa^2 \alpha^2$ to the error of Theorem \ref{thm:lora-pi-lme}.

\newpage
\section{Miscellaneous Results}

In this section, we provide some of the observations and results about the truncated value matrix. More specifically, we present the  proof to Lemma \ref{lem:truncated-Q}, and a discussion on the variance proxy of the truncated discounted sum of rewards. 

\subsection{Truncated Value Matrix}
\label{subsec:app_truncated_value_lemma}

\begin{proof}[Proof of Lemma \ref{lem:truncated-Q}]
We know that $Q^\pi$ satisfies the following identity: for all $(s,a) \in \cS \times \cA$, 
\begin{align*}
    Q^\pi(s,a) & = \EE\left[ \sum_{t=0}^\infty \gamma^t r_t^\pi \bigg \vert (s_0^\pi, a_0^\pi) = (s,a) \right] \\
    & = Q^\pi_\tau(s,a)   + \gamma^\tau \EE\left[ \sum_{t=\tau+1}^\infty \gamma^{t-\tau} r_t^\pi \bigg \vert (s_0^\pi, a_0^\pi) = (s,a) \right]. 
\end{align*}
Furthermore, note that 
\begin{align*}
   \left \vert  \EE\left[ \sum_{t=\tau+1}^\infty \gamma^{t-\tau} r_t^\pi \bigg \vert (s_0^\pi, a_0^\pi) = (s,a) \right] \right \vert \le \Rmax \sum_{t=0}^\infty \gamma^t = \frac{\Rmax}{1- \gamma}.
\end{align*}
and thus 
\begin{align*}
\Vert Q^\pi_\tau - Q^\pi \Vert_\infty \le \frac{\gamma^\tau \Rmax}{1-\gamma} \leq \frac{\Rmax}{1-\gamma}\exp(-\tau(1-\gamma)) .
\end{align*}
where we used that $\gamma \leq \exp(\gamma-1)$ for $\gamma \in (0,1)$. Setting the right hand side of the last inequality equal to $\epsilon$ we obtain statement of the lemma.

\end{proof}

\subsection{Equivalent Noise Model}
\label{app:noise_equivalence}
Recall definition of $\widetilde{Q}^\pi_\tau$ from \eqref{eq:emp-truncated-q}: 
\begin{align*}
     \widetilde{Q}^\pi_\tau(s,a) & = \frac{SA}{N} \sum_{k=1}^{N}\left(\sum_{t=0}^\tau \gamma^t r_{k, t}^\pi \right)\indicator_{\lbrace (s_{k,0}^\pi, a_{k,0}^\pi) = (s,a)\rbrace}, \quad \forall (s,a)\in \cS \times \cA.
\end{align*}
Consider one of $N$ sampled trajectories with index $k$ starting from $(s_{k,0}^\pi, a_{k,0}^\pi)=(s,a)$, and note that
\begin{align*}
\left\vert \sum_{t=0}^\tau \gamma^t r_{k, t}^\pi \right\vert \le \frac{\Rmax}{1-\gamma}.
\end{align*}
Moreover, since $Q^\pi_\tau$ is given by:
\begin{align*}
   Q^\pi_\tau(s,a) & = \EE^\pi\left[ \sum_{t=0}^\infty \gamma^t r_t \indicator_{\lbrace t \le \tau\rbrace}\big \vert s_0^\pi = s, a_0^\pi = a\right],
\end{align*}
we have  
\begin{align*}
\EE\left[\sum_{t=0}^\tau \gamma^t r_{k, t}^\pi \indicator_{\lbrace (s_{k,0}^\pi, a_{k,0}^\pi) = (s,a)\rbrace}  \bigg \vert (s_{k,0}^\pi, a_{k,0}^\pi)  = (s,a) \right] = Q^\pi_\tau (s,a) \indicator_{\lbrace (s_{k,0}^\pi, a_{k,0}^\pi) = (s,a)\rbrace} 
\end{align*}
In other words, each term inside of the outer loop in definition of $\tQ_\tau^\pi$ is uniformly bounded and equal to 
$Q^\pi_\tau (s,a) \indicator_{\lbrace (s_{k,0}^\pi, a_{k,0}^\pi) = (s,a)\rbrace} $ in expectation. Thus we can view estimate $\tQ^\pi_\tau(s,a)$ equivalently as:
\begin{align*}
\widetilde{Q}^\pi_\tau(s,a) = \frac{SA}{N} \sum_{k=1}^{N} 
 (Q^\pi_\tau(s,a) + \xi_{s,a,k}) \indicator_{\lbrace (s_{k,0}^\pi, a_{k,0}^\pi) = (s,a)\rbrace} 
\end{align*}
where $\xi_{s,a,k}$ are i.i.d. across $k$, and $\vert Q^\pi_\tau(s,a) + \xi_{s,a,k} \vert  \le \frac{\Rmax}{1-\gamma}$, implying that $\xi_{s,a,k}$ are $\frac{2\Rmax}{1-\gamma}$-subgaussian random variables.

Next note that the number of times $N(s,a) = \sum_{k=1}^{N} \indicator{\lbrace (s_{k,0}^\pi, a_{k,0}^\pi) = (s,a)\rbrace}$ that we sample entries are random variables with multinomial distribution, since $\PP((s_{k,0}^\pi, a_{k,0}^\pi) = (s,a)) = \frac{1}{\cardS \cardA}$ and $\sum_{(s,a)}N(s,a)=N$. This weak dependence between the entries can be dealt with using the Poisson approximation argument (see Section C.2 in \cite{stojanovic2024spectral}). Essentially, this enables us to rewrite matrix $\tQ^\pi_\tau$ as a matrix with i.i.d. entries. Namely, we have for all $(s,a)$:
\begin{align*}
    \widetilde{Q}^\pi_\tau(s,a) = \frac{SA}{N} \sum_{k=1}^{Y(s,a)} 
    (Q^\pi_\tau(s,a) + \xi_{s,a,k}) 
 \end{align*}
where $Y(s,a)$ are i.i.d. Poisson random variables with parameter $\EE [Y(s,a)] = \frac{N}{SA}$. The fact that the two noise models are equivalent is depicted in Lemma 20 in \cite{stojanovic2024spectral} claiming that probability of an event under the multinomial model can be upper bounded by $\sqrt{T}$ times probability of the same event under the Poisson model. Practically, this adds a multiplicative factor of $T$ in our probabilistic claims. For more thorough exposition of this issue check Section C.2 in \cite{stojanovic2024spectral}.



\end{document}